\DeclarePairedDelimiter\ceil{\lceil}{\rceil}
\DeclarePairedDelimiter\floor{\lfloor}{\rfloor}
\newcommand*{\addFileDependency}[1]{
  \typeout{(#1)}
  \@addtofilelist{#1}
  \IfFileExists{#1}{}{\typeout{No file #1.}}
}
\begin{document}

\jmlrheading{23}{2022}{1-\pageref{LastPage}}{10/21; Revised
9/22}{12/22}{21-1280}{Shai Feldman, Stephen Bates, Yaniv Romano}
\ShortHeadings{Flexible Multiple-Output Quantile Regression}{Feldman, Bates, and Romano}

\title{Calibrated Multiple-Output Quantile Regression with Representation Learning} 

\author{\name Shai Feldman \email shai.feldman@cs.technion.ac.il \\
       \addr Department of Computer Science\\
       Technion---Israel Institute of Technology\\
       Technion City, Haifa 32000, Israel
       \AND
       \name Stephen Bates \email stephenbates@cs.berkeley.edu \\
       \addr Departments of Electrical Engineering and Computer Science and of Statistics\\
            University of California, Berkeley\\
            Berkeley, CA 94720, USA
        \AND
       \name Yaniv Romano \email yromano@technion.ac.il \\
       \addr Departments of Electrical and Computer Engineering and of Computer Science\\
       Technion---Israel Institute of Technology\\
       Technion City, Haifa 32000, Israel}
       
\editor{Samuel Kaski}

\maketitle

\begin{abstract}%
We develop a method to generate predictive regions that cover a multivariate response variable with a user-specified probability. Our work is composed of two components. First, we use a deep generative model to learn a representation of the response that has a unimodal distribution. Existing multiple-output quantile regression approaches are effective in such cases, so we apply them on the learned representation, and then transform the solution to the original space of the response. This process results in a flexible and informative region that can have an arbitrary shape, a property that existing methods lack. Second, we propose an extension of conformal prediction to the multivariate response setting that modifies any method to return sets with a pre-specified coverage level. The desired coverage is theoretically guaranteed in the finite-sample case for any distribution. Experiments conducted on both real and synthetic data show that our method constructs regions that are significantly smaller compared to existing techniques.

\end{abstract}
\begin{keywords} 
conformal prediction, uncertainty quantification, quantile regression, multiple regression, variational auto-encoder
\end{keywords} 







\section{Introduction}\label{sec:intro}

In real-world applications, it is often required to estimate more than one response variable. Consider, for example, estimating the effects and side effects of a drug given the patient's demographic information and medical measurements. These two responses may be correlated in the way that when the drug is effective the side effects are more severe \citep{schuell2005side}, and this relation might not be linear. In such high-stakes settings, giving point predictions for the drug’s effects and side effects is insufficient; the decision-maker must know the plausible effects for an individual. The plausible effects can be represented as a region in the multidimensional space that covers a pre-specified proportion (e.g., 90\%) of the drug's possible outcomes. In the one-dimensional case, the region reduces to an interval, determined by lower and upper bounds for the response variable. The problem of constructing such a prediction interval is extensively investigated in the literature \citep{QR, flexible_pred_bands, quantile_regression_forests, guan2019conformal, nested_conformal}. This approach can be na\"ively extended to the multivariate case by estimating a prediction interval for each response separately. However, this process will result in a rectangle-shaped region, whereas the shape of the true distribution of the response variables can be arbitrary, not a rectangle, and even not convex. 
In that case, the predicted region is likely to be over-conservative: it would not reflect the true underlying uncertainty. A better approach is to predict both outcome variables (drug effect and side effect) jointly. This strategy encourages the model to exclude unlikely combinations of the two from the predicted region. In this work, we show how to construct a region that reflects the true distribution of the response variables while attaining the pre-specified coverage level.

\subsection{Problem Formulation}

This paper studies the problem of constructing reliable uncertainty estimates in multivariate regression problems. Suppose we are given $n$ training samples $\{(X_i,Y_i)\}_{i=1}^n$, where $X\in\mathbb{R}^p$ is a feature vector, and $Y\in\mathbb{R}^d$ is a response vector. Given a new test point $X_{n+1}$, our goal is to construct a region of values in which the unknown test response $Y_{n+1}$ falls with high probability. Formally, we seek to build a \emph{marginal distribution-free quantile region} $\hat{R}(X_{n+1}) \subseteq \mathbb{R}^d$ that is likely to contain the response $Y_{n+1}$ with a user-specified coverage probability $1-\alpha$:
\begin{equation}\label{eq:cov_statement}
\mathbb{P}[Y_{n+1}\in \hat{R}(X_{n+1})]\geq 1-\alpha,
\end{equation}
for any joint distribution $P_{XY}$ and any sample size $n$. This property is called \emph{marginal coverage}, and in order to guarantee it, we assume that all samples $\{(X_i, Y_i)\}_{i=1}^{n+1}$ are drawn exchangeably. That is, we assume that the training samples and test samples follow the same distribution. In addition, we aim to construct quantile regions that are as small as possible, reliably estimating the conditional distribution of $Y \mid X$. When $d=1$, the quantile region reduces to a one-dimensional prediction interval, determined by lower and upper bounds, within which the response is expected to lie with probability at least $1-\alpha$.

One of the methods that addresses this problem is \emph{directional quantile regression} (\texttt{DQR}) \citep{kong2012quantile, paindaveine2011directional, bovcek2017directional}. The main idea is to estimate conditional quantiles in different directions, where each defines a half-space, and the quantile region is defined as the intersection of all half-spaces. This method is simple and fast, compared to competitive methods that require approximating the entire distribution $P_{XY}$ \citep{carlier2016vector, carlier2017vector, carlier2020vector}. However, being an intersection of half-spaces, the quantile region is convex, so it might be unnecessarily large, as demonstrated in Section \ref{sec:synthetic_example}. Additionally, the empirical coverage of such a quantile region is lower than the nominal one, which forces the user to estimate extremal conditional quantiles, as explained in Section \ref{sec:need_conf}. Furthermore, since the conditional distribution of $Y \mid X$ is unknown, it is difficult estimating what empirical coverage \texttt{DQR} will achieve given a certain nominal level. This raises the problem of choosing the correct nominal level for which \texttt{DQR} achieves the desired coverage rate. In sum, the \texttt{DQR} method can work effectively only in specific cases, i.e., when the distribution of $Y \mid X$ has level sets of the density that are convex. However, even in those cases, the user is required to estimate extremal quantiles, a process that is known to be impractical, as shown by \cite{extremal_qr}. Furthermore, as demonstrated in \cite[][Section 4.1]{liu2019fast}, \texttt{DQR} is not guaranteed to capture the zone with the highest density even for convex density level sets.

In this work we develop a novel scheme to construct statistically efficient quantile regions, relying on the \texttt{DQR} approach. The core idea is to learn a representation of the response variable for which the \texttt{DQR} method is effective. Once obtaining a quantile region for that representation, we transform it to the original representation of the response variable $Y$. In this work, we use a \emph{conditional variational auto-encoder} (CVAE) \citep{cvae} model to learn a mapping between these two representations, described in detail in Appendix~\ref{sec:cvae_and_kl_lambda}; see also \citep{thickstun2017learning, xu2021representation, bengio2013representation, oord2017neural, kobyzev2020normalizing} for other representation learning techniques. In striking contrast to the \texttt{DQR}, this scheme is non-parametric and can produce non-convex regions, which are therefore smaller and more informative. 
In Section \ref{sec:synthetic_example}, and in the experiments, provided in Section \ref{sec:experiments}, we show that among the four methods we examine in this work, our method consistently tracks the conditional distribution better, tested on six real data sets, and various synthetic examples. This phenomenon is supported by a theoretical guarantee of the quantile region covering only possible responses, i.e., responses in the support of $Y\mid X$.

Secondly, we extend ideas from conformal prediction to the multidimensional case and propose a calibration procedure that guarantees the coverage requirement~\eqref{eq:cov_statement} in the finite-sample case for any distribution. 
Conformal inference \citep{vovk2005algorithmic} is a framework commonly used in the one-dimensional case ($d=1$) \citep{CQR,flexible_pred_bands,sesia2020comparison,adaptive_intervals,dist_conformal_pred,nested_conformal,conformal_regions, guan2019conformal} that provides a generic methodology for building prediction intervals that provably attain valid marginal coverage~\eqref{eq:cov_statement}. See~\citep{angelopoulos-gentle} for a recent overview of this subject. 
The procedure we propose is generic and can be applied to any multiple-output quantile regression method, including those we discuss in this work. In Section \ref{sec:need_conf} we show that this procedure is vital to the \texttt{DQR} method, since its empirical coverage is significantly lower than the nominal level.  

\section{Background and Related Work}\label{sec:related_work}
In this section, we describe existing and related methods, but first, begin with a small synthetic experiment that demonstrates the challenges of constructing informative conditional quantile regions.

\subsection{A Synthetic Example}\label{sec:synthetic_example}
The example provided hereafter illustrates how the existing methods perform on this synthetic data, revealing their strengths and weaknesses, which will be further discussed in this section. We generate a v-shaped 2-dimensional response whose structure varies with the feature vector $X\in \mathbb{R}$. This data is visualized in Figure \ref{fig:syn_data}, presenting the marginal and conditional distribution of the data. Observe that as $X$ increases, the response is shifted downwards and the slope of the valley becomes steeper. For illustration purposes, we choose to work with a one-dimensional feature vector here, however, in section \ref{sec:experiments} we also examine data sets with responses and features of higher dimensions. The full description of the distribution of $Y \mid X$ is given in Appendix~\ref{sec:syn_datasets_details}.

Figure \ref{fig:syn_data_results} presents the conditional distribution of two test points: $x=1.5$ and $x=2.5$, and the quantile regions constructed for each of them. This figure shows that the regions constructed by our method reflect the true conditional distribution. This stands in striking contrast with the competitive techniques, which are described hereafter.

\newcommand\imagewidth{0.4}
\begin{figure}[htbp]
\setstretch{1.1}
  \centering
\scalebox{1}{
    \begin{tabular}{cc}
    {\centering{\includegraphics[width=\imagewidth\linewidth]{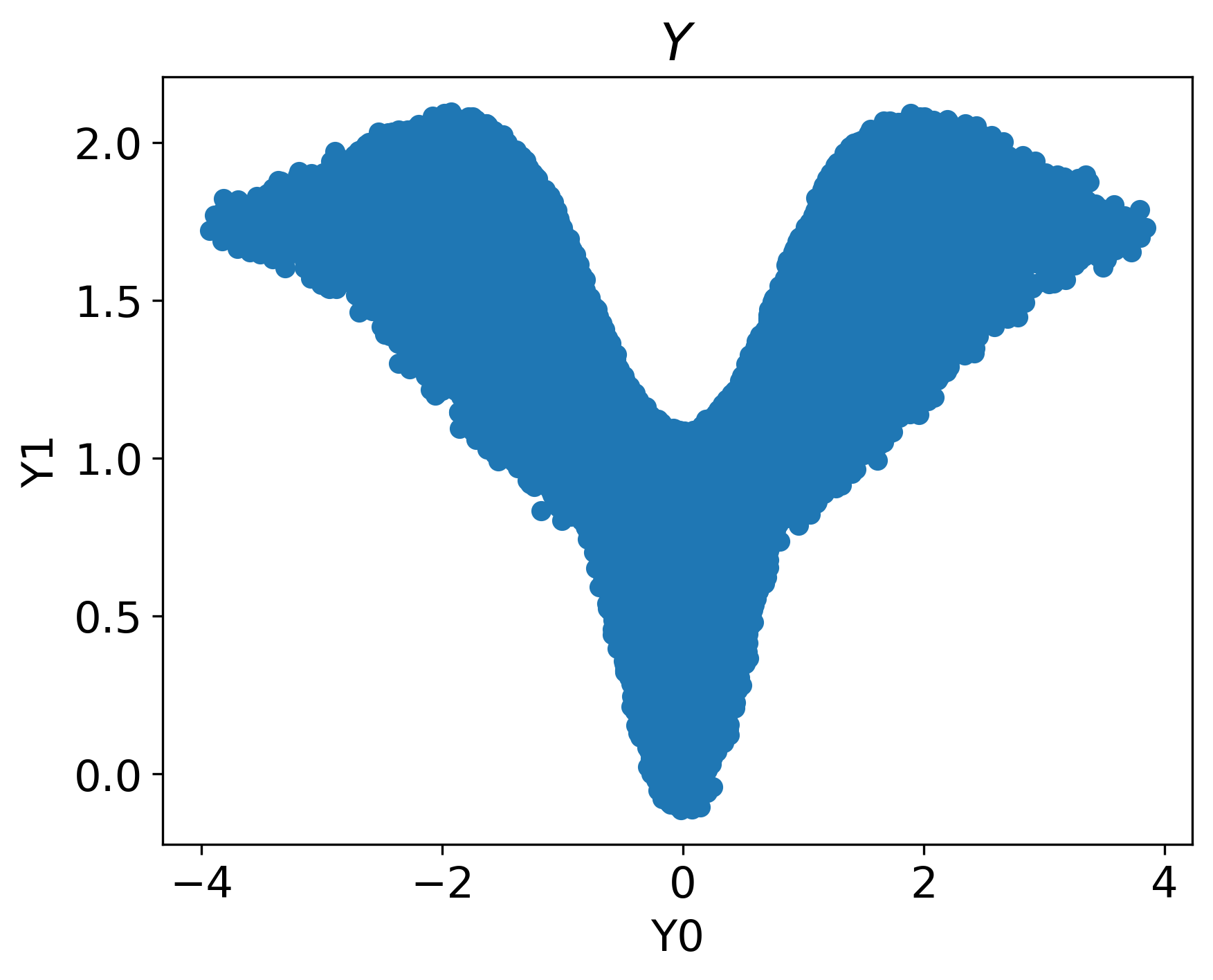}}} &
     {\centering{\includegraphics[width=\imagewidth\linewidth]{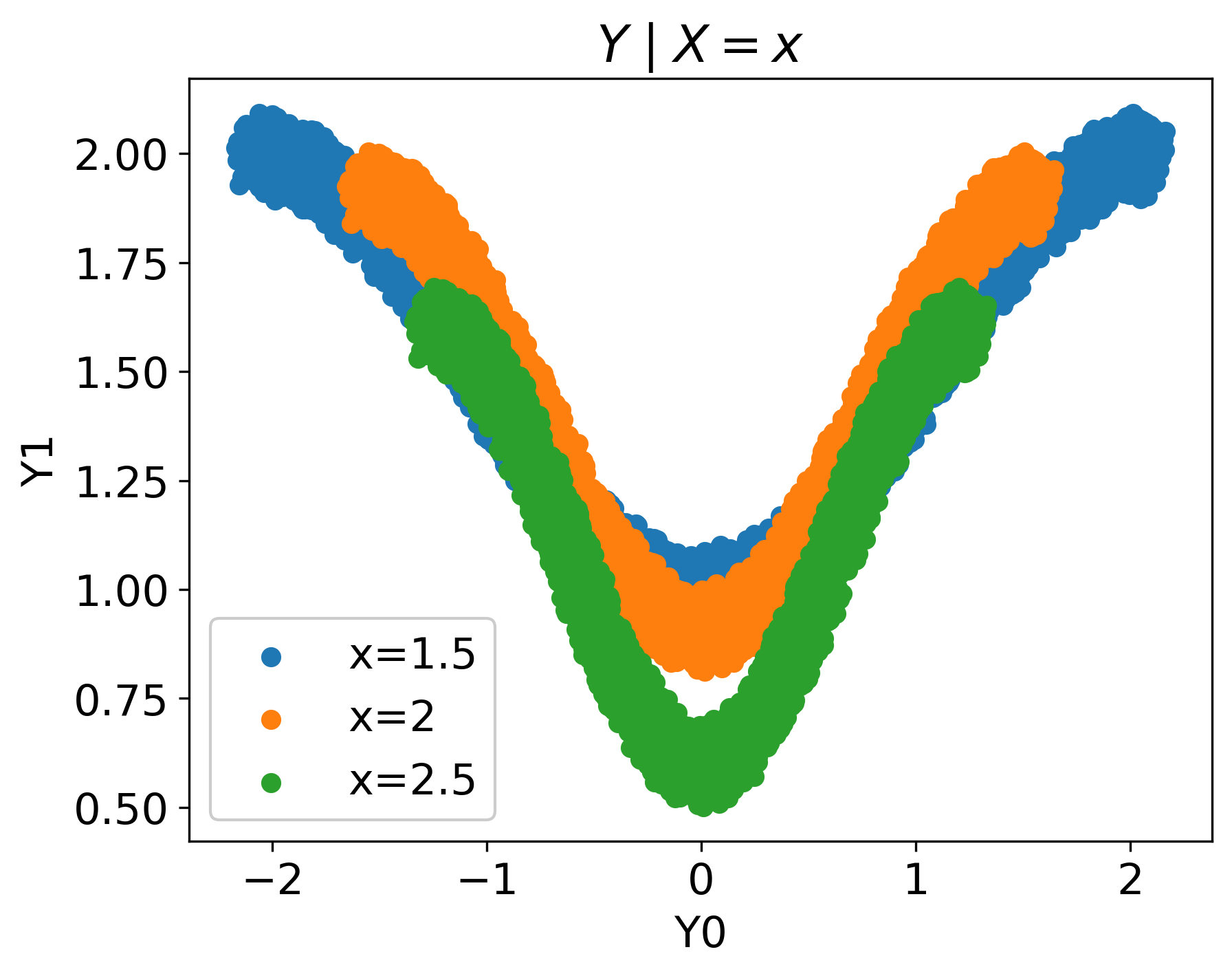}}}
    \end{tabular}%
    }
    \captionsetup{format=hang} \caption{Synthetic data visualization. Left: scatter plot of the marginal distribution of $Y$. Right: scatter plot of the conditional distribution of $Y \mid X=x$, for $x \in \{1.5, 2, 2.5\}$.}
\label{fig:syn_data}%
\end{figure}%

\renewcommand\imagewidth{0.38}
\newcommand\texthspace{2.15cm}

\newcommand\rowincludegraphics[2][]{\raisebox{-0.45\height}{\includegraphics[#1]{#2}}}
\begin{figure}[htbp]
\setstretch{1.1}
  \centering

    \scalebox{1.}{
    \begin{tabular}{ccc}
    \multicolumn{1}{c}{\textbf{Method}} & \multicolumn{2}{c}{\textbf{Quantile region}}\\ 
    
    {} & {\parbox{\imagewidth\linewidth} {\quad \hspace{\texthspace} $x=1.5$}}  & {\parbox{\imagewidth\linewidth} {\quad \hspace{\texthspace} $x=2.5$}}    \\ \\
    
    {\centering \texttt{Na\"ive QR}} & {\centering{\rowincludegraphics[width=\imagewidth\linewidth]{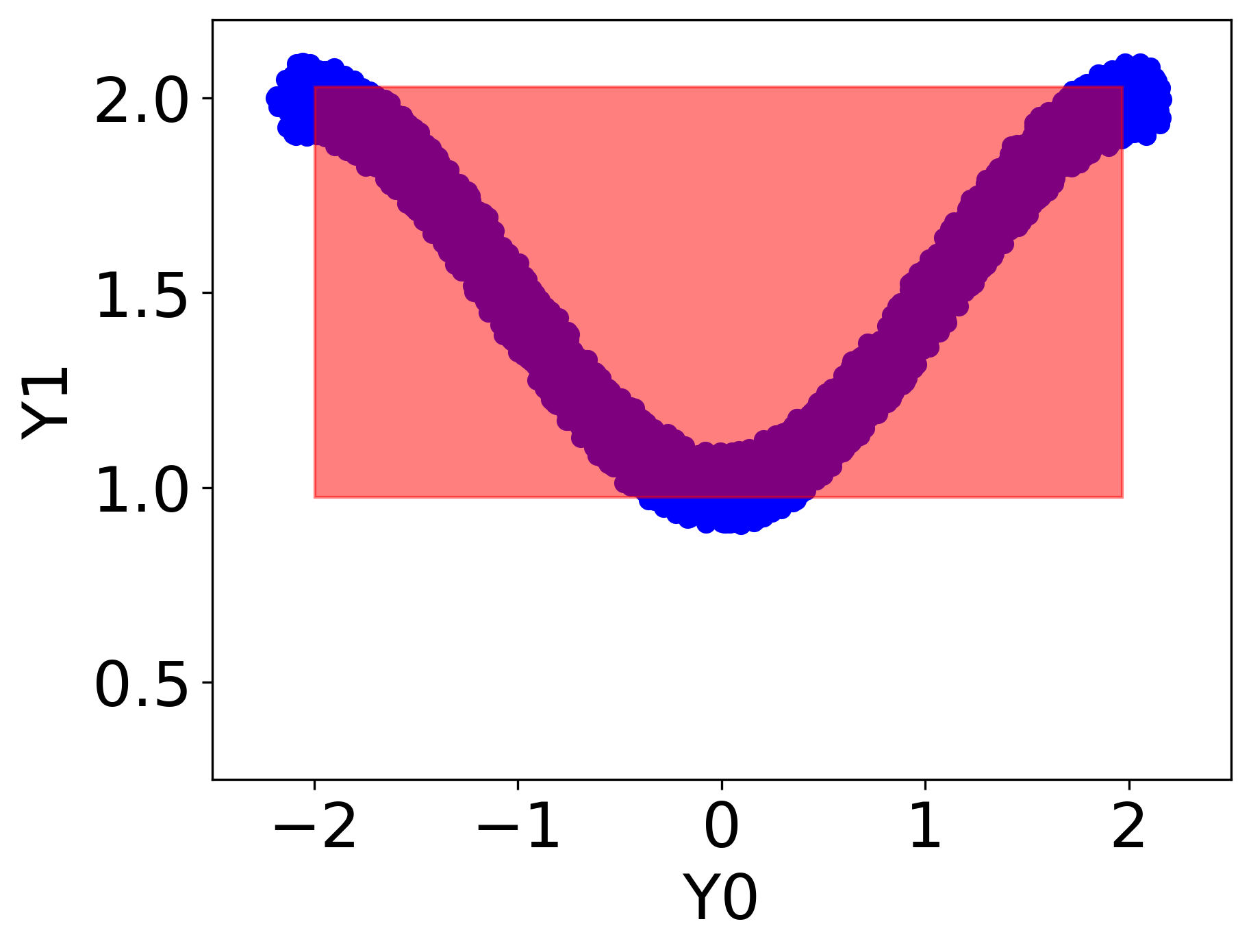}}} & {\centering{\rowincludegraphics[width=\imagewidth\linewidth]{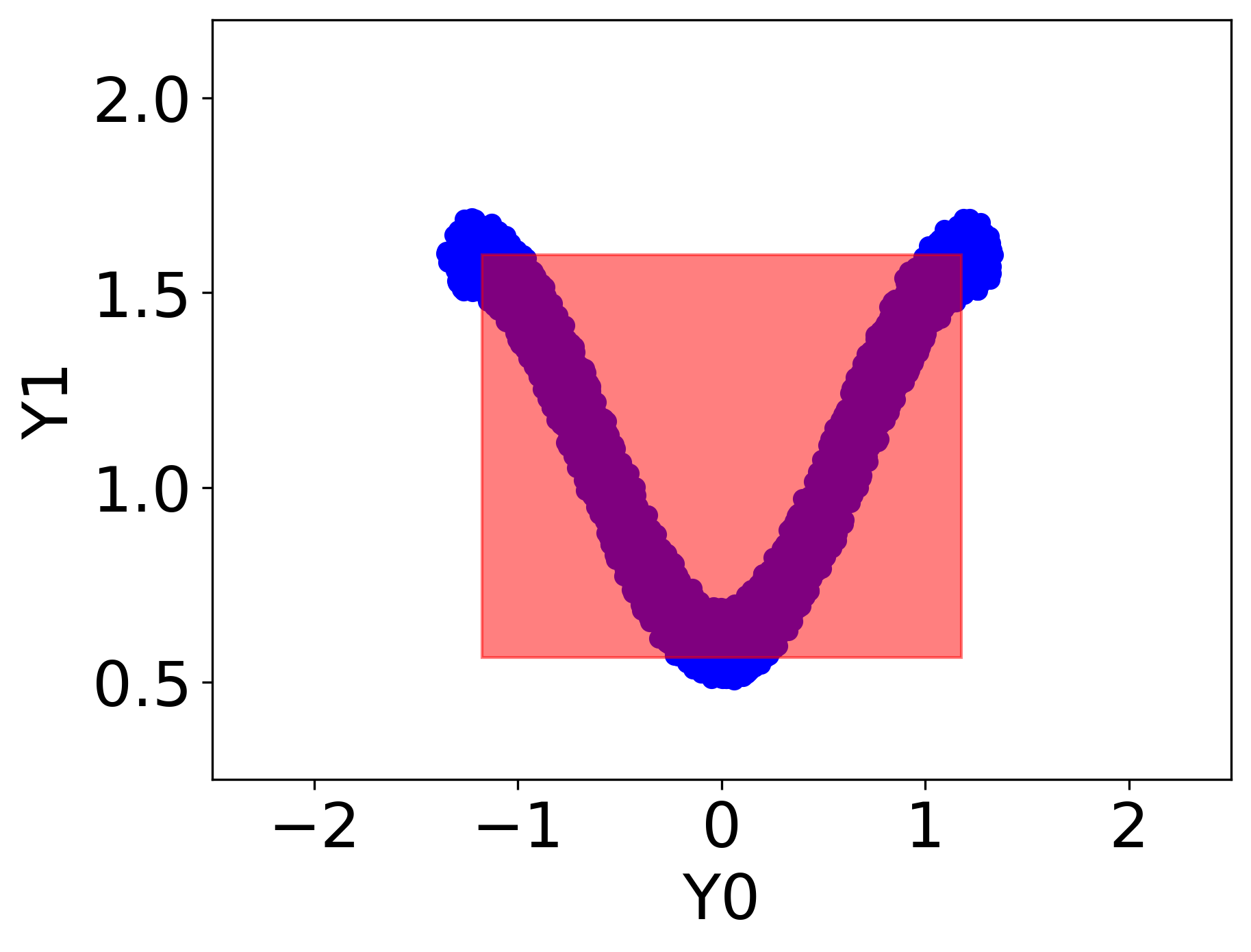}}} \\

    \texttt{NPDQR} & {\centering{\rowincludegraphics[width=\imagewidth\linewidth]{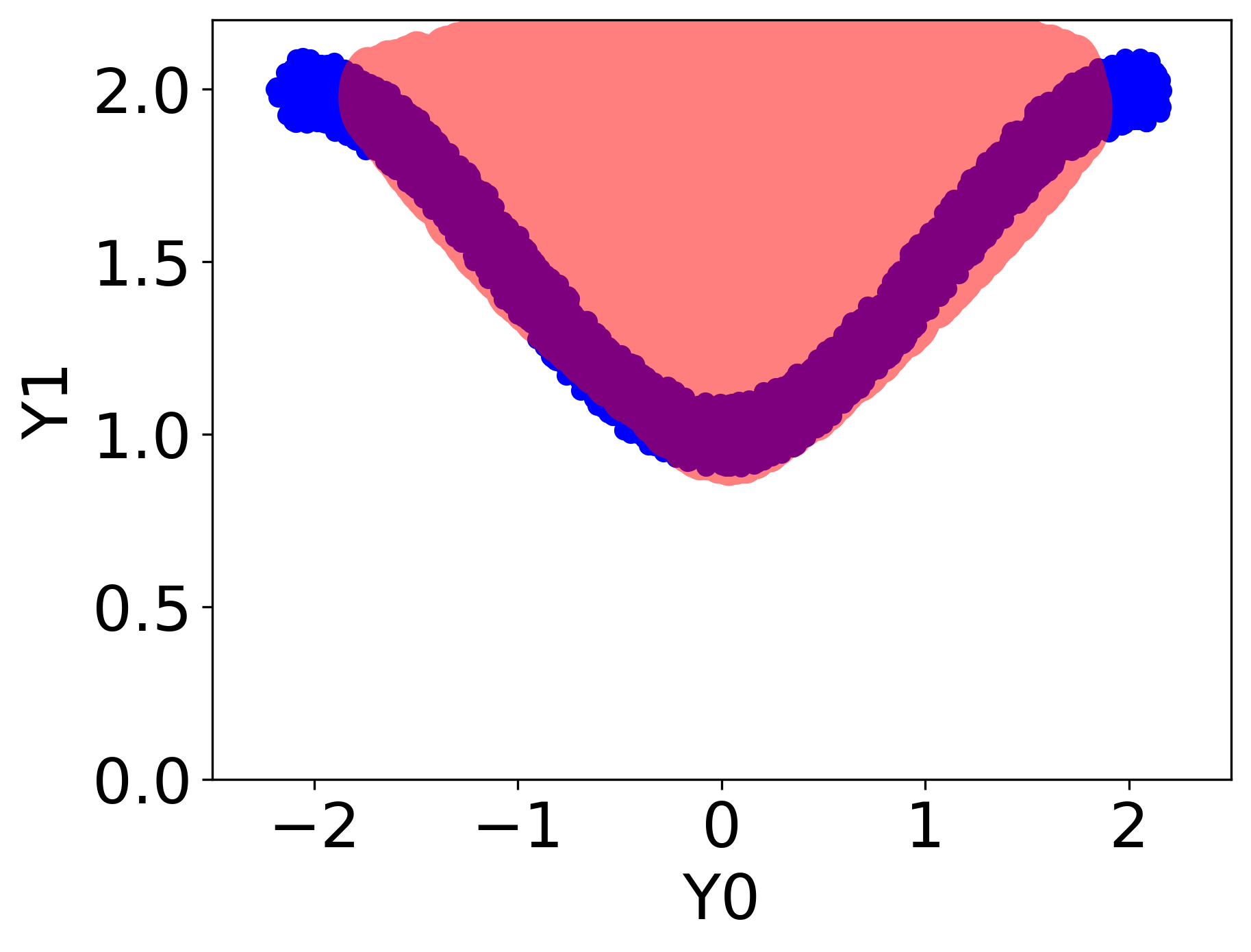}}} & {\centering{\rowincludegraphics[width=\imagewidth\linewidth]{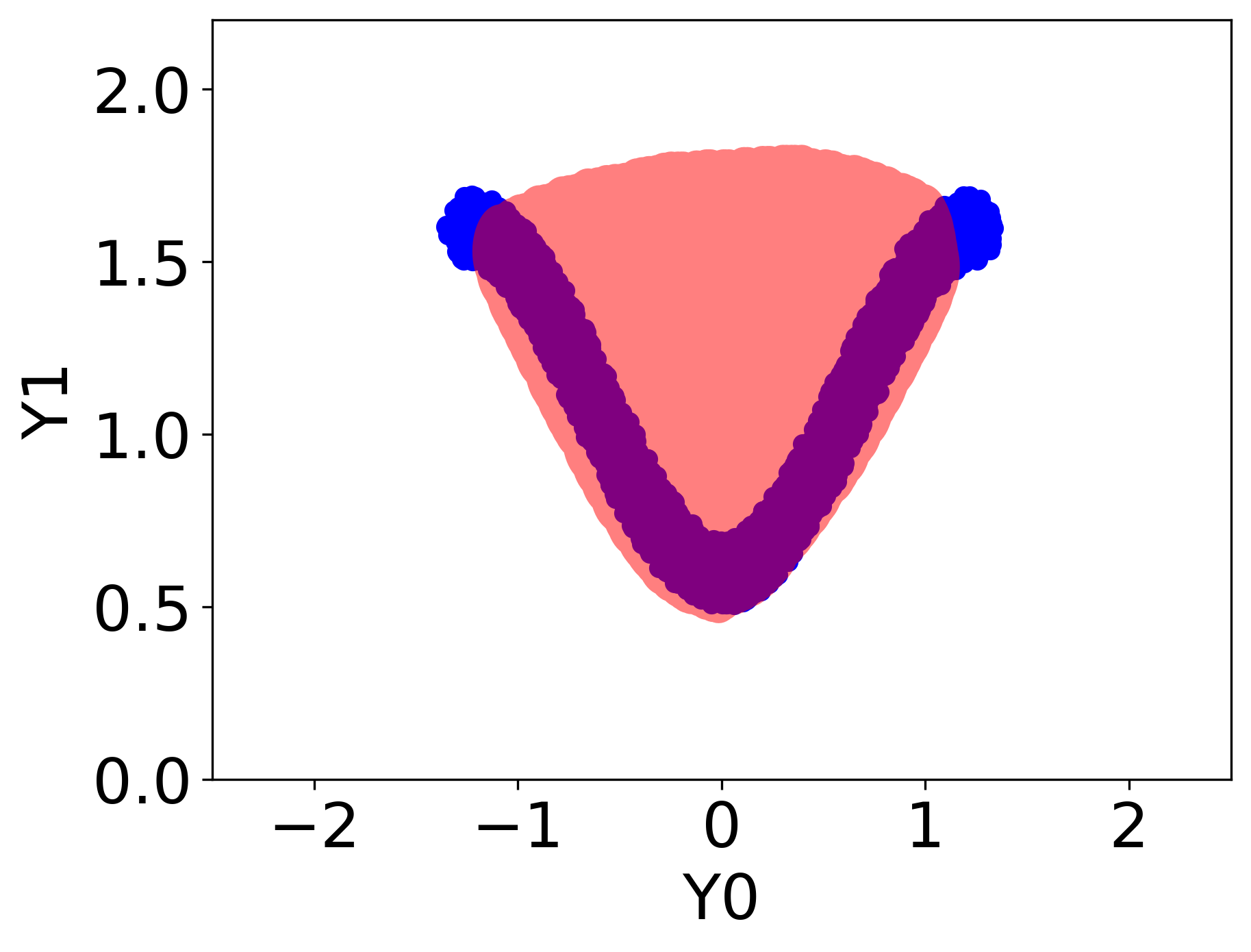}}} \\
    
    \texttt{VQR} &  {\centering{\rowincludegraphics[width=\imagewidth\linewidth]{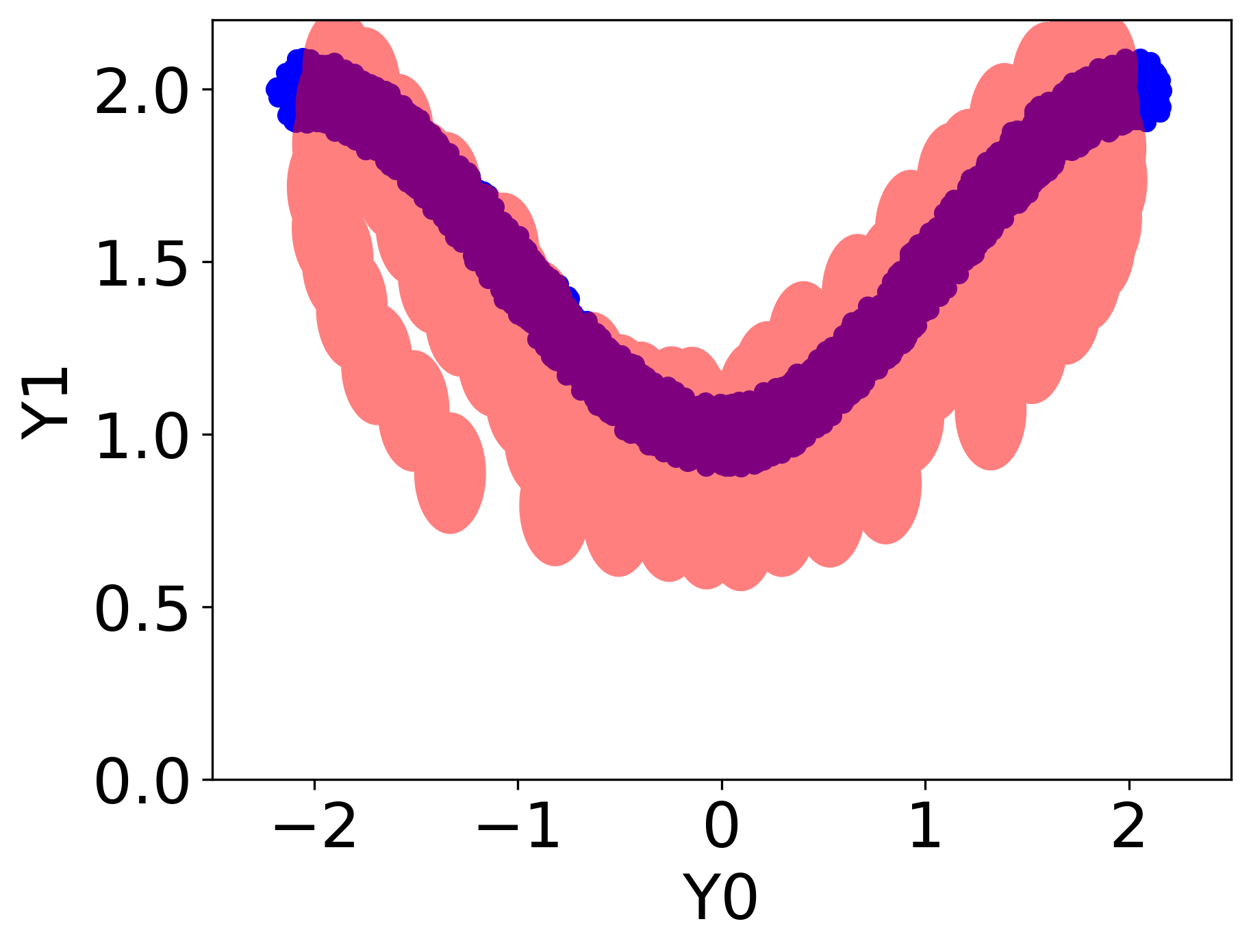}}} &
    {\centering{\rowincludegraphics[width=\imagewidth\linewidth]{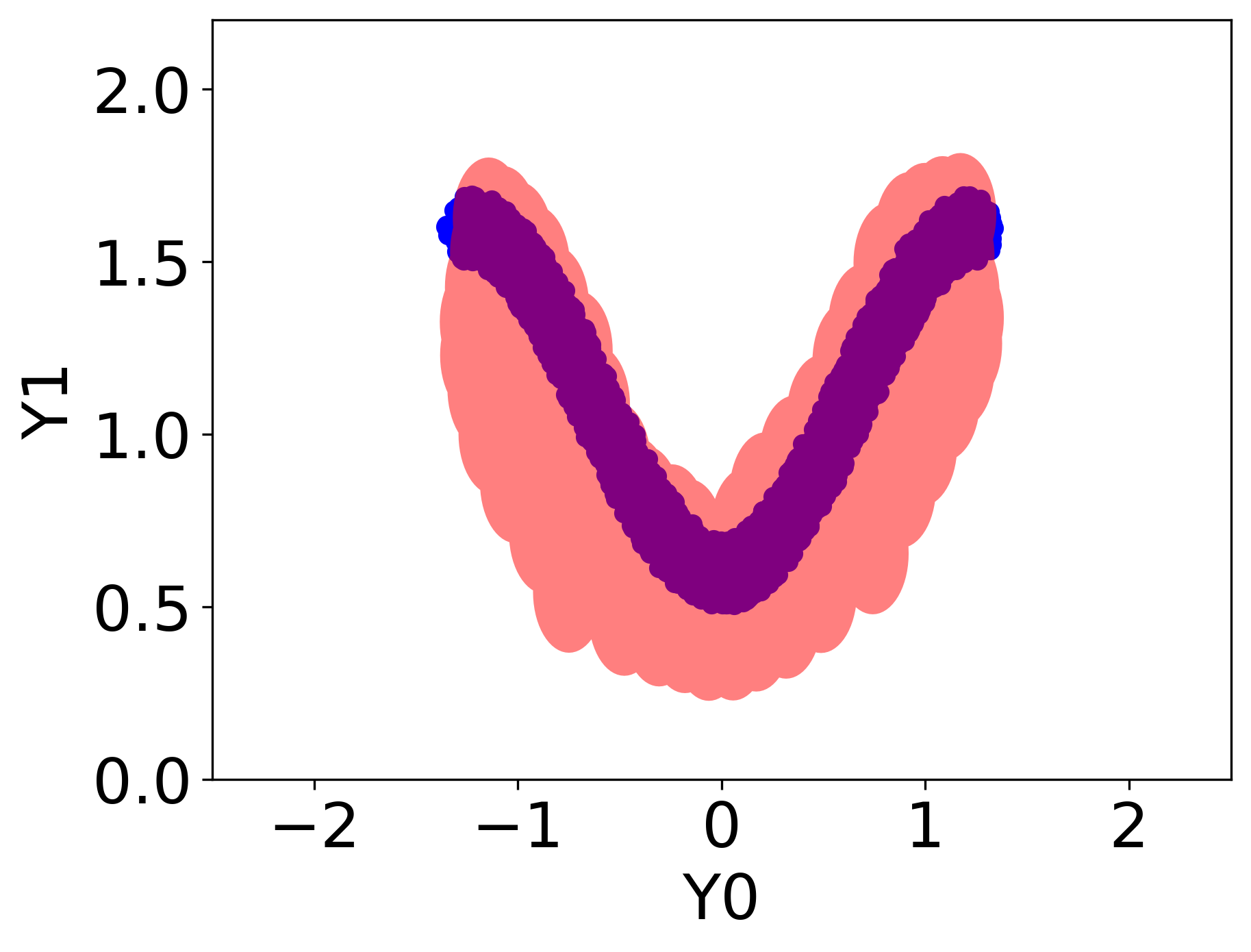}}} \\
    
    Our method & {\centering{\rowincludegraphics[width=\imagewidth\linewidth]{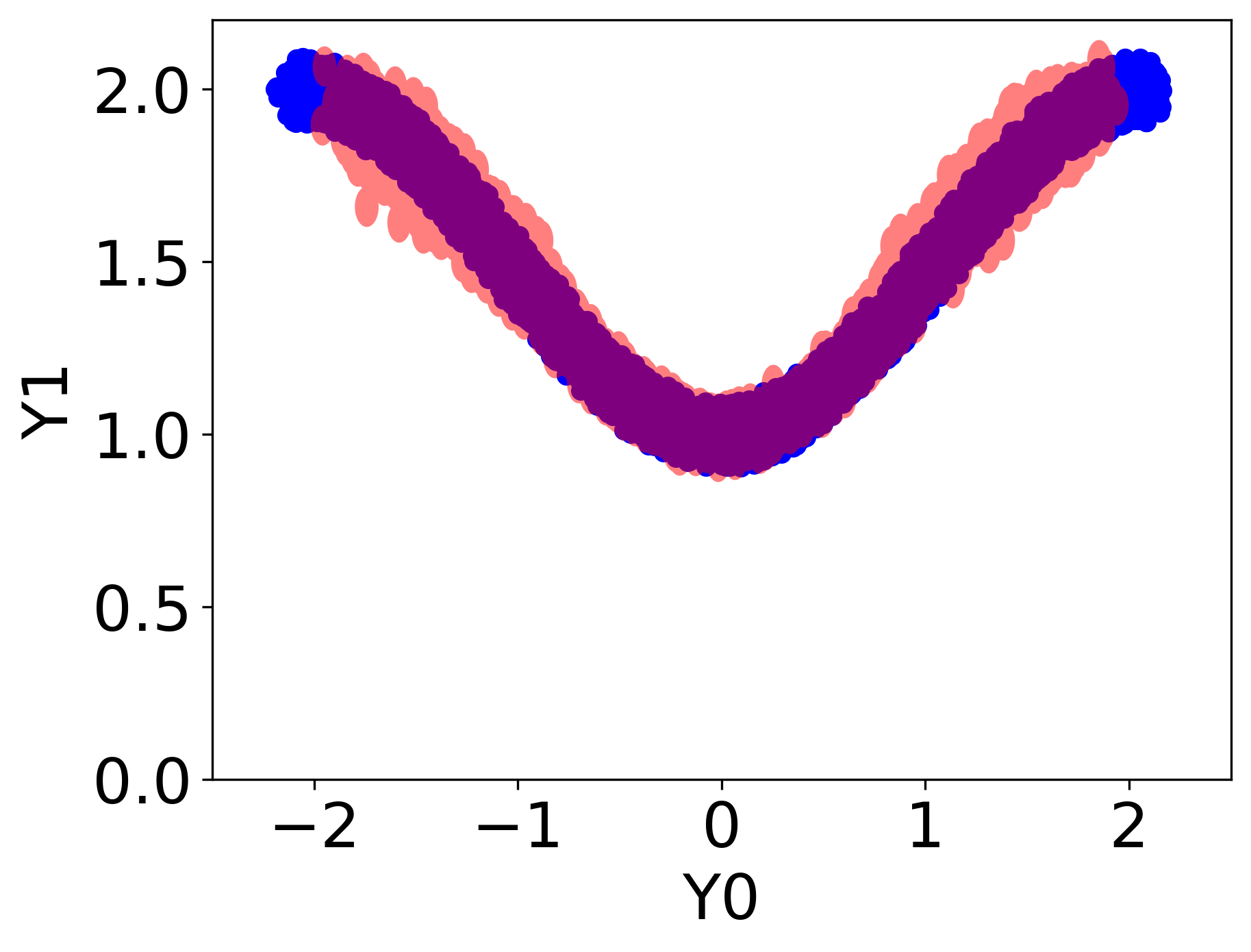}}} & 
     {\centering{\rowincludegraphics[width=\imagewidth\linewidth]{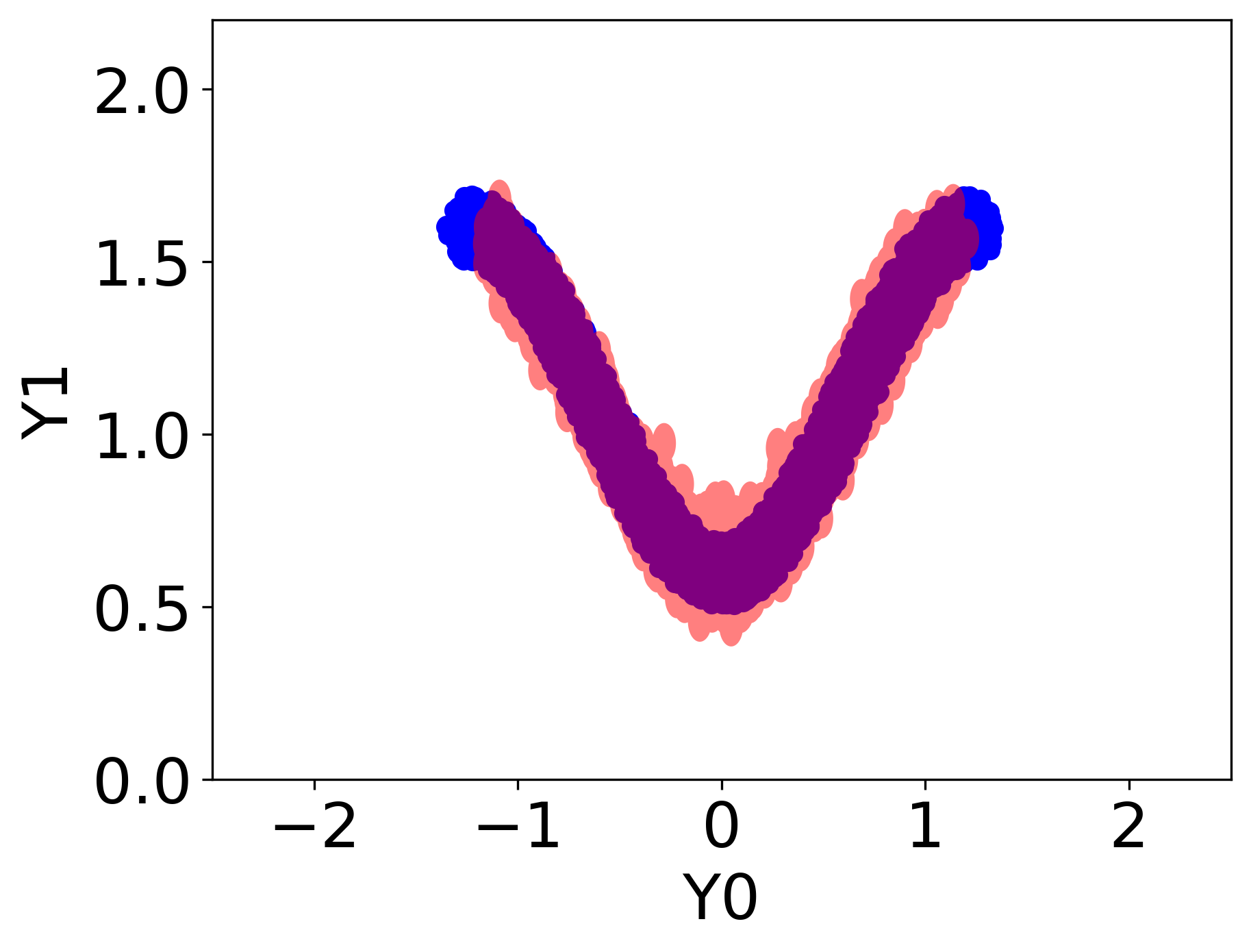}}}
    \end{tabular}%
    }
    \captionsetup{format=hang} \caption{Quantile region obtained by each of the methods: \texttt{Na\"ive QR}, \texttt{NPDQR}, \texttt{VQR}, and our method. See more details about the synthetic data in Appendix~\ref{sec:syn_datasets_details}.}
\label{fig:syn_data_results}%
\end{figure}%

\subsection{One-dimensional Quantile Regression}\label{sec:quantile_regression}
Conditional quantile regression is a commonly used method to estimate a certain quantile, such as the median, of a $Y$ conditional on $X$, given a sample $\{(X_i, Y_i)\}_{i=1}^n$ drawn from a distribution $P_{XY}$. The $\alpha$-th quantile function of $Y$ is defined as:
\begin{equation}
    q_\alpha(x) := \inf\{y\in\mathbb{R}: F(y \mid X = x)\geq \alpha\},
\end{equation}
where $F$ is the CDF of $Y \mid X=x$.
One application of conditional quantiles is to obtain a prediction interval for a one-dimensional response, as presented next. Denote by $\alpha_{\textrm{lo}}=\alpha/2$, $\alpha_{\textrm{hi}}=1-\alpha/2$ the lower and upper quantile levels, respectively. Given the lower and upper quantiles $q_{\alpha_{\textrm{lo}}}(x), q_{\alpha_{\textrm{hi}}}(x)$, the prediction interval for $Y$ given $X=x$ is defined as:
\begin{equation}
C(x) = [q_{\alpha_{\textrm{lo}}}(x), q_{\alpha_{\textrm{hi}}}(x)].
\end{equation}
By construction, the interval satisfies the requirement in \eqref{eq:cov_statement}. While the true conditional quantiles are unknown, they can be estimated empirically by solving an optimization problem, e.g., by minimizing the pinball loss \citep{QR, quantile_regression, estimating_cond_quantiles_pinball}. This process is known to yield estimations that are asymptotically consistent under some regularity conditions \citep{quantile_regression_forests, non_parametric_qr, estimating_cond_quantiles_pinball}. Even though the estimated quantiles are not perfectly accurate, they have been shown to be adaptive to local variability \citep{qr_mm, qr_conditional_density, quantile_regression, quantile_regression_forests, non_parametric_qr, estimating_cond_quantiles_pinball}.

\subsection{Na\"ive Multivariate Quantile Regression}\label{sec:naive_qr}
The idea presented in the previous section can be extended to build a quantile region for a multivariate response. The na\"ive approach regresses to the upper and lower quantile for each dimension separately. The nominal coverage level for each dimension is set to be $1-\beta$, where $\beta=\alpha/d$. This process results in a prediction interval $C^j$ for each feature in the response vector that attains the right coverage rate in the population level:
\begin{equation}
\mathbb{P}[Y_{n+1}\in C^j(x)] = 1-\beta.
\end{equation}
The prediction intervals are used to construct the quantile region in the following way:
\begin{equation}
R(x) = C^1(x) \cross  C^2(x) \cross ... \cross  C^d(x).
\end{equation}
Notice that the resulted quantile region is a rectangle, for any distribution of $Y\mid X$. Furthermore, in the ideal infinite-samples case, the produced quantile region satisfies the coverage requirement~\eqref{eq:cov_statement}, as proved in Appendix~\ref{sec:naive_qr_proof}.
Even though this method converges to the desired coverage level with infinite data, the quantile regions it produces are not flexible, and too conservative. This problem is illustrated in Figure \ref{fig:syn_data_results}, where we see that the regions produced by this na\"ive approach do not reflect the true distribution of the response, as opposed to the other methods: the true distribution is v-shaped, whereas the estimated regions are rectangles. Moreover, further experiments (Section \ref{sec:experiments}) reveal that this method produces the largest quantile regions among all existing methods, indicating poor statistical efficiency. 

\subsection{Directional Quantile Regression}\label{sec:dqr}
The next approach, which we refer to as \emph{directional quantile regression} (\texttt{DQR}) \citep{kong2012quantile, paindaveine2011directional, bovcek2017directional}, is not restricted to produce rectangle-shaped quantile regions, in contrast to the na\"ive approach, and thus can improve the statistical efficiency. However, \texttt{DQR} is also limited as it can only produce \emph{convex} quantile regions, as it coincides with Tukey depth \citep{tukey1975, kong2012quantile}. Observe that the na\"ive method from Section~\ref{sec:naive_qr} estimates the boundaries of the quantile region in four directions $u \in \{(0,1), (1,0), (0,-1), (-1,0)\}$. The \texttt{DQR} method extends this procedure, and estimates the boundaries in all directions, as displayed in Figure \ref{fig:dqr_halfspaces}.
Formally, \texttt{DQR} first projects $Y\in \mathbb{R}^d$ using a direction $u\in \mathbb{S}^{d-1}$, where $\mathbb{S}^{d-1} := \{u\in\mathbb{R}^d: \norm{u}_2=1\}$ is the unit sphere of $\mathbb{R}^d$. The quantile region boundaries are defined as the following hyperplanes. The order-$\alpha$ quantile of $Y$ given $X=x$ in a direction $u\in \mathbb{S}^{d-1}$ is any element of the collection of hyperplanes:
\begin{equation}
\pi_{\alpha u, x} := \{(x,y)\in \mathbb{R}^p \cross \mathbb{R}^d : u^Ty= f_{\theta}(x, u) \},
\end{equation}
with
\begin{equation}
\theta =\operatorname*{argmin}_{{\theta'}} \ \ \ {\frac{1}{n}{\sum_{i=1}^{n} {\rho_\alpha(u^TY_i , f_{\theta'}(X_i,u))}}},
\end{equation}
where $\theta$ are the parameters of the regression model, $f_\theta(x, u): \mathbb{R}^{p+d} \rightarrow \mathbb{R}$ is the regression function and $\rho_\alpha$ is the pinball loss, expressed as:
\begin{equation}
\rho_\alpha(y,\hat{y}) = 
\begin{cases} 
      \alpha(y-\hat{y}) & y-\hat{y} > 0, \\
      (1-\alpha)(\hat{y}-y) & \textrm{otherwise}.
   \end{cases}
\end{equation}
\cite{paindaveine2011directional} defined $f_\theta(x, u)$ as a linear function, i.e., $f_\theta(x, u) = \theta(u)^Tx$, where the coefficients $\theta(u) \in \mathbb{R}^p$ are a function of the direction $u$. A solution for each direction defines the following half-space:
\begin{equation}
H^+_u(x)=\{y\in\mathbb{R}^d:  u^Ty \geq f_{\theta}(x, u)\}.
\end{equation}
Figure \ref{fig:dqr_halfspaces} illustrates the half-spaces obtained from different directions. As the figure implies, the quantile region is defined as the intersection of all half-spaces obtained from all directions:
\begin{equation}\label{eq:qr_def_dqr}
R(x) = \cap_{u\in \mathbb{S}^{d-1}} H^+_u(x).
\end{equation}
As shown by \cite{bovcek2017directional}, the conditional quantile regions are closed, convex and nested, i.e., for any $x\in\mathcal{X}$, $R_{\alpha_{1}}(x) \subseteq R_{\alpha_{2}}(x)$ for $\alpha_{1} \geq \alpha_{2}$. The convexity of the constructed regions is also illustrated in Figure \ref{fig:syn_data_results}. However, the quantile region achieves coverage lower than the nominal level in the population level. See Section \ref{sec:need_conf} for more details. We note that there are more recent methods to compute the same quantile regions described in this section \citep{hallin2010multivariate, hallin2015local, charlier2020multiple}. These methods require estimating only a finite set of half-spaces, but they are all limited to construct convex regions. In fact, the method of \cite{hallin2010multivariate} is the common way to compute these directional regions, although in this work we focus on the version proposed by \cite{paindaveine2011directional}, for simplicity. 

\renewcommand\imagewidth{0.4}
\begin{figure}[htbp]
\setstretch{1}
  \centering

    \scalebox{1}{

    \begin{tabular}{cc}
    
     {\centering{\includegraphics[width=\imagewidth\linewidth]{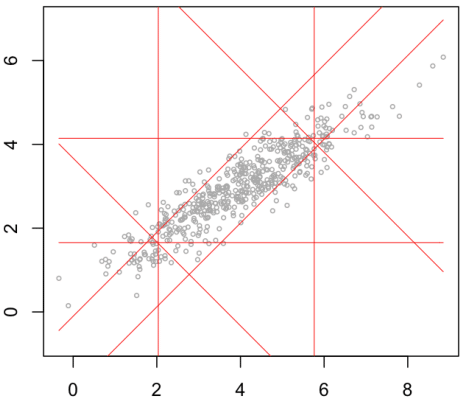}}} &
     
    {\centering{\includegraphics[width=\imagewidth\linewidth]{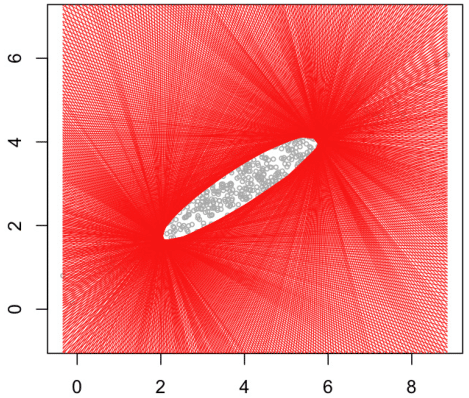}}}

    \\

    \end{tabular}%
    }
    \captionsetup{format=hang} \caption{Half-spaces obtained from \texttt{DQR} on unconditional data. The left and right panels contain 8 and 512 different half-spaces, respectively. Figure credit: \cite{Hallina_possibleapplications}.}
    
\label{fig:dqr_halfspaces}%
\end{figure}%

\subsection{Additional Related Work}\label{sec:additional_related}
The problem of estimating the quantile region for $Y \mid X=x$ was also tackled by \cite{hallin2015local, charlier2020multiple}. However, their proposed methods can only construct convex regions and are based on kernel functions or on a quantization grid, which are infeasible for high-dimensional regressors. A similar technique is the one by \cite{liu2019fast}, which is a fast algorithm to construct half-space regions. A different approach to multivariate quantiles, called \emph{vector quantile regression} (\texttt{VQR}) \citep{carlier2016vector, carlier2017vector, carlier2020vector}, addresses the estimation of the conditional distribution of $Y \mid X=x$, and can produce non-convex quantile regions. Nevertheless, it assumes that $Y$ depends linearly on $X$, and Figure \ref{fig:syn_data_results} shows that this method fails on the synthetic data, in which this assumption is not satisfied. This approach relates to the one proposed by \cite{chernozhukov2017monge} that is based on statistical depth. More recent approaches to construct distribution-free quantile regions, based on geometric tools, are proposed by \cite{hallin2017distribution, hallin2021distribution}. However, these methods can only handle data without covariates. 

\subsection{Our Contribution}\label{sec:our_contr}
We state three features of our method, where each addresses a different limitation of the vanilla \texttt{DQR} method.

\paragraph{Flexible quantile regions.} The \texttt{DQR} method can only construct convex regions, whereas the true distribution of the response might not be convex. As illustrated in Figure \ref{fig:syn_data_results}, the quantile regions produced by this method are too conservative and uninformative.
In contrast, as indicated by this figure, the regions produced by our algorithm are non-convex, reflecting the true distribution of the response.

\paragraph{Feasible for high-dimensional responses.} Due to the curse of dimensionality, the quantile regression problem becomes more difficult when increasing the dimension of the response. In addition, the discrepancy between the nominal coverage level and the empirical coverage rate achieved by \texttt{DQR} worsens as the dimension of the response increases. See Section \ref{sec:need_conf} for more details. Moreover, the time complexity of the methods proposed by \cite{carlier2016vector, carlier2017vector, carlier2020vector} grows exponentially as the dimension of the response increases.
Our method overcomes these limitations by computing the directional quantiles in space $\mathcal{Z}$, whose dimension can be determined regardless of the dimension of the response. As a result, our method is feasible for higher-dimensional response settings.
\paragraph{Guaranteed coverage rate.} The quantile regions constructed by \texttt{DQR} and \texttt{VQR} are not guaranteed to achieve the desired coverage rate. This problem is more severe with the \texttt{DQR}, whose coverage rate is significantly lower than the nominal level, even with infinite data. See Section \ref{sec:need_conf} for additional details. 
To overcome this limitation, we develop a calibration scheme that guarantees the coverage requirement~\eqref{eq:cov_statement}. This process is generic and can also be applied to our proposed method, \texttt{DQR}, \texttt{VQR}, and other methods. Our numerical experiments in Section \ref{sec:experiments} show that, after calibration, all methods achieve the right marginal coverage.

\section{Proposed Method}\label{sec:method}
In this section, we introduce the proposed algorithm, but first, extend \texttt{DQR} beyond linear settings.

\subsection{Non-parametric Directional Quantile Regression}\label{sec:npdqr}
Before describing our contribution, we pause to extend the formulation of conditional directional quantiles as defined by \cite{kong2012quantile, paindaveine2011directional, bovcek2017directional} beyond linear models. This extension of \texttt{DQR} will be used as a subroutine in our main algorithm. To this end, we follow the original \texttt{DQR} from Section~\ref{sec:dqr}, however, use a \emph{non-parametric function class} for $f_\theta$, formulated as neural networks in this work. In such a case, the quantile region for $Y$ conditional on $X=x$ is given by
\begin{equation} \label{eq:npdqr_qregion}
  R(x) = \{y\in\mathbb{R}^d : u^Ty \geq f_{\theta}(x, u), \forall u\in \mathbb{S}^{d-1}\}. 
\end{equation}
The latter stands in contrast with the methods proposed by \cite{paindaveine2011directional, bovcek2017directional} that allow $R(x)$ in \eqref{eq:qr_def_dqr} to depend only linearly on $x$. We refer to this new method as non-parametric \texttt{DQR} (\texttt{NPDQR}) throughout this work.

\subsection{Our Method: Going Beyond Convex Quantile Regions}\label{sec:vae_dqr}
In this section, we present a general approach to construct quantile regions of an arbitrary shape, overcoming the convexity restriction of \texttt{NPDQR}.
Our method relies on the following observation. When the distribution of $Y \mid X$ has level sets of the density that are convex, \texttt{NPDQR} (which must create convex regions) is still appropriate. This motivates us to transform an arbitrary response into a space where it has level sets of the density that are convex. Then, we will apply \texttt{NPDQR} and construct a convex quantile region in that space. Lastly, we will transform it back to the original space of the response, using the inverse of the mapping. By applying a non-linear mapping, this process will result in a quantile region that is not restricted to have a convex shape, having an arbitrary structure.

We now describe this procedure in detail. We start by learning a mapping that transforms a general distribution $Y \mid X=x$ into a latent distribution $Z_x$ whose level sets are convex. In this work, we focus on mapping $Y\mid X$ to a $r$-dimensional standard normal distribution, which is not only spherical, but also has convex level sets. To learn such a mapping, we fit a \emph{conditional variational auto-encoder} (CVAE) \citep{cvae} on the training set $\{(X_i, Y_i)\}_{i\in \mathcal{I}_1}$, and obtain the non-linear transformation between space $\mathcal{Y}$ to space $\mathcal{Z}$. For technical details regarding CVAE, see Appendix~\ref{sec:cvae_and_kl_lambda}. For our purposes, an ideal 
CVAE $\left( \mathcal{E}(y;x), \mathcal{D}(z;x) \right)$ should satisfy the following:
\begin{equation}
Z_x=\mathcal{E}(Y;X=x) \sim \mathcal{N}(0,1)^r, \quad \mathcal{D}(Z_x ; X=x)=Y.
\end{equation}
Since $Z_x$ is spherically distributed, the conditional distribution $Z_x \mid X_{n+1}=x$ for a new test point $X_{n+1}$ has a convex level sets. Figure \ref{fig:cvae_dqr_scheme} illustrates this process. The top panel visualizes the non-linear mapping $Y \mid X \rightarrow Z_x \sim \mathcal{N}(0,1)^3$ obtained by the CVAE model. Observe how the distribution $Z_x \mid X=x$ has approximately a spherical shape. Observe also that the inverse transformation is fairly accurate, so it can map samples from space $\mathcal{Z}$ back to space $\mathcal{Y}$.

Since the distribution of $Z \mid X$ is approximately spherical, \texttt{NPDQR} can estimate effectively its quantile region. We therefore fit \texttt{NPDQR} in space $\mathcal{Z}$. First, we map the response vectors of the training set to space $\mathcal{Z}$, and obtain the transformed training set $\{\left(X_i, \mathcal{E}(Y_i; X_i)\right)\}_{i\in\mathcal{I}_1}$. Next, we fit a \texttt{NPDQR} model on the transformed training samples, as described in Section \ref{sec:dqr}. This process results in a model that can construct a quantile region $R_{\mathcal{Z}}(x) \subseteq \mathcal{Z}$, for any given feature vector $x$. Notice that even though the constructed regions are convex, they are appropriate, since the distribution of $Z \mid X$ is approximately spherical. That is, \texttt{NPDQR} is applied in a space for which it is well-suited. The procedure of fitting the \texttt{NPDQR} model is summarized in the bottom panel of Figure \ref{fig:cvae_dqr_scheme}, displaying (in red) the quantile region constructed in space $\mathcal{Z}$ during training, for a specific feature vector $x$. 
\renewcommand\imagewidth{0.5}
\begin{figure}[htbp]
\setstretch{1.2}
    \scalebox{1.}{
    \begin{tabular}{cc}

    {\parbox{0.1\linewidth} {CVAE training}} & {\parbox{\imagewidth\linewidth}{\centering{\includegraphics[width=1.1\linewidth]{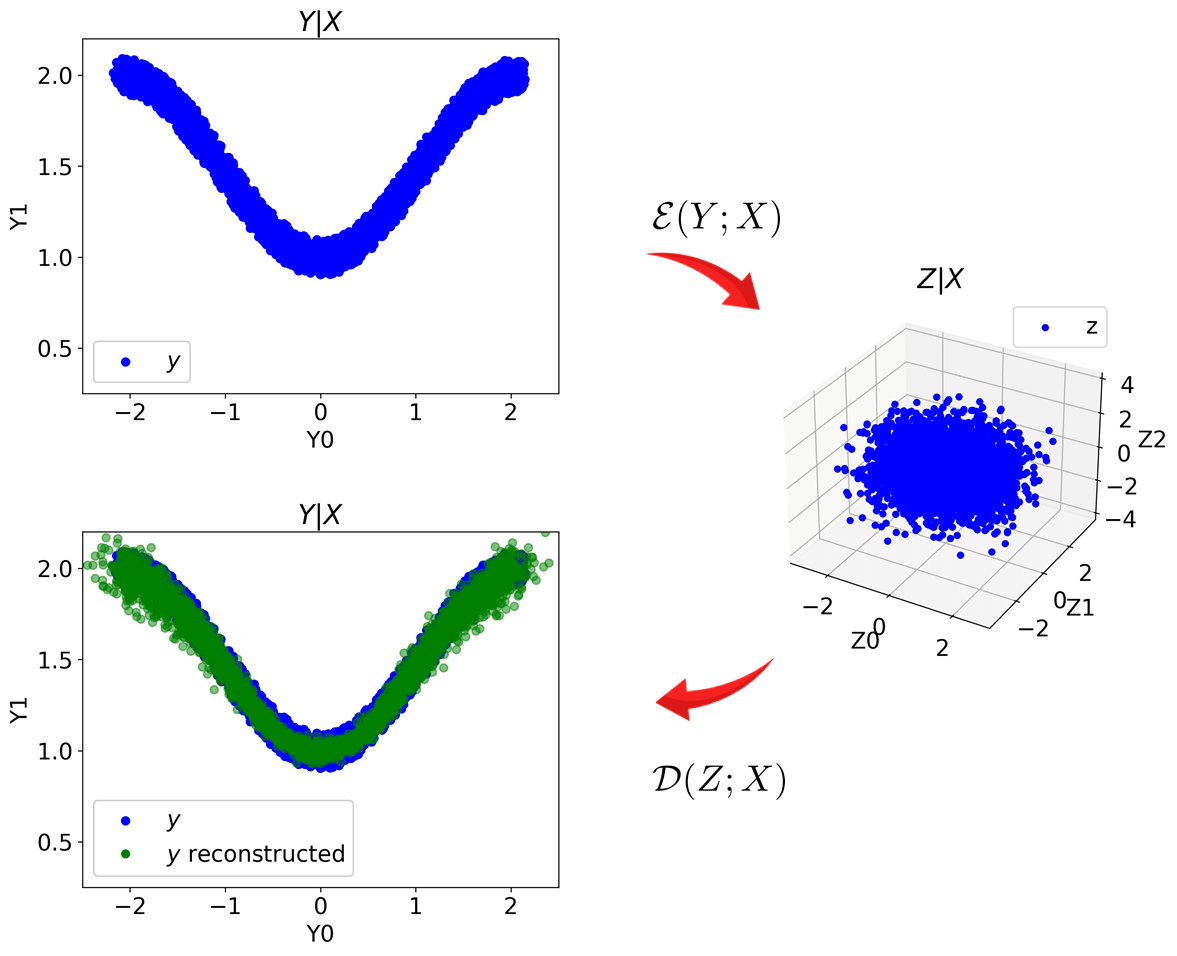}}}} \\ \\
  
    {\parbox{0.1\linewidth}{\texttt{NPDQR} training}} & {\parbox{\imagewidth\linewidth}{\centering{\includegraphics[width=1.7\linewidth]{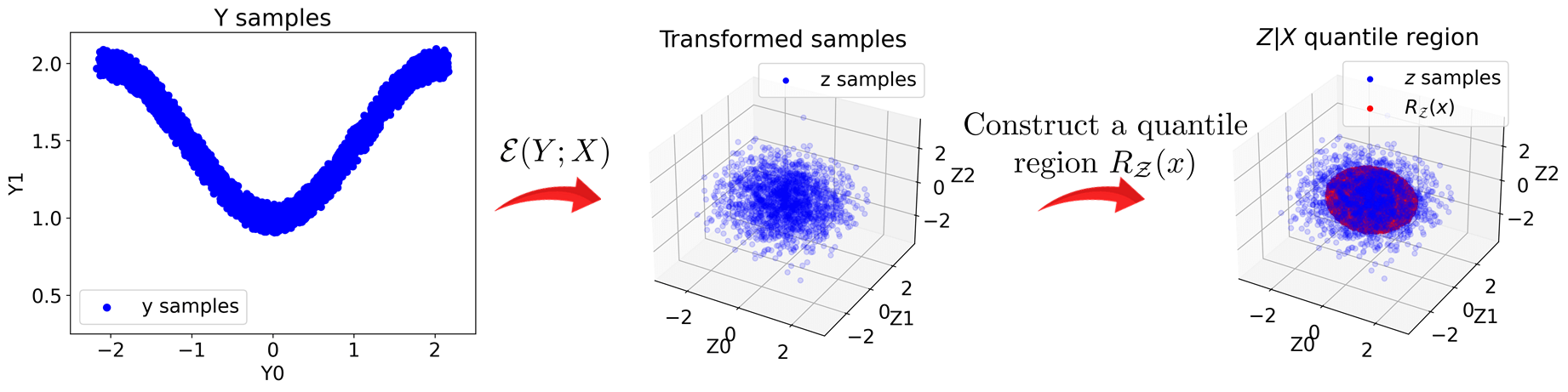}}}} \\

    \end{tabular}
    }
    \captionsetup{format=hang} \caption{CVAE and \texttt{NPDQR} training schemes on the synthetic data. For further details regarding the synthetic data, see Appendix~\ref{sec:syn_datasets_details}.}
    
\label{fig:cvae_dqr_scheme}
\end{figure}%

At test time, given a test point $X_{n+1}$, we (i) construct a quantile region $R_\mathcal{Z}(X_{n+1}) \subseteq \mathcal{Z}$ by applying the fitted \texttt{NPDQR} model; and (ii) transform the estimated region to the original space $\mathcal{Y}$, forming the desired quantile region: 
\begin{equation}\label{eq:vae_dqr_r_y}
R_\mathcal{Y}(X_{n+1}):=\mathcal{D}(R_\mathcal{Z}(X_{n+1});X_{n+1})\subseteq{\mathcal{Y}}.
\end{equation}
Observe that $R_\mathcal{Y}(X_{n+1})$ is the quantile region of $X_{n+1}$ in $\mathcal{Y}$. From a practical point of view, the function $\mathcal{D}$ can only map a discrete set of points from $R_\mathcal{Z}(X_{n+1})$, and therefore the resulting set $R_\mathcal{Y}(X_{n+1})$ is a discretization of the quantile region. We address this important issue in Section \ref{sec:intuition} and show how to construct a continuous region from the discretized $R_\mathcal{Y}(X_{n+1})$.
The test procedure is illustrated in Figure~\ref{fig:inference_scheme}, in which, we can see that while the quantile region in space $\mathcal{Z}$ has a convex shape, the transformed one (in space $\mathcal{Y}$) has the desired non-convex structure. 
The whole procedure is summarized in Algorithm~\ref{alg:vae_dqr}, which we refer to as \emph{Spherically Transformed} \texttt{DQR} (\texttt{ST-DQR}).


\begin{figure}[htbp]
\setstretch{1.1}
  \centering
     {\centering{\includegraphics[width=1\linewidth]{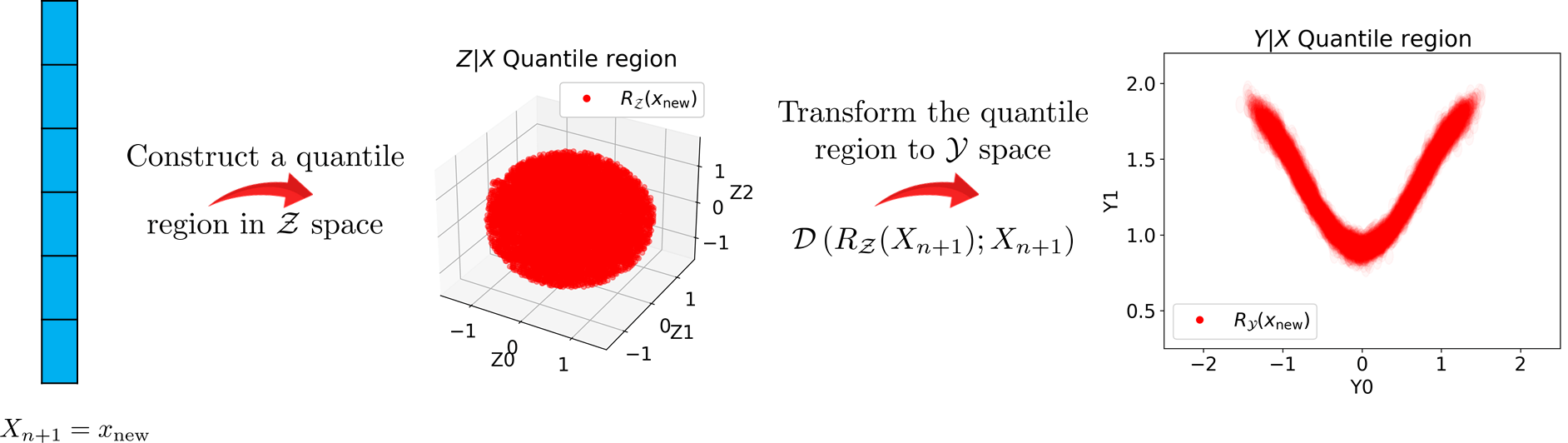}}}
    \captionsetup{format=hang} \caption{The test procedure on the synthetic data, given a new test point $X_{n+1}=x_\textrm{new}$.}
\label{fig:inference_scheme}%
\end{figure}%

\begin{algorithm}[t]
	\captionsetup{format=hang} \caption{Spherically transformed \texttt{DQR} (\texttt{ST-DQR})}
	\label{alg:vae_dqr}
	
	\textbf{Input:}
	\begin{algorithmic}
		\State Data $(X_i, Y_i) \in \mathbb{R}^p \cross \mathbb{R}^d, i \in \mathcal{I}_1$.
		\State Miscoverage level $\alpha \in (0,1)$.
		\State Directional quantile regression algorithm, e.g., \texttt{NPDQR} from Section \ref{sec:npdqr}.
		\State Conditional variational auto-encoder algorithm $(\mathcal{E}(y;x), \mathcal{D}(z;x))$; see Section \ref{sec:cvae_and_kl_lambda}.
		\State A test point $X_{n+1}=x$.
	\end{algorithmic}
	
	\textbf{Training time:}
	\begin{algorithmic}
        \State Fit a CVAE model on the data $\{(X_i, Y_i)\}_{i\in\mathcal{I}_{1}}$. See \citep{cvae}.
		\State Transform the response values $Y_i$ to the space $\mathcal{Z}$: $Z_{i} = \mathcal{E}(Y_i; X_i), i \in \mathcal{I}_1$.
		\State Fit a directional quantile regression model on the training set in space $\mathcal{Z}$ $\{(X_i, Z_i) : i\in \mathcal{I}_1 \}$ to obtain a method to construct quantile regions in $\mathcal{Z}$, denoted by $R_\mathcal{Z}(x)$.
	\end{algorithmic}
	\textbf{Test time:}
	\begin{algorithmic}
		\State Construct the quantile region in $\mathcal{Z}$ $R_\mathcal{Z}(X_{n+1}=x)$.
		\State Transform the quantile region to $\mathcal{Y}$: $R_\mathcal{Y}(X_{n+1}=x)=\mathcal{D}\left(R_\mathcal{Z}(X_{n+1}=x);X_{n+1}=x\right)$.
	\end{algorithmic}
	
	\textbf{Output:}
	\begin{algorithmic}
		\State A quantile region $R_\mathcal{Y}(X_{n+1}=x)$ for the unseen input $X_{n+1}={x}$.
	\end{algorithmic}
\end{algorithm}

We pause here to highlight several features of the proposed algorithm. First, we use a CVAE model which is nonlinear and non-convex, so we can obtain arbitrary quantile regions, unlike previous approaches. Second, since we apply \texttt{NPDQR} in a latent $r$-dimensional space, where $r$ is a hyper-parameter, our method can effectively treat high-dimensional response variables by choosing $r < d$. We demonstrate this in Section \ref{sec:syn_data_results}, in which we display the results obtained by different methods on synthetic data sets with higher-dimensional responses.

\subsection{Theoretical Results}\label{sec:vae_dqr_theorems}
We explain a formal property satisfied by our proposed algorithm that supports the behavior we observed in Figure~\ref{fig:syn_data_results}.
We would like our quantile region to reflect the true distribution of the response variable. For example, in the synthetic data from Figure \ref{fig:syn_data_results}, the quantile region should only cover blue points, i.e., areas where the response can be present. Formally, we ask that the quantile region will be contained in the support of $Y \mid X=x$. We now show that a quantile region constructed by our method satisfies this property.
\begin{theorem}\label{thm:qr_in_y}
Suppose $Y\mid X=x$ has a continuous distribution for all $x$. Suppose $\left( \mathcal{E}(y;x), \mathcal{D}(z;x) \right)$ is a CVAE model that satisfies:
\begin{equation}
\forall x\in \textrm{supp}(X): Z_x=\mathcal{E}(Y;X=x) \in \mathbb{R}^r, \quad \mathcal{D}(Z_x ; X=x) \stackrel{d}{=} Y \mid X=x,
\end{equation}
where $\mathcal{E}$ and $\mathcal{D}$ are continuous functions.
Suppose $R_{\mathcal{Z}}(x)$ is a quantile region in space $\mathcal{Z}$. Define the quantile region in space $\mathcal{Y}$ as: $R_{\mathcal{Y}}(x) = \mathcal{D}(R_{\mathcal{Z}}(x);x).$
Then the quantile region $R_{\mathcal{Y}}(x)$ satisfies:
\begin{equation}
\label{eq:correct_support}
R_{\mathcal{Y}}(x) \subseteq \textrm{supp}(Y\mid X=x).
\end{equation}
\end{theorem}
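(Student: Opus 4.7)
The plan is to push the desired support inclusion through the continuous decoder $\mathcal{D}(\cdot;x)$ by a standard pushforward-support argument. The key tool I would use is the following fact: for any continuous map $g:\mathbb{R}^r\to\mathbb{R}^d$ and any random vector $W$ in $\mathbb{R}^r$ with support $S$, one has $g(S)\subseteq\textrm{supp}(g(W))$. To see this, pick $z\in S$ and set $y:=g(z)$; for any open neighborhood $U$ of $y$, continuity makes $g^{-1}(U)$ an open set containing $z$, so $\mathbb{P}(W\in g^{-1}(U))>0$ because $z\in\textrm{supp}(W)$, and hence $\mathbb{P}(g(W)\in U)>0$. This is the only nontrivial ingredient and its verification is essentially one line.

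Applying this with $g=\mathcal{D}(\cdot;x)$ and $W=Z_x$, and combining with the distributional identity $\mathcal{D}(Z_x;X=x)\stackrel{d}{=}Y\mid X=x$, I would obtain $\mathcal{D}(\textrm{supp}(Z_x);x)\subseteq\textrm{supp}(Y\mid X=x)$. Then, for any $y\in R_{\mathcal{Y}}(x)=\mathcal{D}(R_{\mathcal{Z}}(x);x)$ there exists $z\in R_{\mathcal{Z}}(x)$ with $y=\mathcal{D}(z;x)$, and provided $z\in\textrm{supp}(Z_x)$ the previous inclusion gives $y\in\textrm{supp}(Y\mid X=x)$, which is exactly \eqref{eq:correct_support}.

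The main obstacle is the implicit condition $R_{\mathcal{Z}}(x)\subseteq\textrm{supp}(Z_x)$, which is a requirement on where \texttt{NPDQR} places its region in the latent space, not a property of $\mathcal{D}$ itself. In the paper's construction this is automatic because the CVAE is trained so that $Z_x\sim\mathcal{N}(0,I_r)$, whose support is all of $\mathbb{R}^r$; I would either flag this explicitly as a hidden hypothesis or invoke the standard-normal latent assumption that motivates applying \texttt{NPDQR} in $\mathcal{Z}$ in the first place. Once this is dispatched, the remainder is a one-line chain of inclusions; notably, the proof needs only the continuity of $\mathcal{D}$ and the distributional identity, and does not actually require the continuity of $Y\mid X=x$ or of $\mathcal{E}$.
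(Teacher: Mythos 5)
Your proof is correct and follows essentially the same route as the paper's: both push the support through the continuous decoder using the distributional identity $\mathcal{D}(Z_x;x)\stackrel{d}{=}Y\mid X=x$ (the paper phrases it as a contradiction with a small ball around a putative point of $\textrm{Im}(\mathcal{D})$ outside the support, you phrase it directly via open neighborhoods). The hidden hypothesis you flag---that $R_{\mathcal{Z}}(x)\subseteq\textrm{supp}(Z_x)$---is implicitly present in the paper's proof as well, since the step asserting $\mathbb{P}(\mathcal{D}(Z_x;X=x)\in B\mid X=x)>0$ requires the preimage of $B$ to meet $\textrm{supp}(Z_x)$, which holds automatically only when the latent variable has full support (as for the ideal Gaussian latent).
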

All proofs are given in Appendix~\ref{sec:theo_results}. Even though the requirement in~\eqref{eq:correct_support} is a modest bar, we see in Figure~\ref{fig:syn_data_results} that, unlike the proposed method, the other methods do not satisfy this property.
In conclusion, we have shown that a quantile region in $\mathcal{Y}$ constructed by our method does not contain spurious portions, since it does not cover areas outside the support of $Y \mid X=x$. As a complementary result, we also give a lower bound for the coverage rate of a quantile region constructed by our method in Appendix~\ref{sec:thm_3_proof}. To achieve the exact nominal coverage level, we propose a calibration procedure, described in Section \ref{sec:conformalization}.

\section{Calibration}\label{sec:conformalization}
In this section, we introduce a procedure to calibrate quantile regions to exactly achieve $1-\alpha$ coverage. The procedure is modular and can be used with any quantile region algorithm, such as \texttt{DQR}, \texttt{VQR}, or our proposed method from the previous section. At a technical level, the calibration scheme instantiates split conformal prediction~\citep{vovk2005algorithmic} in a way that is compatible with multi-dimensional quantile regions.

\subsection{\texttt{DQR} Requires Estimating Extreme Quantiles}\label{sec:need_conf}
To motivate our calibration scheme, we first point out that the parameter $\alpha$ in \texttt{DQR} does not correspond to the coverage level. This phenomenon is known in the literature \citep{zuo2000general,tukey1975}, and in this section, we provide an intuitive explanation and an example illustrating this problem. As a result, the \texttt{DQR} regions have a coverage level unknown to the user without further calibration, such as the one described in this section.
This problem arises from the definition of the \texttt{DQR} quantile region as an intersection of infinite half-spaces, where each covers~$1-\alpha$ of the distribution; see Figure~\ref{fig:dqr_halfspaces}. As a result, their intersection, i.e., the quantile region output by \texttt{DQR}, covers strictly less than $1-\alpha$ of the distribution. To make this precise, we now analyze the coverage rate of a quantile region constructed with the \texttt{DQR} method, in the setting in which $Y \mid X = x \sim \mathcal{N}(0,1)^r$ (see Appendix~\ref{sec:DQR_cov} for the full calculation). The left panel of Figure \ref{fig:dqr_cov} displays the coverage rate of a quantile region constructed by \texttt{DQR} as a function of the dimension $r$, when the nominal coverage level is set to 90\%. The right panel in that figure presents the coverage of a \texttt{DQR} quantile region as a function of the directional quantile level $1-\alpha$ for $r=3$. We see that the achieved coverage is far below the nominal rate. For example, to construct regions that truly have coverage $90\%$ in a three-dimensional response setting, one would need the $99.38\%$ directional quantiles. Unfortunately, such extreme quantiles are impractical to estimate, as shown by~\cite{extremal_qr}.
In summary, the \texttt{DQR} regions do not achieve the nominal coverage rate, even for reasonable quantile levels. The coverage level is the scaling of interest to the user, so we turn to formulate a calibration scheme that guarantees the desired coverage level.

\renewcommand\imagewidth{0.4}
\begin{figure}[htbp]
\setstretch{1.1}
  \centering

    \scalebox{1}{

    \begin{tabular}{cc}
    
    {\centering{\includegraphics[width=\imagewidth\linewidth]{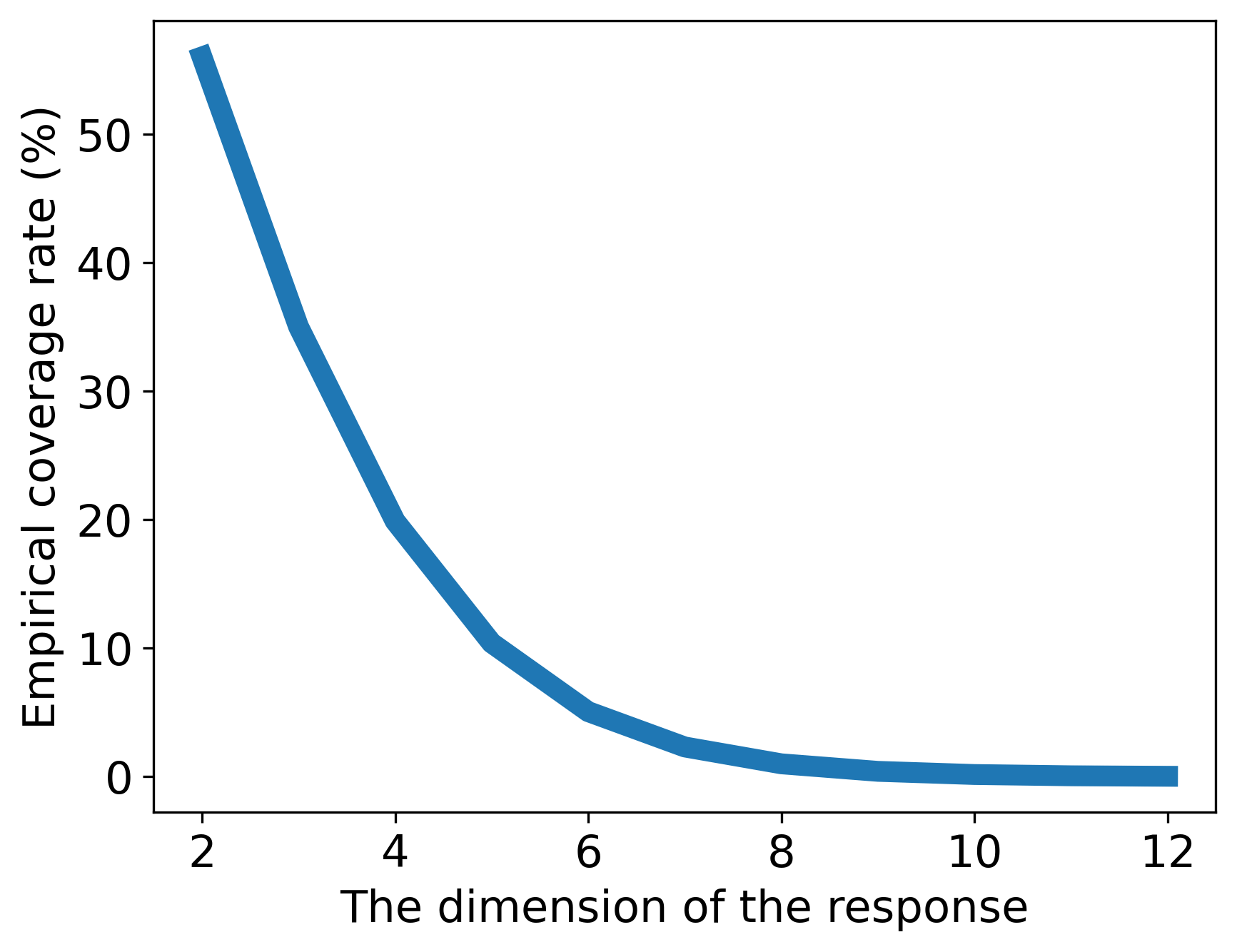}}} &
     
    {\centering{\includegraphics[width=\imagewidth\linewidth]{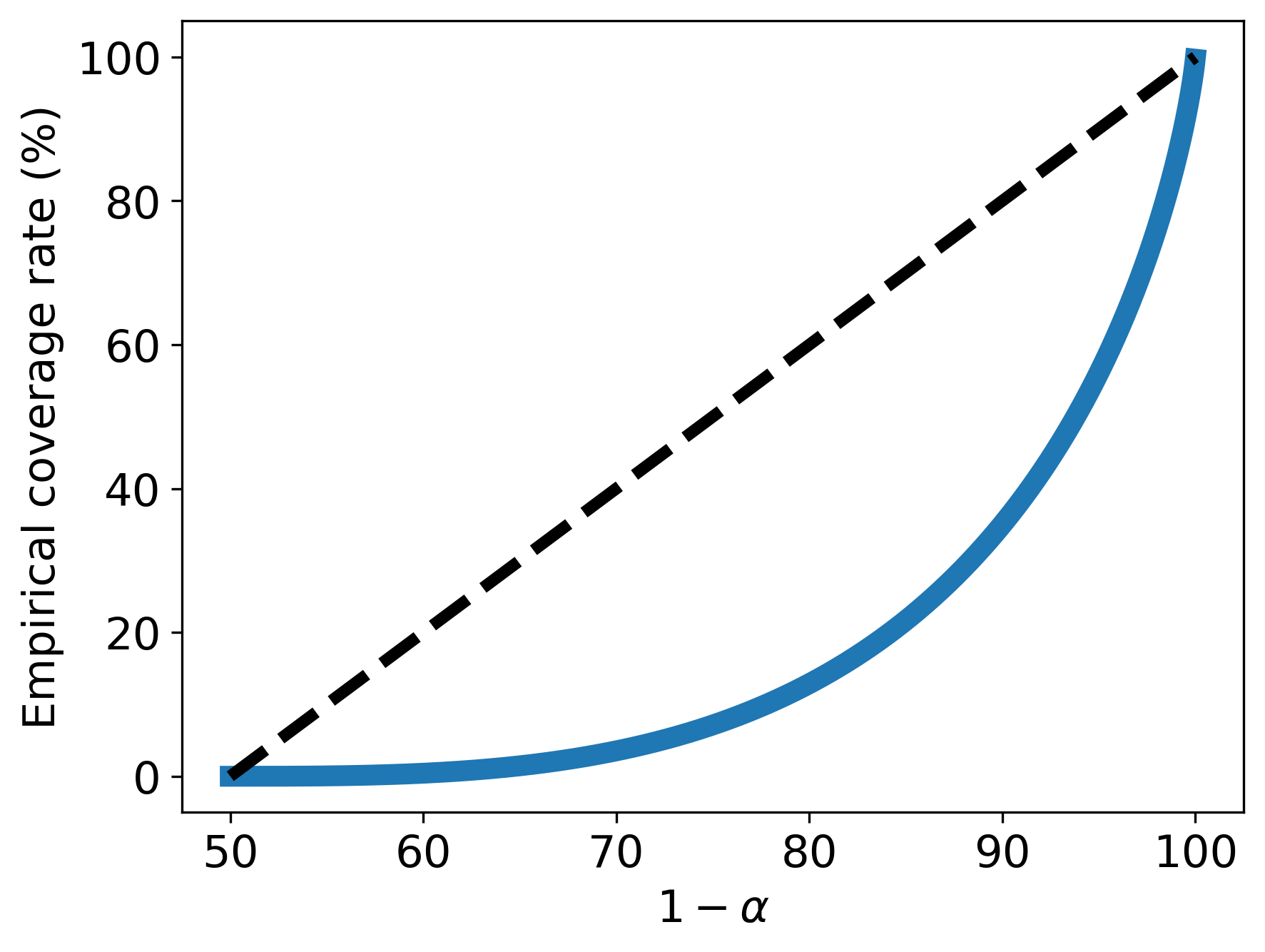}}}

    \end{tabular}%
    }
    \captionsetup{format=hang} \caption{Coverage rate of a quantile region constructed by \texttt{DQR}. Left panel: marginal coverage as a function of $r$, when the desired level is 90\%. Right panel: marginal coverage as a function of $1-\alpha$, for $r=3$.}
    
\label{fig:dqr_cov}%
\end{figure}%

\subsection{Calibration Preliminaries}\label{sec:intuition}

Recall that \texttt{ST-DQR} produces a discretization of the quantile region for a given test point $X_{n+1}$, denoted by $R_\mathcal{Y}(X_{n+1})\subseteq \mathcal{Y}$; see \eqref{eq:vae_dqr_r_y}. We now show how to extend this discrete set to a continuous quantile region that contains infinitely many points. In more detail, we introduce a family of continuous quantile regions, parameterized by a single number. Then, in Section \ref{sec:conf_scheme}, we explain how to choose this parameter to achieve the desired coverage level. The method we develop can also generate valid predictive regions for \texttt{NPDQR} (or any quantile method), by discretizing its quantile regions.

We begin by defining a base region $S^\gamma(x)$, which we will later expand or contract.
We define that a point $y \in \mathcal{Y}$ is inside the base region of $X_{n+1}=x$ if it is close to a point in $R_\mathcal{Y}(x)$. Formally, the base region is given by
\begin{equation}\label{eq:base_qr}
S^\gamma(x) = \left\{ y\in\mathbb{R}^d : \min_{a\in R_\mathcal{Y}(x)}{d(a,y)} \leq \gamma  \right\},
\end{equation}
where $d$ denotes $L_2$ distance, and $\gamma$ is a distance threshold. We initialize $\gamma$ to be $\gamma_\textrm{init}$, which is the 90-th quantile of the distance between two neighbor points in $R_\mathcal{Y}(X_{n+1})$; see more details in Appendix~\ref{sec:gamma_init}. We find that this initialization performs well in the sense that it tends to transform the discrete set into a continuous region, although other options are possible. 

Notice that since $\gamma$ is not tuned, the coverage achieved by this method might be far from the nominal level. To tune this parameter, we first split the data into a training set, indexed by $\mathcal{I}_1$, and a calibration set, indexed by $\mathcal{I}_2$. Denote the coverage rate of the base regions by \begin{equation}
    c_\textrm{init} = \frac{1}{\mid \mathcal{I}_2 \mid} \left| \{Y_i: Y_i \in S^{\gamma_\textrm{init}}(X_i), i\in\mathcal{I}_2 \} \right|,
\end{equation} 
where $|\cdot|$ is the set size.
Depending on $c_\textrm{init}$, we grow or shrink the base region $S^{\gamma_\textrm{init}}$ to the extent required to achieve the desired $1-\alpha$ coverage. We describe these two cases (grow/shrink) separately next. See Appendix~\ref{sec:cal_shrink_figs} for an explanation of why it is important to handle the two cases separately.

\paragraph{Case 1: Too low coverage.}\label{par:increase_qr_case} In this setting, $c_\textrm{init} \leq 1-\alpha$ and therefore we need to enlarge the base region by increasing $\gamma$. Figure \ref{fig:qr_vs_gamma} shows the effect of $\gamma$ on the quantile region and its coverage rate. By inflating $\gamma$, we enlarge the quantile region and, as a result, increase the coverage rate. In Section \ref{sec:conf_scheme} we show how to exploit the calibration set to compute $\gamma_\textrm{cal}$ that rigorously achieves this nominal rate. Given $\gamma_\textrm{cal}$, the calibrated quantile region in this case is formulated as
\begin{equation}\label{eq:small_qr}
S^{\gamma_\textrm{cal}}(x) = \left\{ y\in\mathbb{R}^d : \min_{a\in R_\mathcal{Y}(x)}{d(a,y)} \leq \gamma_\textrm{cal}  \right\}.
\end{equation}
In practice, \texttt{DQR} tends to generate regions with coverage rate below the nominal level (recall Figure~\ref{fig:dqr_cov}), therefore, this regime, where $c_\textrm{init} \leq 1-\alpha$, is most likely to happen in practice.

\renewcommand\imagewidth{0.31}
\begin{figure}[htbp]
\setstretch{1.1}
  \centering
    
    \scalebox{1.}{
    \begin{tabular}{ccc}
     {\centering{\rowincludegraphics[width=\imagewidth\linewidth]{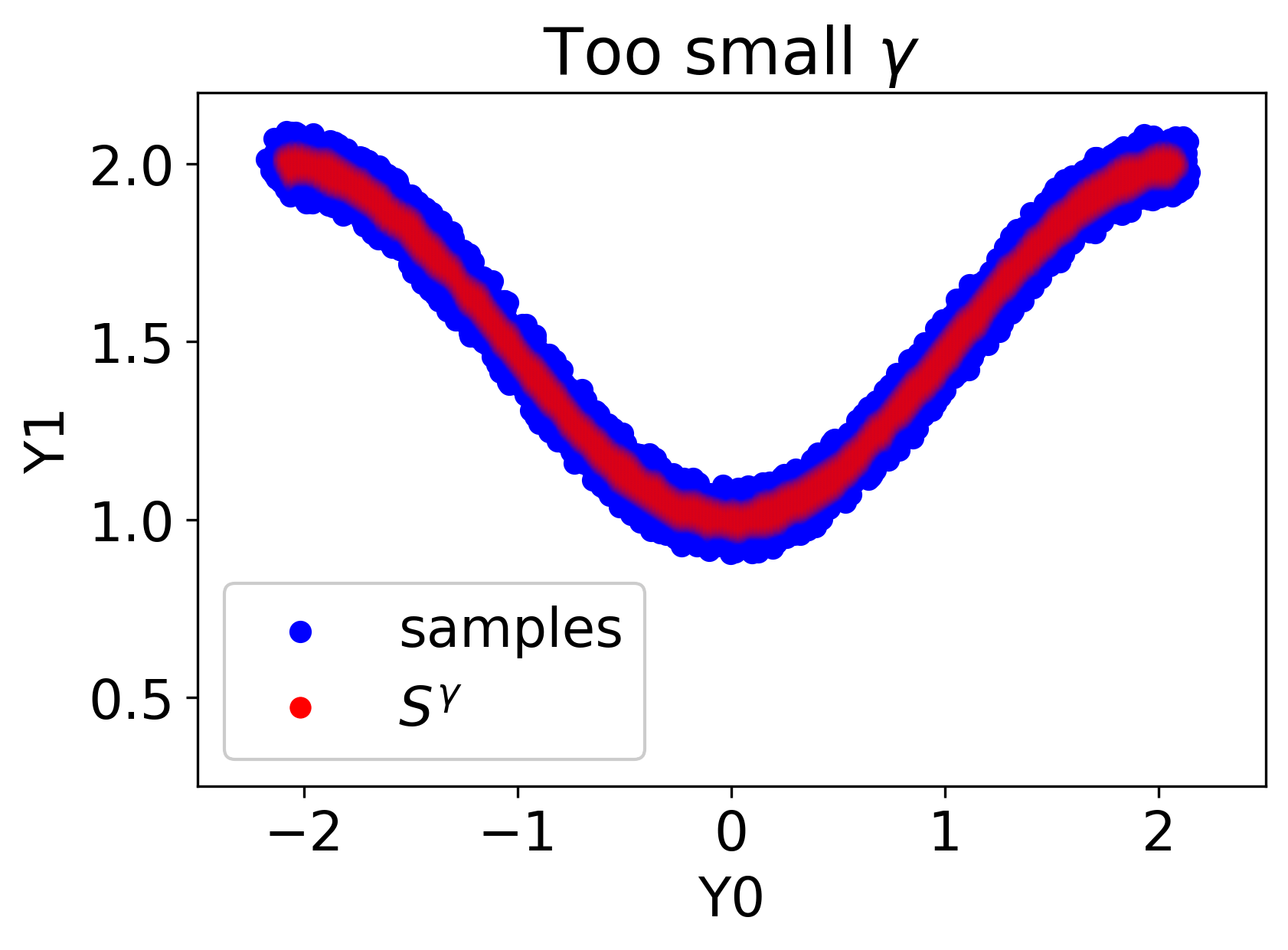}}} & {\centering{\rowincludegraphics[width=\imagewidth\linewidth]{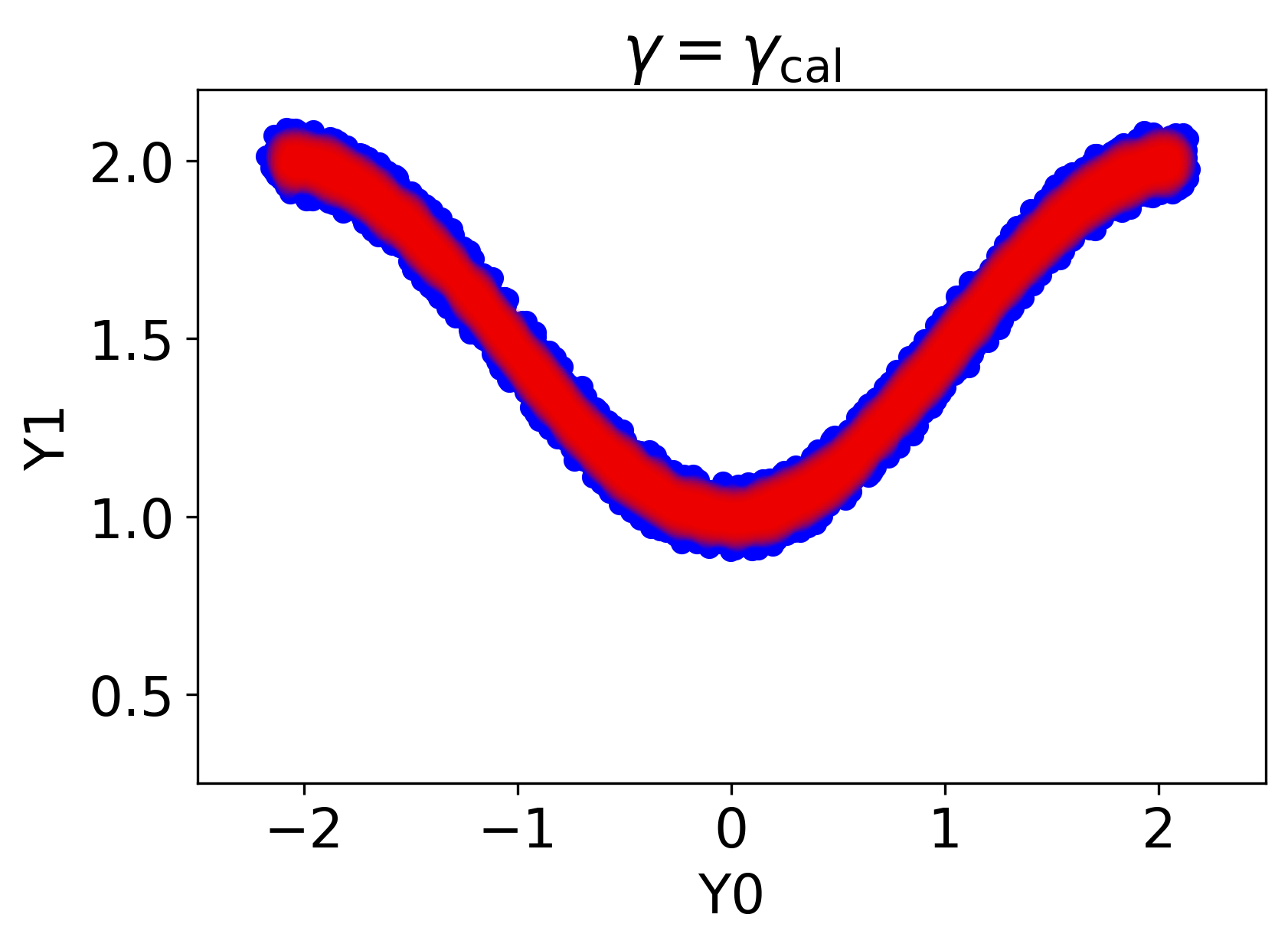}}} & {\centering{\rowincludegraphics[width=\imagewidth\linewidth]{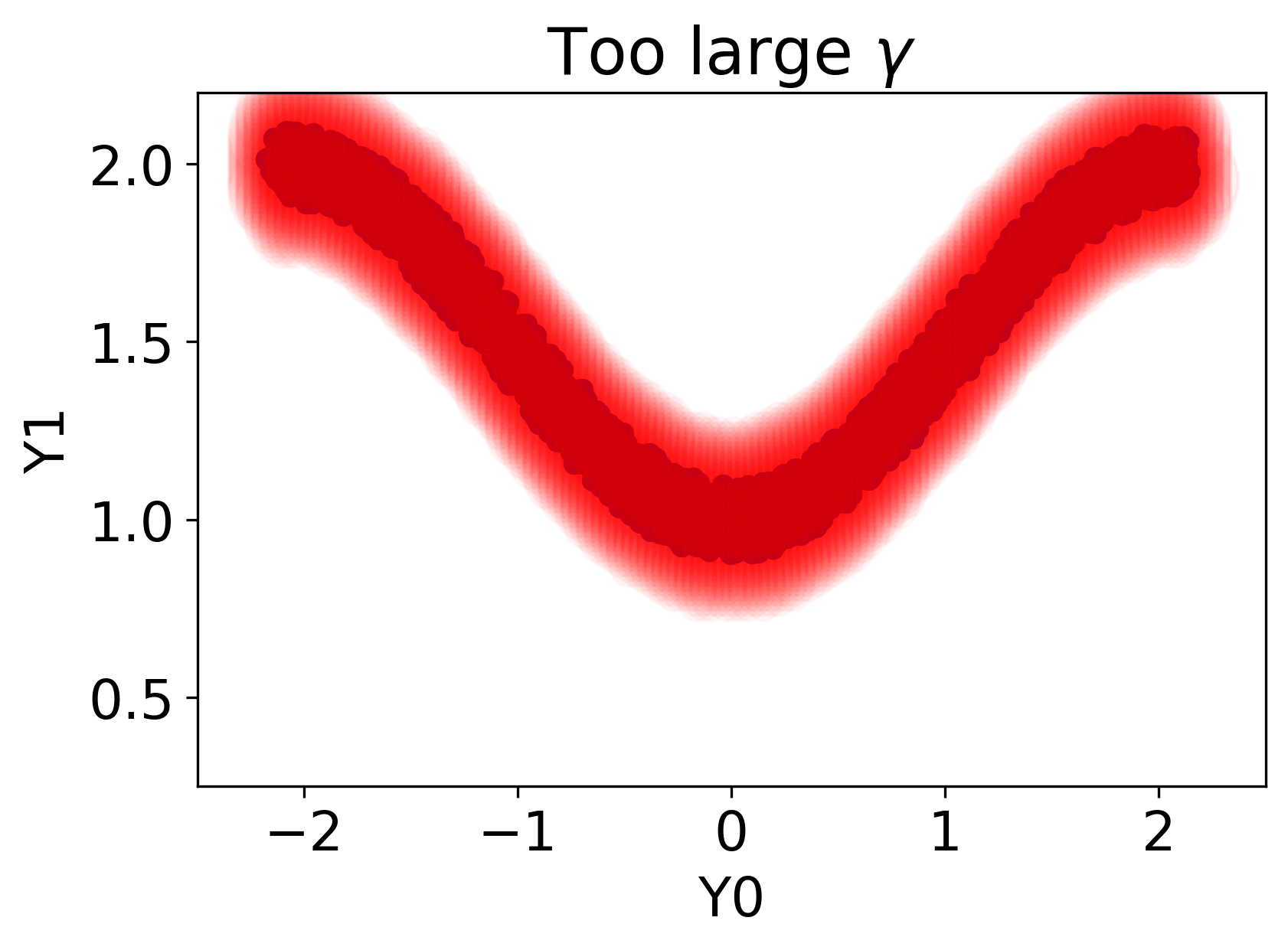}}}

    \end{tabular}%
    }
    \captionsetup{format=hang} \caption{Demonstration of the quantile region under Case~1 (i.e., $\gamma_\textrm{init}$ yields regions of a low coverage rate) for different values of $\gamma$.}

\label{fig:qr_vs_gamma}%
\end{figure}%

\paragraph{Case 2: Too high coverage.}\label{par:shrink_qr_case} This scenario treats the case where $c_\textrm{init} > 1-\alpha$, which is less likely to occur in practice. In this setting, analogously to Case~1, one could decrease $\gamma$ to reduce the coverage rate. This strategy, however, may result in a new region that is composed of many disjoint sub-regions, as explained in Appendix~\ref{sec:cal_shrink_figs}. This construction is undesired and hard to interpret for continuous distributions, such as the one presented in Figure~\ref{fig:syn_data_results}. We therefore alter the scheme in Case~1, and shrink the base region in a different manner. We begin by taking a set of points outside the quantile region, denoted by $R^c_\mathcal{Y}(x)$:
\begin{equation}\label{eq:qr_complement}
R^c_\mathcal{Y}(x) = \left\{ y \in\mathbb{R}^d : \min_{a\in R_\mathcal{Y}(x)}{d(a,y)} > \gamma_\textrm{init} \right\}.
\end{equation}
Next, we say that a point $y$ is inside the quantile region if it is far from its boundaries. Formally, the calibrated quantile region is given by
\begin{equation}\label{eq:large_qr}
S^{\gamma_\textrm{cal}}(x) = \left\{ y\in\mathbb{R}^d : \min_{a\in {R^c_\mathcal{Y}(x)}}{d(a,y)} \geq \gamma_\textrm{cal} \right\},
\end{equation}
where the calibrated threshold parameter, $\gamma_\textrm{cal}$, is defined hereafter. 

\subsection{The Calibration Scheme}\label{sec:conf_scheme}

We now turn to describe how to choose the distance threshold $\gamma$ in a way that guarantees the coverage requirement~\eqref{eq:cov_statement}, by borrowing ideas from conformal prediction. Following the discussion from the previous subsection, we divide the calibration scheme into two cases, depending on the value of $c_\textrm{init}$ which is evaluated on the calibration set.
For $c_\textrm{init} \leq 1-\alpha$ (Case~1), we grow the base quantile region by computing $\gamma_\textrm{cal} > \gamma_\textrm{init}$ as follows:
\begin{align}\label{eq:small_qr_conformity_score}
& E^+_i = \min_{a\in R_\mathcal{Y}(X_i)}{d(a,Y_i)}, \forall i\in \mathcal{I}_2, &\\
& \gamma_{\textrm{cal}} := \ceil*{(n_2+1)(1-\alpha)}\textrm{-th smallest value of } \{E^+_i : i\in \mathcal{I}_2\}, 
\end{align}
where $n_2 = |\mathcal{I}_2|$. The effect of $\gamma$ on the coverage rate is visualized in Figure \ref{fig:cov_vs_gamma}. The figure shows that $\gamma_\textrm{cal}$ is the value for which the empirical marginal coverage rate is equal to the desired level (up to a small correction).
\renewcommand\imagewidth{0.4}
\begin{figure}[htbp]
\setstretch{1.1}
  \centering
  
    \includegraphics[width=\imagewidth\linewidth]{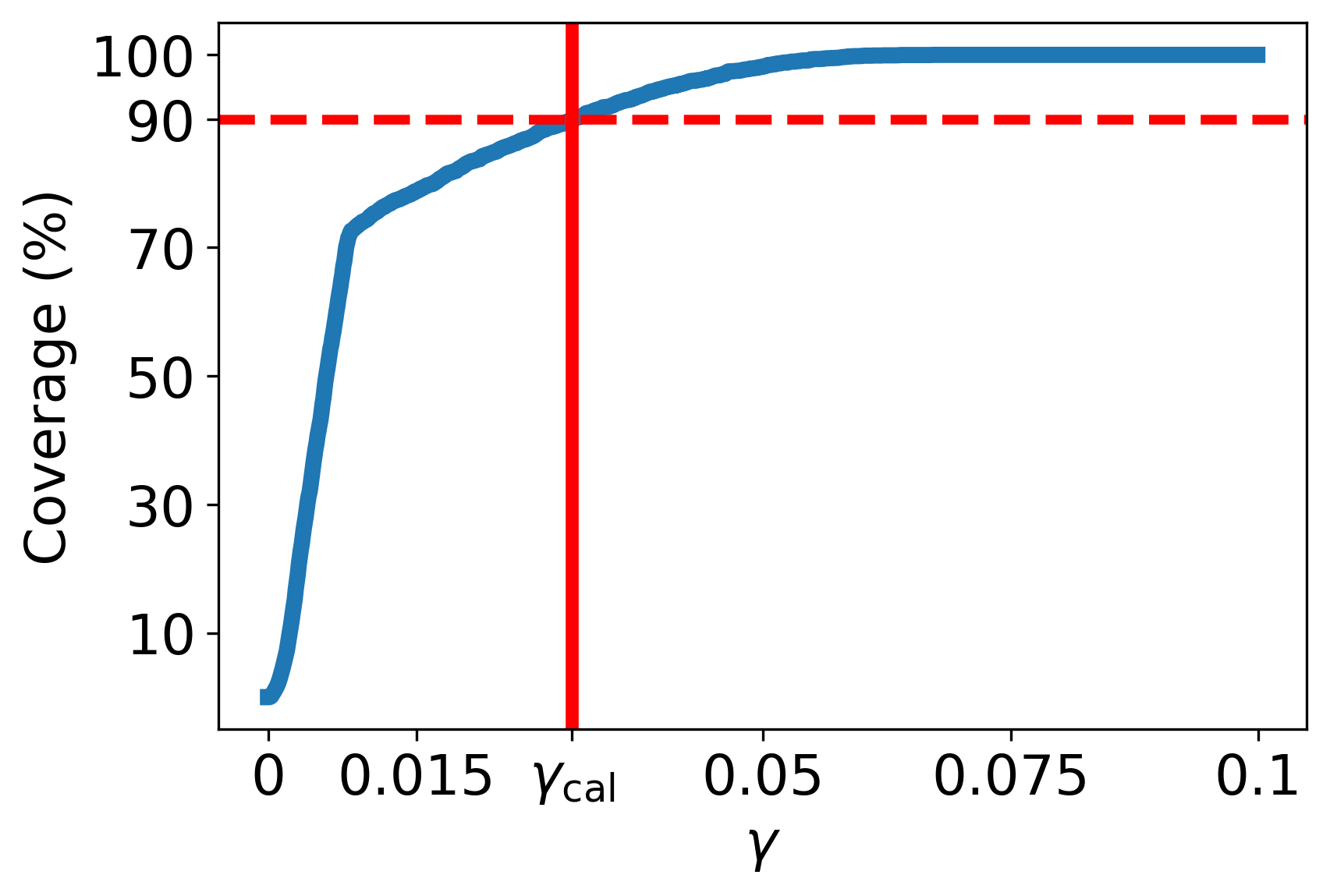}
    
    \captionsetup{format=hang} \caption{Quantile region coverage rate under Case~1 for different values of $\gamma$. The value for which the 90\% marginal coverage rate is attained is $\gamma=\gamma_\textrm{cal}$.}

\label{fig:cov_vs_gamma}%
\end{figure}%
In Case~2, where $c_\textrm{init} > 1-\alpha$, we instead grow the \emph{complement} of the base quantile region by computing $\gamma_{\textrm{cal}}$ as follows:
\begin{align}\label{eq:large_qr_conformity_score}
& E^-_i = \min_{a\in {R^c_\mathcal{Y}(X_i)}}{d(a,Y_i)}, \forall i\in \mathcal{I}_2, &\\
& \gamma_{\textrm{cal}} := {\floor*{(n_2+1)\alpha}}\textrm{-th smallest value of } \{E^-_i : i\in \mathcal{I}_2\}. &
\end{align}
In words, under Case~1 (Case~2), the quantity $E^+_i$ ($E^-_i$) is the distance of $Y_i$ from its closest point \emph{inside} (\emph{outside}) the base quantile region. From a computational perspective, Case~1 is more efficient since usually $| R_\mathcal{Y}(x) | < |R^c_\mathcal{Y}(x)|$. As a result, computing $E_i^+ = \min_{a\in R_\mathcal{Y}(x)}{d(a,Y_i)}$ requires less operations compared to $E_i^-=\min_{a\in {R^c}_\mathcal{Y}(x)}{d(a,Y_i)}$.
\begin{algorithm}[t]
	\captionsetup{format=hang} \caption{Calibrating Multivariate Quantile Regression}
	\label{alg:cal}
	
	\textbf{Input:}
	\begin{algorithmic}
		\State Data $(X_i, Y_i) \in \mathbb{R}^p \cross \mathbb{R}^d, 1 \leq i \leq n$.
		\State Miscoverage level $\alpha \in (0,1)$.
		\State Multivariate quantile regression algorithm, e.g., \texttt{NPDQR} from Section \ref{sec:npdqr}.
		\State An unseen input $X_{n+1}=x$.
	\end{algorithmic}
	
	\textbf{Training time:}
	\begin{algorithmic}
	    \State Randomly split $\{1,...,n\}$ into two disjoint sets $\mathcal{I}_1, \mathcal{I}_2$ of sizes $n_1$ and $n_2 = n-n_1$, respectively.
	    \State Fit the multivariate quantile regression algorithm on the training set $\{(X_i, Y_i) : i \in \mathcal{I}_1\}$.
	\end{algorithmic}
	
	\begin{algorithmic}
		\State Compute the coverage rate of the uncalibrated quantile regions: $c_\textrm{init} \gets \frac{1}{n_2} \left| \{Y_i: Y_i \in S^{\gamma_\textrm{init}}(X_i), i\in\mathcal{I}_2 \} \right|$.\\
		\If{$c_\textrm{init} \leq 1-\alpha$}{
    		\State Compute $E^+_i$ for each $i \in \mathcal{I}_2$, according to Equation~\eqref{eq:small_qr_conformity_score}.
            \State Compute $\gamma_\textrm{cal}$ the $\ceil*{(n_2 + 1)(1-\alpha)}$-th  smallest value of $\{E^+_i\}_{i \in \mathcal{I}_2}$.

        }
        \State \Else{
            \State Compute $E^-_i$ for each $i \in \mathcal{I}_2$, according to Equation~\eqref{eq:large_qr_conformity_score}.
            \State Compute $\gamma_\textrm{cal}$ the $\floor*{(n_2 + 1)\alpha}$-th  smallest value of $\{E^-_i\}_{i \in \mathcal{I}_2}$.

        }
            
	\end{algorithmic}
	
	\textbf{Test time:}
	\begin{algorithmic}
	\State Obtain a base quantile region $R_\mathcal{Y}(X_{n+1}=x)$ using the multivariate quantile regression algorithm.
	\If{$c_\textrm{init} \leq 1-\alpha$}{
        \State Construct the calibrated quantile region $S^{\gamma_\textrm{cal}}$ according to Equation~\eqref{eq:small_qr}.
    }
    \State \Else{

        \State Construct the calibrated quantile region $S^{\gamma_\textrm{cal}}$ according to Equation~\eqref{eq:large_qr}.
    }
	
	\end{algorithmic}
	
	\textbf{Output:}
	\begin{algorithmic}
		\State A quantile region $S^{\gamma_\textrm{cal}}(x)$ for the unseen test point $X_{n+1}=x$.
	\end{algorithmic}
\end{algorithm}
We now state that a quantile region constructed by the above procedure, summarized in Algorithm~\ref{alg:cal}, satisfies the marginal, distribution-free coverage guarantee~\eqref{eq:cov_statement}. The proof is given in Section \ref{sec:thm_2_proof}.
\begin{theorem}\label{thm:cov_guarantee}
If $(X_i, Y_i), i=1,...,n+1$ are exchangeable, then the quantile region $S^{\gamma_\textrm{cal}}(X_{n+1})$ constructed by Algorithm~\ref{alg:cal} satisfies:
\begin{equation*}
\mathbb{P}(Y_{n+1} \in S^{\gamma_\textrm{cal}}(X_{n+1})) \geq 1-\alpha.
\end{equation*}
Moreover, if the distances $E^+_i, E^-_i$ are almost surely distinct, then the quantile region is almost perfectly calibrated:
\begin{equation*}
\mathbb{P}(Y_{n+1} \in S^{\gamma_\textrm{cal}}(X_{n+1})) \leq 1-\alpha + \frac{1}{n_2 +1}.
\end{equation*}
\end{theorem}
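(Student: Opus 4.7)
The plan is to recognize that Theorem~\ref{thm:cov_guarantee} is, at its core, a two-sided split-conformal coverage guarantee, and to reduce both cases to the standard rank argument for exchangeable scores.

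First, I would condition on the training set $\{(X_i,Y_i)\}_{i\in\mathcal{I}_1}$, which fixes the fitted CVAE, the fitted \texttt{NPDQR} model, and hence the map $x\mapsto R_{\mathcal{Y}}(x)$ and the initial threshold $\gamma_{\textrm{init}}$. By the exchangeability assumption, $\{(X_i,Y_i)\}_{i\in\mathcal{I}_2\cup\{n+1\}}$ remain exchangeable conditionally on the training data. Since each conformity score is a deterministic function of $(X_i,Y_i)$ and of the (now fixed) region $R_{\mathcal{Y}}$, the scores $\{E^+_i\}_{i\in\mathcal{I}_2\cup\{n+1\}}$ are exchangeable, and so are $\{E^-_i\}_{i\in\mathcal{I}_2\cup\{n+1\}}$.

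Next, I would rewrite the event of interest in both cases as a one-dimensional threshold event. Under Case~1, directly from \eqref{eq:small_qr},
\begin{equation*}
Y_{n+1}\in S^{\gamma_{\textrm{cal}}}(X_{n+1}) \iff E^+_{n+1}\le \gamma_{\textrm{cal}},
\end{equation*}
and under Case~2, from \eqref{eq:large_qr},
\begin{equation*}
Y_{n+1}\in S^{\gamma_{\textrm{cal}}}(X_{n+1}) \iff E^-_{n+1}\ge \gamma_{\textrm{cal}}.
\end{equation*}
Now apply the standard split-conformal rank argument in each case. In Case~1, $\gamma_{\textrm{cal}}$ is the $\lceil (n_2+1)(1-\alpha)\rceil$-th smallest of $n_2$ calibration scores exchangeable with $E^+_{n+1}$, so the rank of $E^+_{n+1}$ among the combined $n_2+1$ scores is uniform (up to ties), yielding $\mathbb{P}(E^+_{n+1}\le \gamma_{\textrm{cal}}\mid \textrm{Case 1})\ge 1-\alpha$. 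In Case~2, a symmetric lower-tail argument with $k=\lfloor (n_2+1)\alpha\rfloor$ gives $\mathbb{P}(E^-_{n+1}\ge \gamma_{\textrm{cal}}\mid \textrm{Case 2})\ge 1-\alpha$, since the probability that $E^-_{n+1}$ ranks at or below position $k$ in the combined list is at most $k/(n_2+1)\le \alpha$. Combining the two cases by the law of total probability gives the lower bound.

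The main subtlety I would have to be careful about is that the case indicator $B=\mathbb{1}\{c_{\textrm{init}}\le 1-\alpha\}$ is itself random and depends on the calibration data. The resolution is that $B$ is a symmetric function of $\{(X_i,Y_i)\}_{i\in\mathcal{I}_2}$ alone, not involving the test point, so exchangeability between $(X_{n+1},Y_{n+1})$ and the calibration data is preserved after conditioning on $B$, and the rank distribution used in each conformal step is unaffected. For the upper bound, the almost sure distinctness of the $E^+_i$ or $E^-_i$ scores removes ties, so the rank of the test score is exactly uniform on $\{1,\ldots,n_2+1\}$, which upgrades both one-sided inequalities to $1-\alpha+1/(n_2+1)$ by the usual conformal tightness argument.
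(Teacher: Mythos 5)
Your route is the same as the paper's: reduce each case to the equivalence $Y_{n+1}\in S^{\gamma_\textrm{cal}}(X_{n+1})\iff E^{\pm}_{n+1}$ falls on the correct side of an empirical quantile of the calibration scores, then invoke the standard exchangeability/rank argument (the paper writes out only Case~1, defers the details and the upper bound to \cite{CQR}, and declares Case~2 ``similar''). Your treatment of the two thresholds, of the tie-free upper bound, and of conditioning on $\mathcal{I}_1$ to fix the fitted region is all fine and matches the intended argument.

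The one step where you go beyond the paper is also the one step that is wrong. You claim that because the case indicator $B=\mathbbm{1}\{c_\textrm{init}\le 1-\alpha\}$ is a symmetric function of the calibration pairs alone, ``exchangeability between $(X_{n+1},Y_{n+1})$ and the calibration data is preserved after conditioning on $B$, and the rank distribution used in each conformal step is unaffected.'' This is false: conditioning on an event that depends on the calibration points but \emph{not} on the test point treats the $n_2+1$ pairs asymmetrically and destroys their exchangeability. Concretely, with $n_2=1$ and i.i.d.\ uniform scores, conditioning on $\{E_1<1/2\}$ gives $\mathbb{P}(E_{n+1}\le E_1\mid E_1<1/2)=1/4\neq 1/2$, so the conditional rank of the test score is not uniform. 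In general, selecting between two conformal procedures using the same calibration data can inflate the miscoverage up to the sum of the two cases' miscoverage probabilities, so the law-of-total-probability step as you wrote it does not close. To be fair, the paper's own proof silently fixes the case rather than confronting this selection effect; a clean repair is to determine $B$ on data disjoint from the scores used to set $\gamma_\textrm{cal}$ (or to treat $B$ as fixed by the training stage), after which your argument goes through verbatim. As written, though, the justification you give for the key conditioning step is not valid.
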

We pause here to explain the significance of Theorem~\ref{thm:cov_guarantee}. First, the coverage guarantee of the calibration procedure applies for any sample size, and dimension of $(X,Y)$. In addition, once applying this procedure, we guarantee that the calibrated version of any base multivariate quantile regression method (\texttt{DQR}/\texttt{VQR}/\texttt{ST-DQR}) would attain the desired $1-\alpha$ coverage. Therefore, the calibrated methods would differ only in their statistical efficiency, i.e., the area of the constructed quantile region.

\section{Experiments}\label{sec:experiments}

Herein, we systematically quantify the effectiveness of our proposed method (\texttt{ST-DQR}) and compare its performance to existing techniques (\texttt{Na\"ive QR}, \texttt{NPDQR}, and \texttt{VQR}). Turning to the details of our setup, for all methods except for \texttt{VQR}, we apply a deep neural network as a base model for constructing quantile regions with $1-\alpha = 0.9$ coverage level. The \texttt{VQR} method cannot incorporate neural networks in the same way, so we applied the procedure exactly as proposed by \citet{carlier2016vector}; see Appendix~\ref{sec:vqr} for details. We split the data sets (both real and synthetic) into a training set (38.4\%), calibration (25.6\%), validation set (16\%) used for early stopping, and a test set (20\%) to evaluate performance. Then, we normalize the feature vectors and response variables to have a zero mean and unit variance. Appendix~\ref{sec:exp_settings} gives the details about the network architecture, training strategy, and more information about this experimental protocol. The performance metrics (coverage and area, as described below) are averaged over 20 random splits of the data. For our method, we set the dimension of the latent space to $r=3$; see \ref{sec:vae_r} for other choices of this hyper-parameter. In all experiments, we report only the performance of the calibrated quantile regions, since this puts all methods on the same scale. Specifically, \texttt{ST-DQR}, \texttt{NPDQR}, and \texttt{VQR} are calibrated according to Algorithm~\ref{alg:cal} and \texttt{Na\"ive QR} is calibrated as described in Appendix~\ref{sec:naive_setup}. 
Software implementing the proposed method and reproducing our experiments can be found at \url{https://github.com/Shai128/mqr}

We report the following two metrics, evaluated on test data:
\begin{itemize}
    \item \textbf{Coverage}: The percentage of samples that are covered by the quantile region. The coverage of a point is determined as described in Section~\ref{sec:intuition}.
    \item \textbf{Area}: The area of the generated quantile region. To evaluate this metric, we take a grid in space $\mathcal{Y}$, and define the area to be the number of cells that fall inside the quantile region. See more details in Appendix~\ref{sec:grid_size}.
\end{itemize}

\subsection{Synthetic Data Results}\label{sec:syn_data_results}
We return to the synthetic v-shaped data from Section \ref{sec:synthetic_example} and extend it to higher-dimensional settings. In Appendix~\ref{sec:syn_datasets_details} we describe how we generate such data for increased dimensions of $X$ and $Y$.
Furthermore, we explore two settings: in the first, the relationship between the response variables and the covariates is \textbf{linear}, whereas in the second this relationship follows a \textbf{non-linear} model. In both cases the relationship between the elements in the response vector is non-linear, however, the relationship between $Y$ and $X$ can be either linear or non-linear; see Figure~\ref{fig:syn_data}.
We evaluate the four methods described in this paper (\texttt{ST-DQR}, \texttt{Na\"ive QR}, \texttt{NPDQR}, and \texttt{VQR}) using the synthetic data sets, and examine their robustness to non-linearity, a high regressor dimension, and a high dimensional response vector. We find that the \texttt{VQR} method is feasible only for data sets of small dimensions, so we report the results only for those data sets; see more details in Table \ref{tab:vqr_time_memory} in the Appendix~that summarizes \texttt{VQR}'s runtime and memory footprint. For \texttt{NPDQR} and \texttt{ST-DQR}, we estimate a (pre-calibration) directional quantiles of a level higher than the nominal 90\% rate (see Table \ref{tab:syn_coverage_rates_info}), due to the under-coverage problem presented in Section \ref{sec:need_conf}. For \texttt{Na\"ive QR} and \texttt{VQR}, we set the quantile level to be equal to the target $1-\alpha=90\%$ rate.

Table \ref{tab:syn_data_results} displays the coverage rates and areas of the constructed quantile regions. Observe that all methods attain the nominal coverage level, a consequence of applying our proposed calibration procedure from Algorithm~\ref{alg:cal}. However, the regions constructed by different methods are different in size, as presented in the same table. Here, our proposed method \texttt{ST-DQR} constructs quantile regions that are substantially smaller compared to all other techniques. These results are anticipated, since \texttt{Na\"ive QR} and \texttt{NPDQR} are restricted to produce convex quantile regions, forcing the two to cover irrelevant areas, whereas our method does not have this limitation. In addition, since the linearity assumption of \texttt{VQR} is not satisfied in the non-linear setting, the quantile regions it produces are unnecessarily large. Finally, following Figure \ref{fig:syn_data_results}, we can see the advantages of our non-parametric method: it produces a quantile region of an arbitrary shape, estimating well the conditional distribution of $Y \mid X$ in contrast to the competitive techniques. Table~\ref{tab:syn_data_results} also reports the performance metrics that correspond to data sets with high-dimensional features. These results indicate that \texttt{VQR} is infeasible when the dimension of the feature vector is not small enough, while other methods (including ours) are robust to high-dimensional regressors; see Table \ref{tab:vqr_time_memory} in the Appendix~for more details.

In the case where the response is a four-dimensional vector, the differences between our method and \texttt{Na\"ive QR}/\texttt{NPDQR} become more significant. (Once again, \texttt{VQR} is infeasible in this setting.) Here, \texttt{Na\"ive QR} and \texttt{NPDQR} produce quantile regions with an area larger by a factor of 180-440 than regions constructed by our method. This limitation of the standard methods to handle a high dimensional response is also visualized in Figure \ref{fig:area_vs_d}. One explanation for the substantial improvement that our method achieves is this: while the standard methods work in a four-dimensional space (the dimension of $Y$), our method works in a lower-dimensional space (in this case, the dimension of $Z$ is three), so it can achieve a higher coverage rate for the same directional quantile level; recall Figure \ref{fig:dqr_cov}. Therefore, the calibration applied to our method is milder and does not affect much the base quantile region \eqref{eq:base_qr}. By contrast, the uncalibrated \texttt{NPDQR} has an extremely low coverage rate and therefore requires an aggressive calibration. That is, it must smooth out the original fit, making it more like a round ball and less adaptive to the test point. 

\begin{table}[!htb]
    \centering
    \setstretch{1.4}

    \begin{minipage}{.5\linewidth}
        \captionsetup{format=hang} \caption*{Coverage rate}
        \centering
  \scalebox{0.77}{
      \centering

    \begin{tabular}{ccc|cccc}

    \toprule[1.1pt]
    \textbf{Setting} &  $\boldsymbol{d}$ & $\boldsymbol{p}$ & \textbf{\texttt{ST-DQR}} &  \textbf{\texttt{Na\"ive QR}} & \textbf{\texttt{NPDQR}} & \textbf{\texttt{VQR}} \\
    \midrule
linear    &          2 &          1 &               89.943 &         90.059 &         90.041 &       89.755 \\
linear    &          2 &         10 &               89.926 &         89.789 &         90.131 &       90.065 \\
linear    &          2 &         50 &                89.91 &          89.99 &          89.96 &            - \\
linear    &          2 &        100 &               89.963 &         89.993 &         90.003 &            - \\
nonlinear &          2 &          1 &               90.126 &         90.078 &         90.165 &        90.13 \\
nonlinear &          3 &          1 &               90.165 &         90.114 &         90.021 &       90.156 \\
nonlinear &          3 &         10 &               89.991 &         89.881 &         90.051 &            - \\
nonlinear &          4 &          1 &               90.031 &         90.175 &         89.955 &            - \\
nonlinear &          4 &         10 &               89.792 &         89.841 &         89.956 &            - \\
    \bottomrule[1.1pt]
    
    \end{tabular}%
    }
    \end{minipage}%
    \begin{minipage}{.55\linewidth}
      \centering
        \captionsetup{format=hang} \caption*{Relative area of quantile regions} 
  \scalebox{0.77}{
      \centering
    \begin{tabular}{cccc}

    \toprule[1.1pt]
    \textbf{\texttt{ST-DQR}} &  \textbf{\texttt{Na\"ive QR}} & \textbf{\texttt{NPDQR}} & \textbf{\texttt{VQR}} \\
    \midrule
    1           &           3.88 &           3.25 &        1.476 \\
    1           &          4.372 &          4.222 &        1.264 \\
    1           &          4.573 &          3.926 &            - \\
    1           &          3.922 &          3.406 &            - \\
    1           &          3.369 &          2.934 &         2.73 \\
    1           &         24.611 &         21.396 &        8.161 \\
    1           &          35.21 &         27.897 &            - \\
    1           &         72.037 &        217.172 &            - \\
    1           &        183.672 &        440.817 &            - \\
    \bottomrule[1.1pt]
    
    \end{tabular}%

    }
    \end{minipage} 
    
    \captionsetup{format=hang} \caption{Simulated data experiments. The standard errors are given in Appendix~\ref{sec:syn_std}. See Appendix~\ref{sec:syn_datasets_details} for more details about the synthetic data sets.}
    \label{tab:syn_data_results}
    
\end{table}

\renewcommand\imagewidth{0.5}
\begin{figure}[htbp]
\setstretch{1.1}
  \centering

    {\centering{\includegraphics[width=\imagewidth\linewidth]{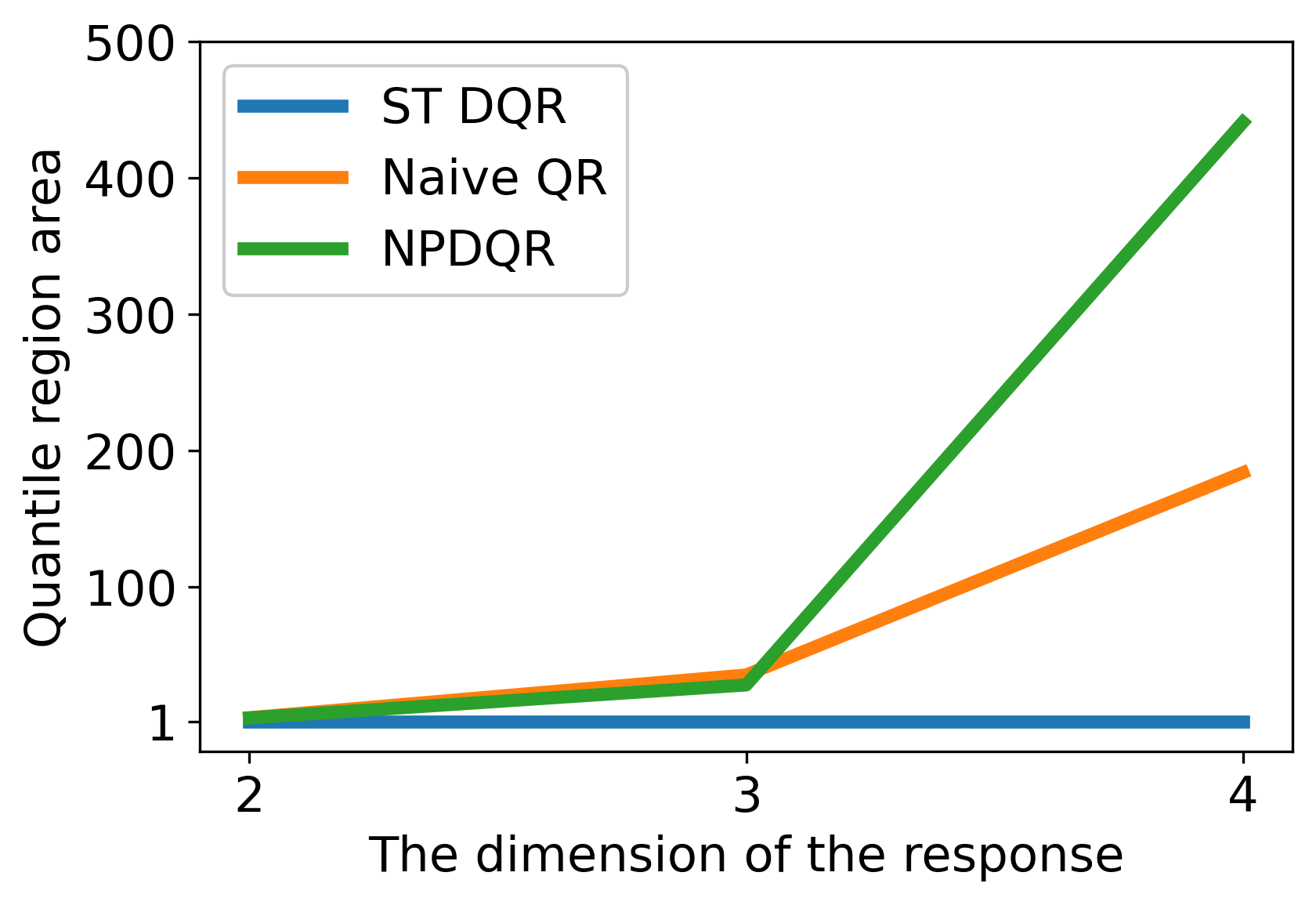}}}

    \captionsetup{format=hang} \caption{Quantile region area vs. the dimension of the response. The area is scaled by the area of the quantile region constructed by our method, averaged on 20 random splits of the data. The data set used is the non-linear synthetic data with $p=10$.}
    
\label{fig:area_vs_d}%
\end{figure}%

\subsection{Real Data Results}\label{sec:real_data_results}
Next, we compare the performance of the proposed \texttt{ST-DQR} method to \texttt{NPDQR}, and \texttt{Na\"ive QR} on six benchmarks data sets as in \citep{CQR, sesia2020comparison}: blog feedback (blog\_data), physicochemical properties of protein tertiary structure (bio), House Sales in King County, USA (house), and medical expenditure panel survey number 19-21 (meps\_19, meps\_20,and meps\_21). We modify each data set to have a 2-dimensional response as described in Appendix~\ref{sec:real_datasets_details}, which also provides additional information about each data set. We follow the experimental protocol and training strategy described in Section~\ref{sec:syn_data_results}. Specifically, we randomly split each data set into disjoint training (38.4\%), calibration (25.6\%), validation (16\%), and testing sets (20\%), and further normalized the feature vector and response variables to have a zero mean and a unit variance each. Due to the under-coverage problem of \texttt{DQR} presented in Section \ref{sec:need_conf}, we estimate a directional quantile of a level higher than the nominal 90\% rate for \texttt{NPDQR} and \texttt{ST-DQR}; see Table \ref{tab:real_coverage_rates_info}. For \texttt{Na\"ive QR} and \texttt{VQR}, we set the quantile level to be equal to the target $1-\alpha=90\%$ rate.

Table~\ref{tab:real_data_results} summarizes the performance metrics, showing that all calibrated methods consistently attain the nominal coverage rate, as guaranteed by Theorem \ref{thm:cov_guarantee}. In addition, the same table indicates that the regions constructed by our method are significantly smaller than the ones produces by the competitive methods. Similar to the synthetic case, \texttt{VQR} is infeasible to deploy and thus omitted from that table. Instead, in Table~\ref{tab:reduced_real_datasets_results} we report the results on a smaller version of the data sets, in which each feature vector is reduced to dimension 10 using PCA so that \texttt{VQR} is feasible. The table shows that even for these modified versions of the data sets, our method outperforms all the others.

We also test the quality of the constructed regions on sub-populations of the data as follows.
We split the test set into three disjoint clusters $c_0, c_1, c_2$, where each contains at least 20\% of the data. The split is done using the K-means algorithm \citep{kmeans}. For illustration purposes, we define the quantile region of a cluster $c$ as
\begin{equation*}
S^{\gamma_\textrm{cal}}(c) = \bigcup\limits_{x \in c} S^{\gamma_\textrm{cal}}(x).
\end{equation*}
Figure \ref{fig:bio_data_results} displays the quantile region constructed by each method for each of the three clusters for the {bio} data set. The figure shows that the regions constructed by our method reflect the distribution of $Y \mid X\in{c}$ better than other methods. Appendix~\ref{sec:additional_real_figs} presents the quantile regions constructed on the {house} data set, leading to a similar conclusion.

\begin{table}[!htb]
    \centering

    \setstretch{1.4}
    \begin{minipage}{.5\linewidth}
        \captionsetup{format=hang} \caption*{Coverage rate}
        \centering
  \scalebox{0.8}{
      \centering

    \begin{tabular}{c|ccc}

    \toprule[1.1pt]
    
    \textbf{Data Set name} & \textbf{\texttt{ST-DQR}} & \textbf{\texttt{Na\"ive QR}} & \textbf{\texttt{NPDQR}}  \\

    \midrule
\textbf{bio}       &                 90.0 &         90.002 &         89.892 \\
\textbf{house}     &               90.157 &         89.978 &         89.876 \\
\textbf{blog\_data} &               90.145 &         90.064 &         90.016 \\
\textbf{meps\_19}   &               90.144 &         89.892 &         89.865 \\
\textbf{meps\_20}   &               89.997 &           90.0 &         89.913 \\
\textbf{meps\_21}   &               89.899 &         89.879 &         89.676 \\

    \bottomrule[1.1pt]
    
    \end{tabular}%
    }
    \end{minipage}%
    \begin{minipage}{.53\linewidth}
      \centering
        \captionsetup{format=hang} \caption*{Relative area of quantile regions}
  \scalebox{0.8}{
      \centering
    \begin{tabular}{ccc}

    \toprule[1.1pt]
    
     \textbf{\texttt{ST-DQR}} & \textbf{\texttt{Na\"ive QR}} & \textbf{\texttt{NPDQR}}  \\

    \midrule
1           &          1.223 &          1.222 \\
1           &          1.168 &          1.149 \\
1           &          1.551 &          1.821 \\
1           &          2.215 &          2.169 \\
1           &           2.27 &          2.209 \\
1           &          2.191 &          2.212 \\

    \bottomrule[1.1pt]
    
    \end{tabular}%

    }
    \end{minipage} 
        \captionsetup{format=hang} \caption{Real data experiments. The standard errors are given in Appendix~\ref{sec:real_std}. See Appendix~\ref{sec:real_datasets_details} for more details about the real data sets.}
            \label{tab:real_data_results}

\end{table}

\begin{table}[!htb]
    \centering

    \setstretch{1.4}
    \begin{minipage}{.5\linewidth}
        \captionsetup{format=hang} \caption*{Coverage rate}
        \centering
  \scalebox{0.8}{
      \centering
    \begin{tabular}{c|cccc}

    \toprule[1.1pt]
    
    \textbf{Data Set name} & \textbf{\texttt{ST-DQR}} & \textbf{\texttt{Na\"ive QR}} & \textbf{\texttt{NPDQR}} & \textbf{\texttt{VQR}} \\

    \midrule
\textbf{bio}       &                 90.0 &         90.002 &         89.892 &        89.87 \\
\textbf{house}     &               89.923 &         90.094 &         90.067 &       90.119 \\
\textbf{blog\_data} &               90.078 &         90.035 &         89.823 &       90.034 \\
\textbf{meps\_19}   &               90.179 &          90.16 &         89.919 &       90.149 \\
\textbf{meps\_20}   &               90.087 &         89.925 &         90.036 &       90.134 \\
\textbf{meps\_21}   &               90.061 &         89.957 &         89.965 &       89.887 \\

    \bottomrule[1.1pt]
    
    \end{tabular}%
    }
    \end{minipage}%
    \begin{minipage}{.6\linewidth}
      \centering
        \captionsetup{format=hang} \caption*{Relative area of quantile regions}
  \scalebox{0.8}{
      \centering
    \begin{tabular}{cccc}

    \toprule[1.1pt]
    
     \textbf{\texttt{ST-DQR}} &  \textbf{\texttt{Na\"ive QR}} & \textbf{\texttt{NPDQR}} & \textbf{\texttt{VQR}} \\

    \midrule
1           &          1.223 &          1.222 &        1.591 \\
1           &          1.501 &          1.425 &        1.359 \\
1           &          1.744 &           1.95 &        1.658 \\
1           &           2.82 &          1.527 &        1.078 \\
1           &          2.761 &          1.488 &          1.1 \\
1           &          2.848 &          1.538 &        1.061 \\
    \bottomrule[1.1pt]
    
    \end{tabular}%

    }
    \end{minipage} 
        \captionsetup{format=hang} \caption{Real data experiments. All feature vectors were reduced to dimension 10 using PCA. The standard errors are given in Appendix~\ref{sec:real_std}.}
            \label{tab:reduced_real_datasets_results}

\end{table}

\renewcommand\imagewidth{0.25}
\renewcommand\texthspace{1.1cm}
\newcommand\coveragetexthspace{1.25cm}

\begin{figure}[htbp]
\setstretch{1.1}
  \centering
  
    \scalebox{1.}{
    \begin{tabular}{cccc}
    \multicolumn{1}{c}{\textbf{Method}} & \multicolumn{3}{c}{\textbf{Quantile region}}\\ 
    
    {} & {\parbox{\imagewidth\linewidth} {\quad \hspace{\texthspace} $c=0$}}  & {\parbox{\imagewidth\linewidth} {\quad \hspace{\texthspace} $c=1$}} &  {\parbox{\imagewidth\linewidth} {\quad \hspace{\texthspace} $c=2$}}   \\ \\
    
    \texttt{Na\"ive QR} & {\centering{\rowincludegraphics[width=\imagewidth\linewidth]{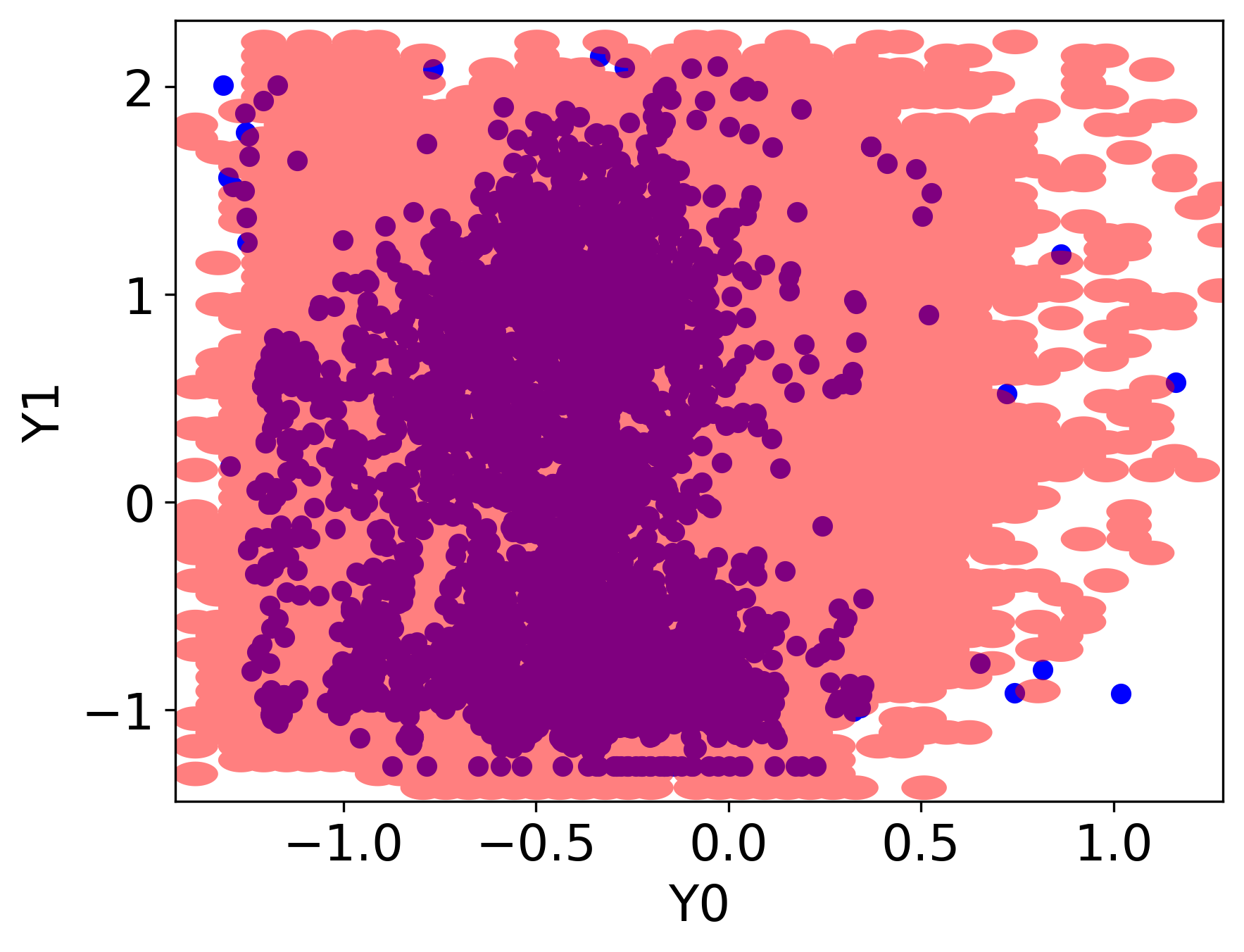}}} & {\centering{\rowincludegraphics[width=\imagewidth\linewidth]{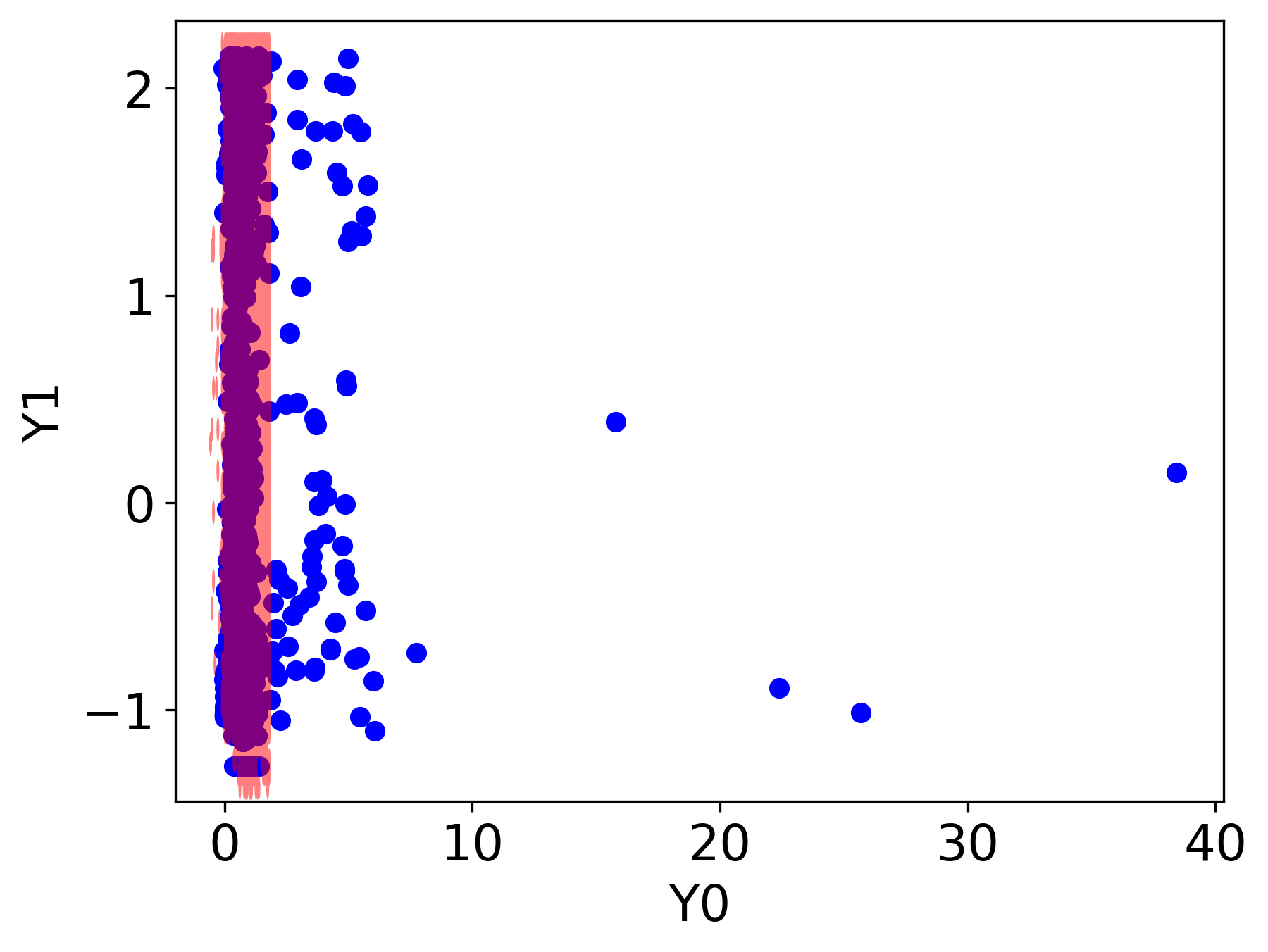}}} & {\centering{\rowincludegraphics[width=\imagewidth\linewidth]{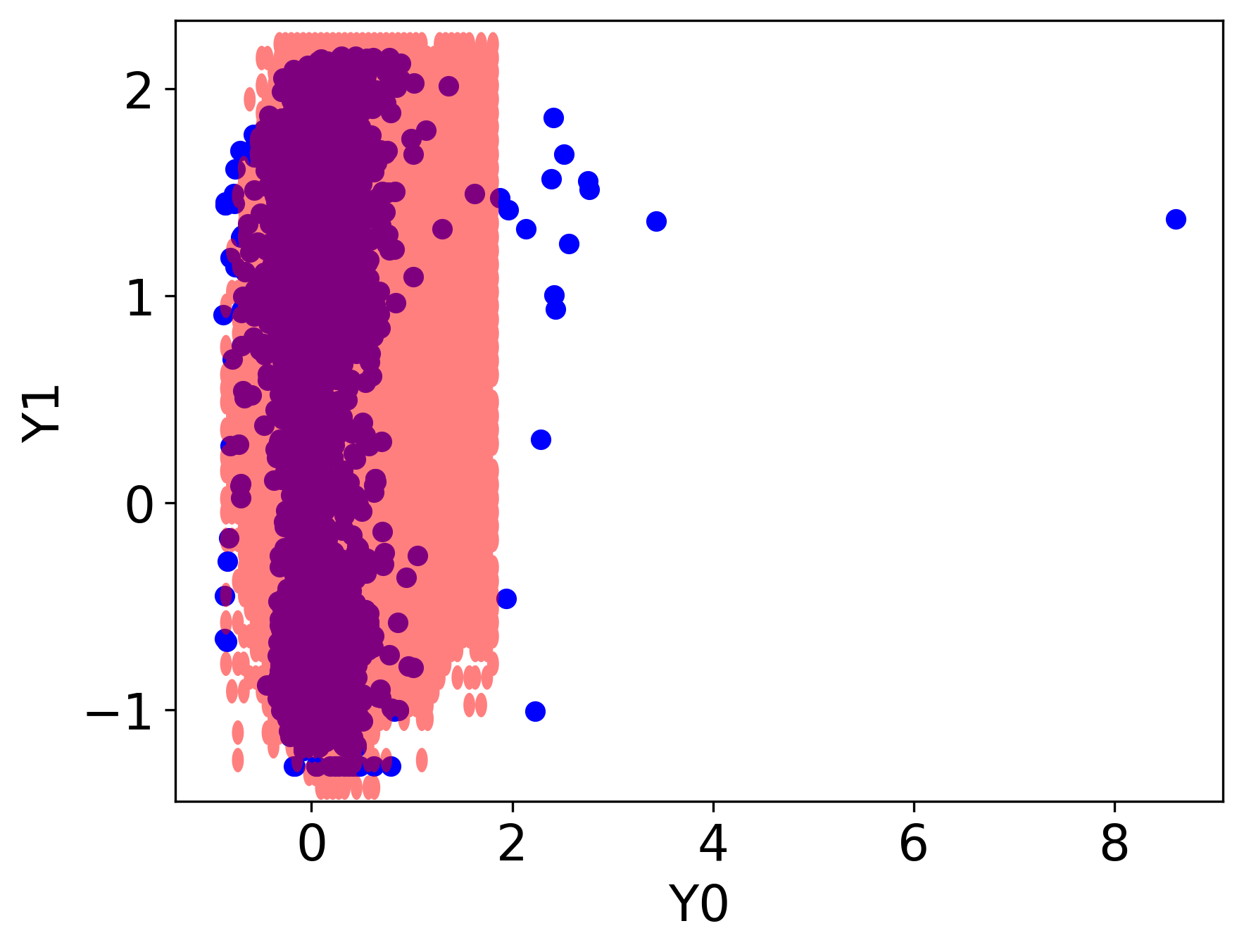}}} \\
            Coverage &{\parbox{\imagewidth\linewidth} {\quad \hspace{\coveragetexthspace} 91.80\% }}& {\parbox{\imagewidth\linewidth} {\quad \hspace{\coveragetexthspace} 91.23\% }} & {\parbox{\imagewidth\linewidth} {\quad \hspace{\coveragetexthspace} 90.65\% }} \\
    \\
    
    \texttt{NPDQR} & {\centering{\rowincludegraphics[width=\imagewidth\linewidth]{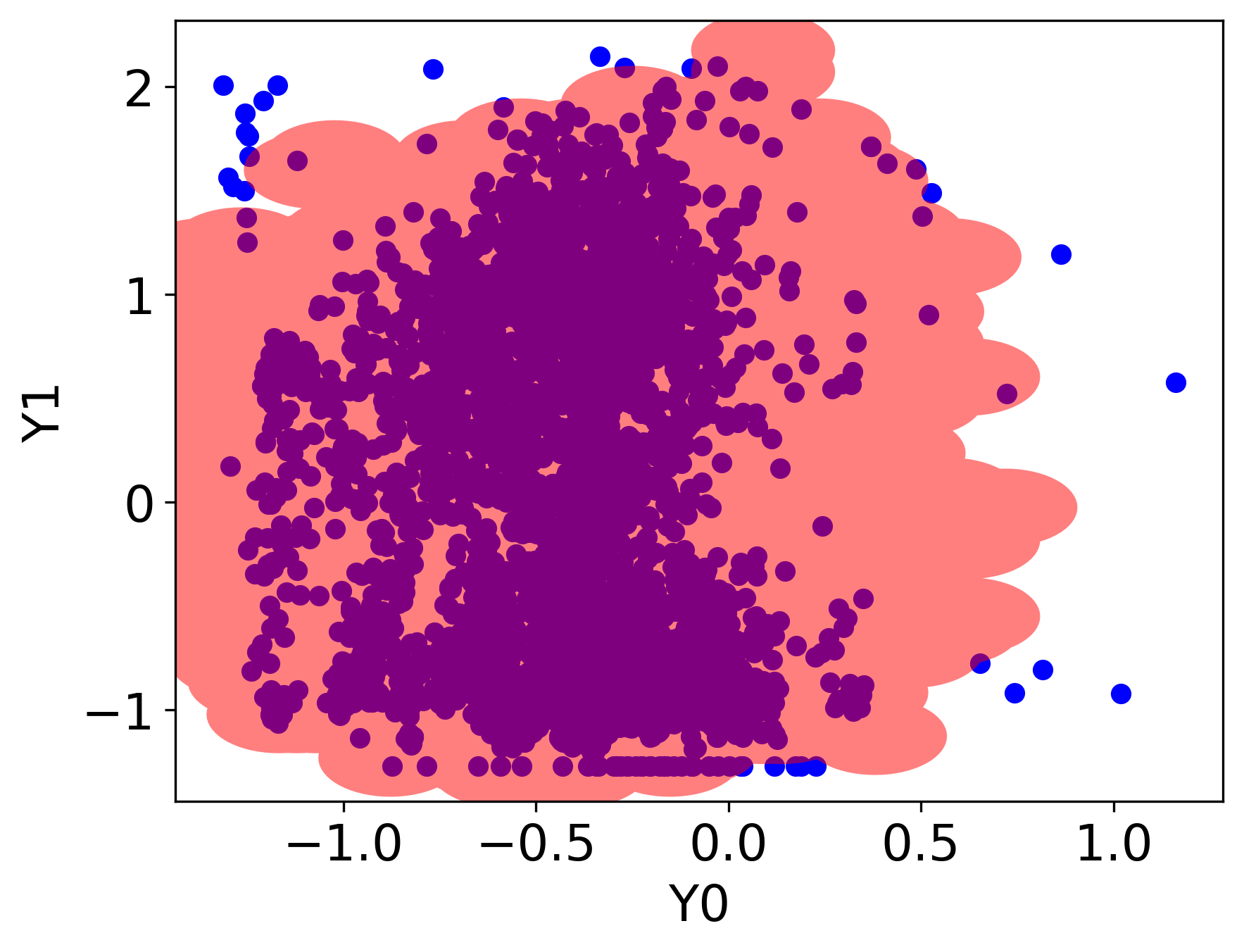}}} & 
    {\centering{\rowincludegraphics[width=\imagewidth\linewidth]{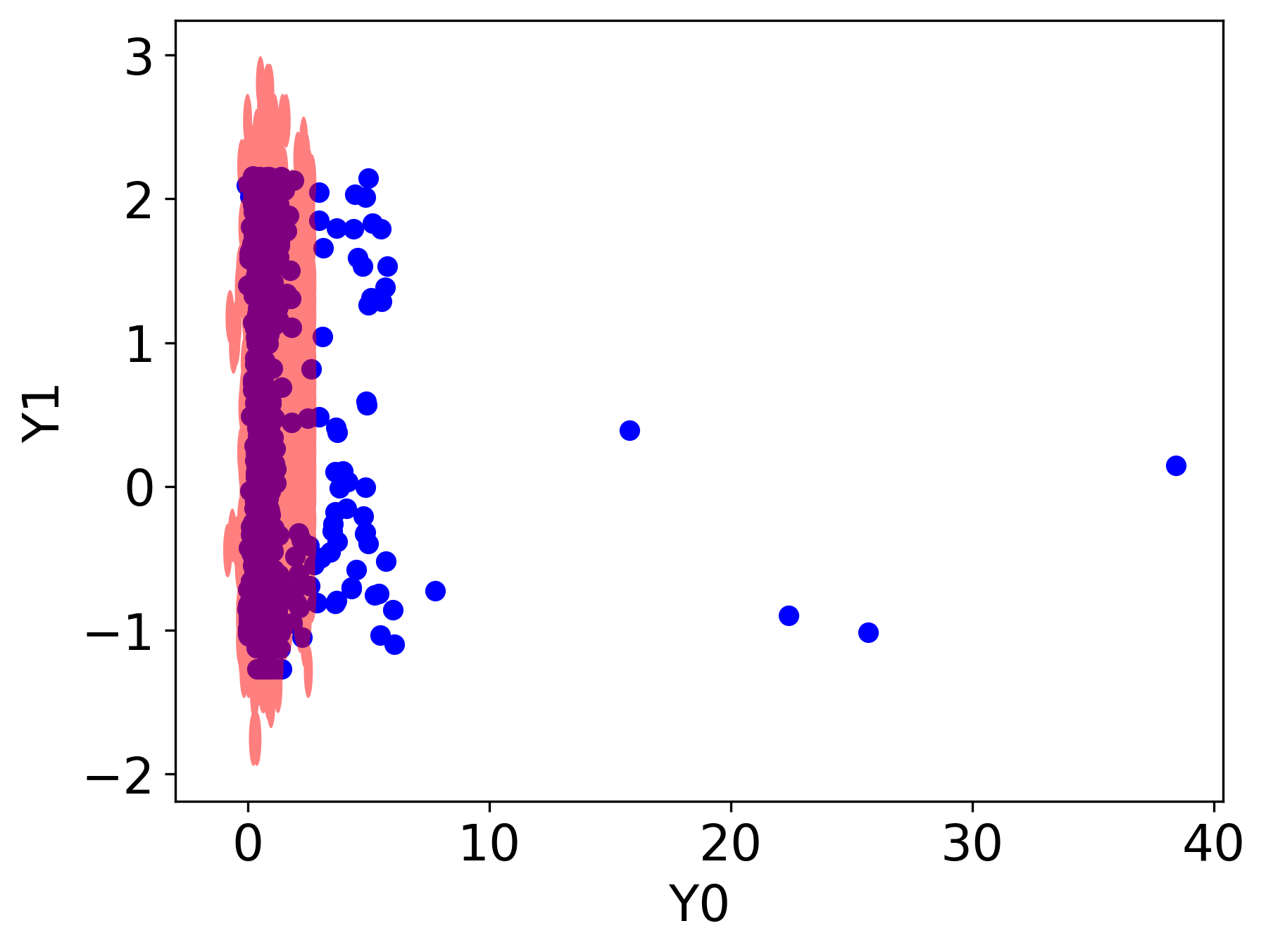}}} & 
    {\centering{\rowincludegraphics[width=\imagewidth\linewidth]{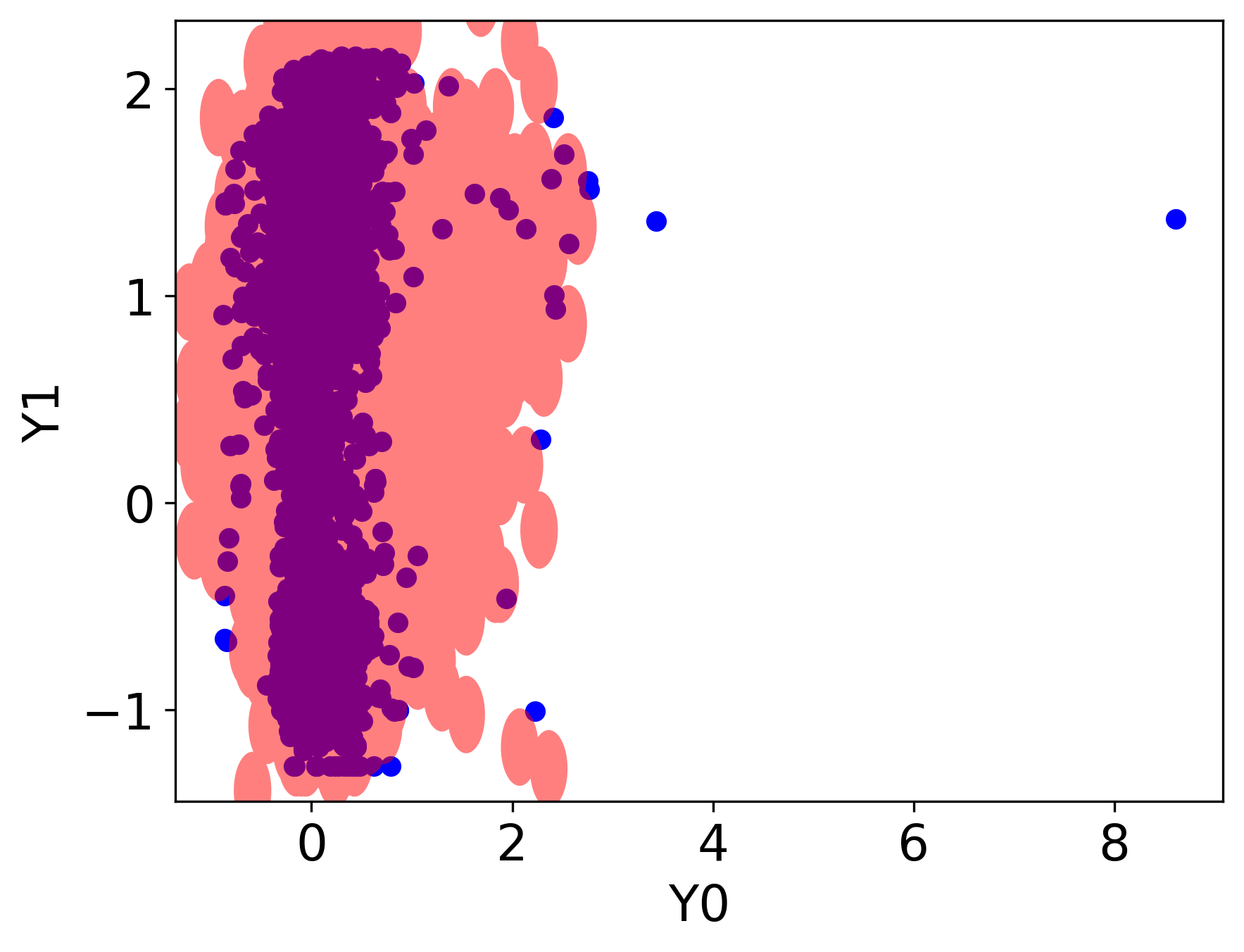}}} \\
            Coverage &{\parbox{\imagewidth\linewidth} {\quad \hspace{\coveragetexthspace} 89.85\% }}& {\parbox{\imagewidth\linewidth} {\quad \hspace{\coveragetexthspace} 89.25\% }} & {\parbox{\imagewidth\linewidth} {\quad \hspace{\coveragetexthspace} 88.70\% }} \\
    \\
    
    \texttt{VQR} & {\centering{\rowincludegraphics[width=\imagewidth\linewidth]{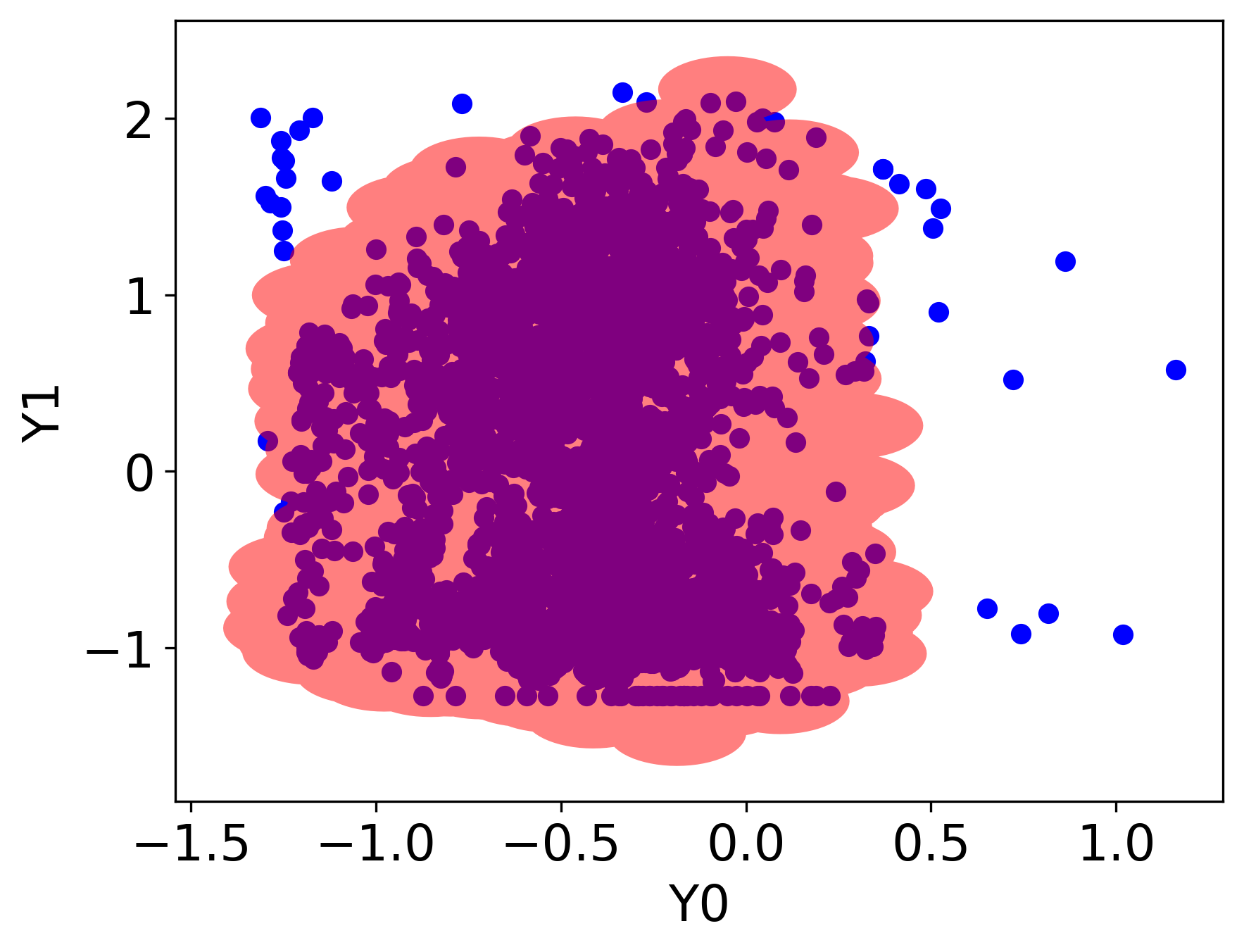}}} &
    {\centering{\rowincludegraphics[width=\imagewidth\linewidth]{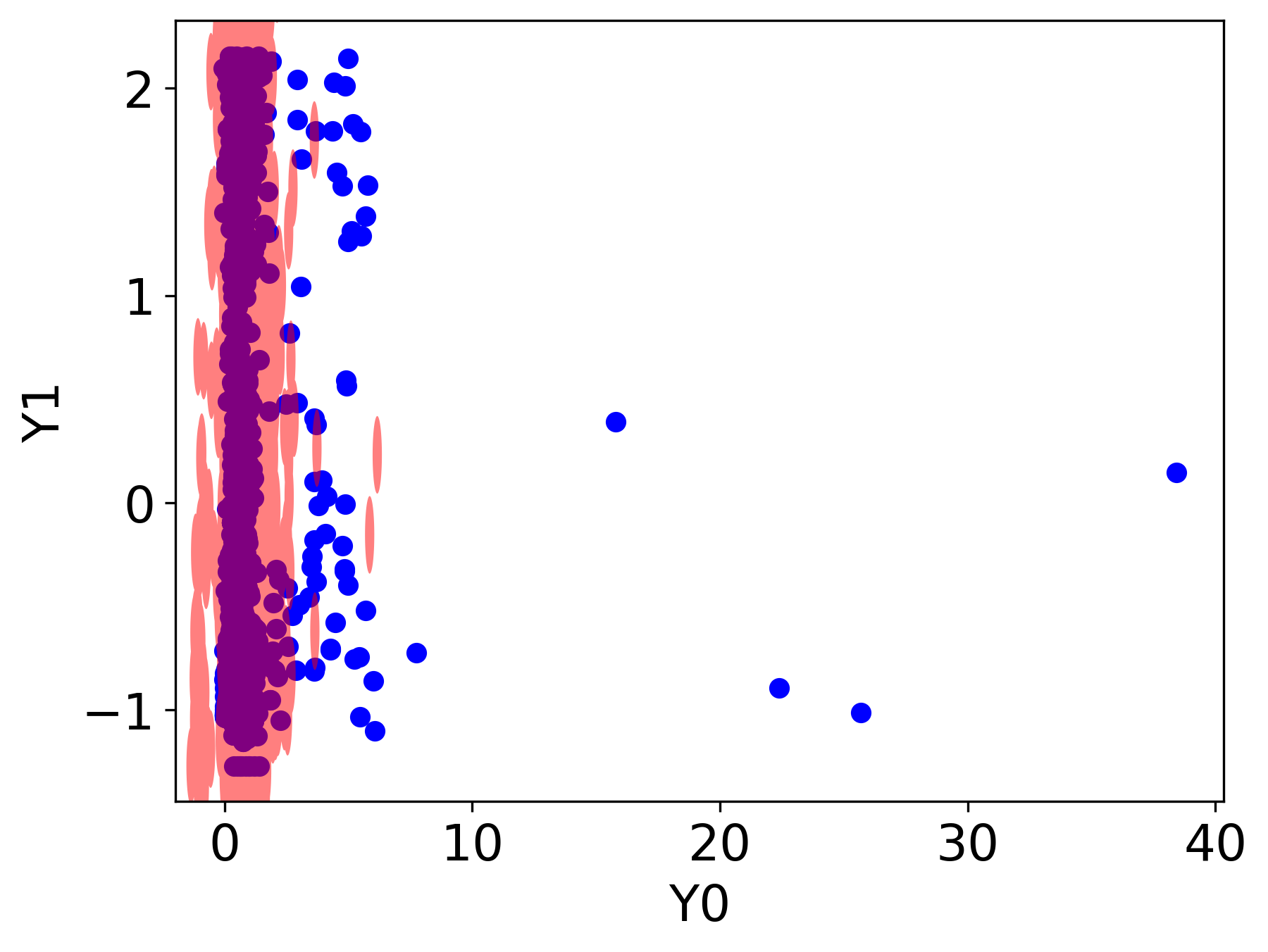}}} & 
    {\centering{\rowincludegraphics[width=\imagewidth\linewidth]{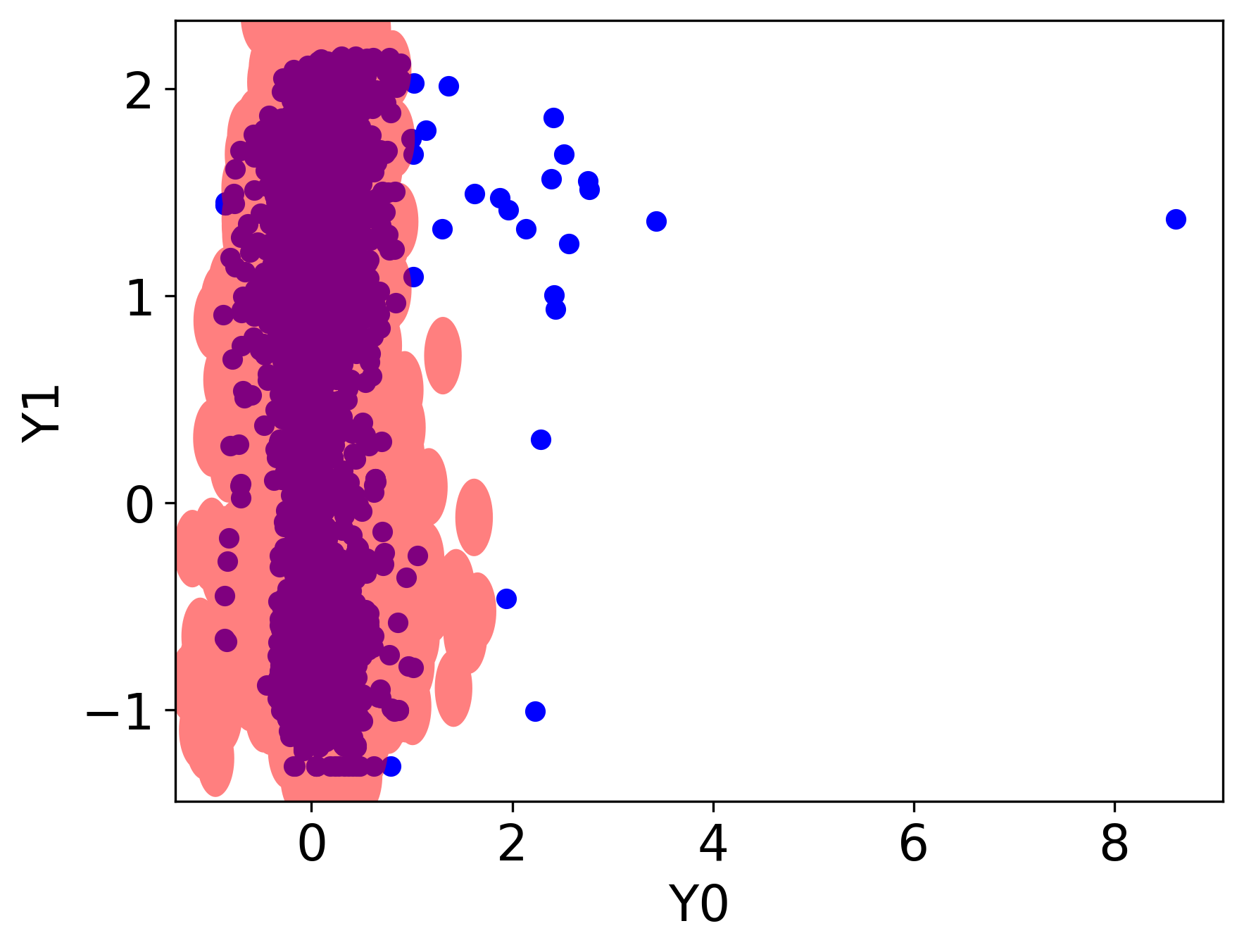}}} \\
            Coverage &{\parbox{\imagewidth\linewidth} {\quad \hspace{\coveragetexthspace} 89.60\% }}& {\parbox{\imagewidth\linewidth} {\quad \hspace{\coveragetexthspace} 88.70\% }} & {\parbox{\imagewidth\linewidth} {\quad \hspace{\coveragetexthspace} 88.84\% }} \\
    \\

    \texttt{ST-DQR} & {\centering{\rowincludegraphics[width=\imagewidth\linewidth]{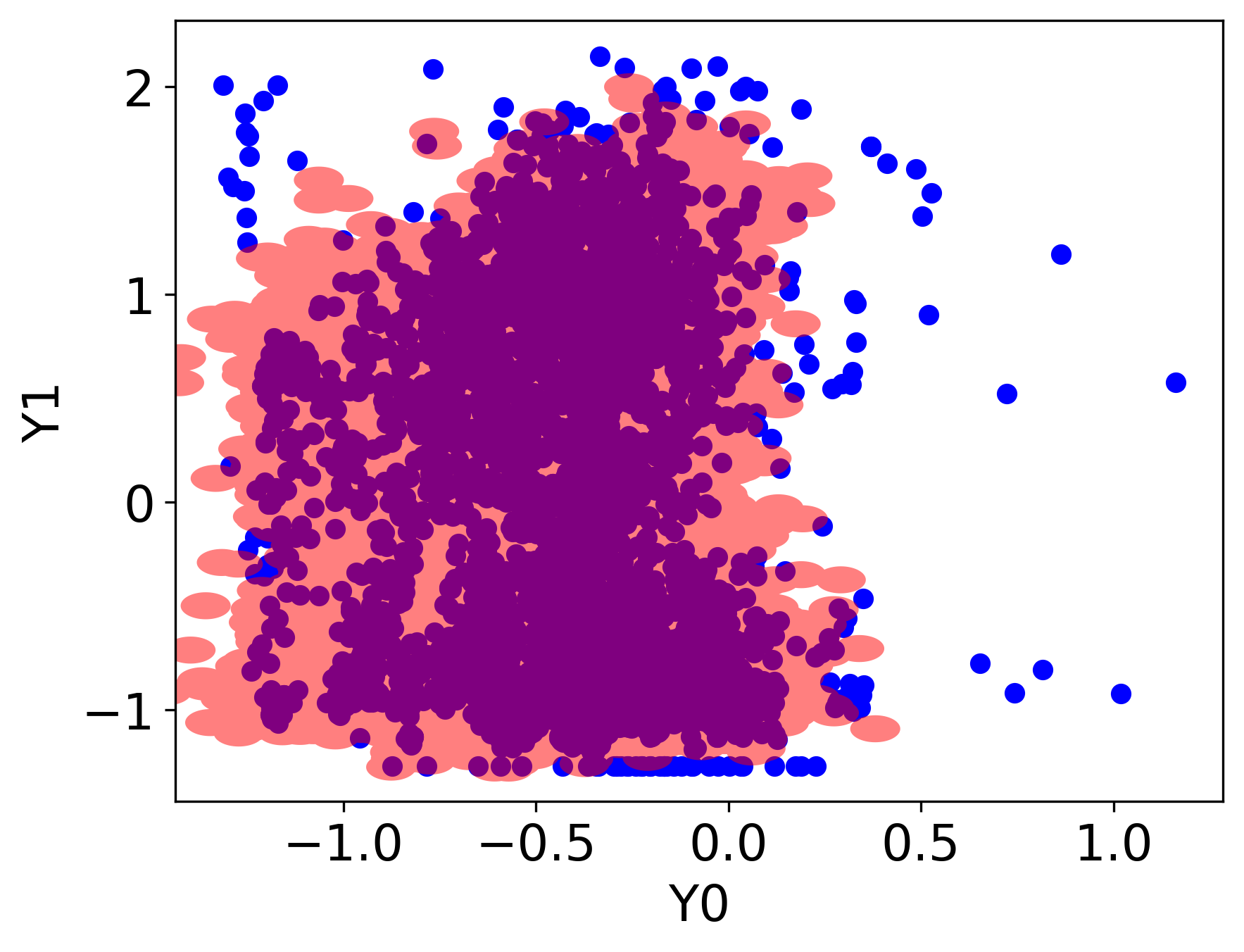}}} &
    {\centering{\rowincludegraphics[width=\imagewidth\linewidth]{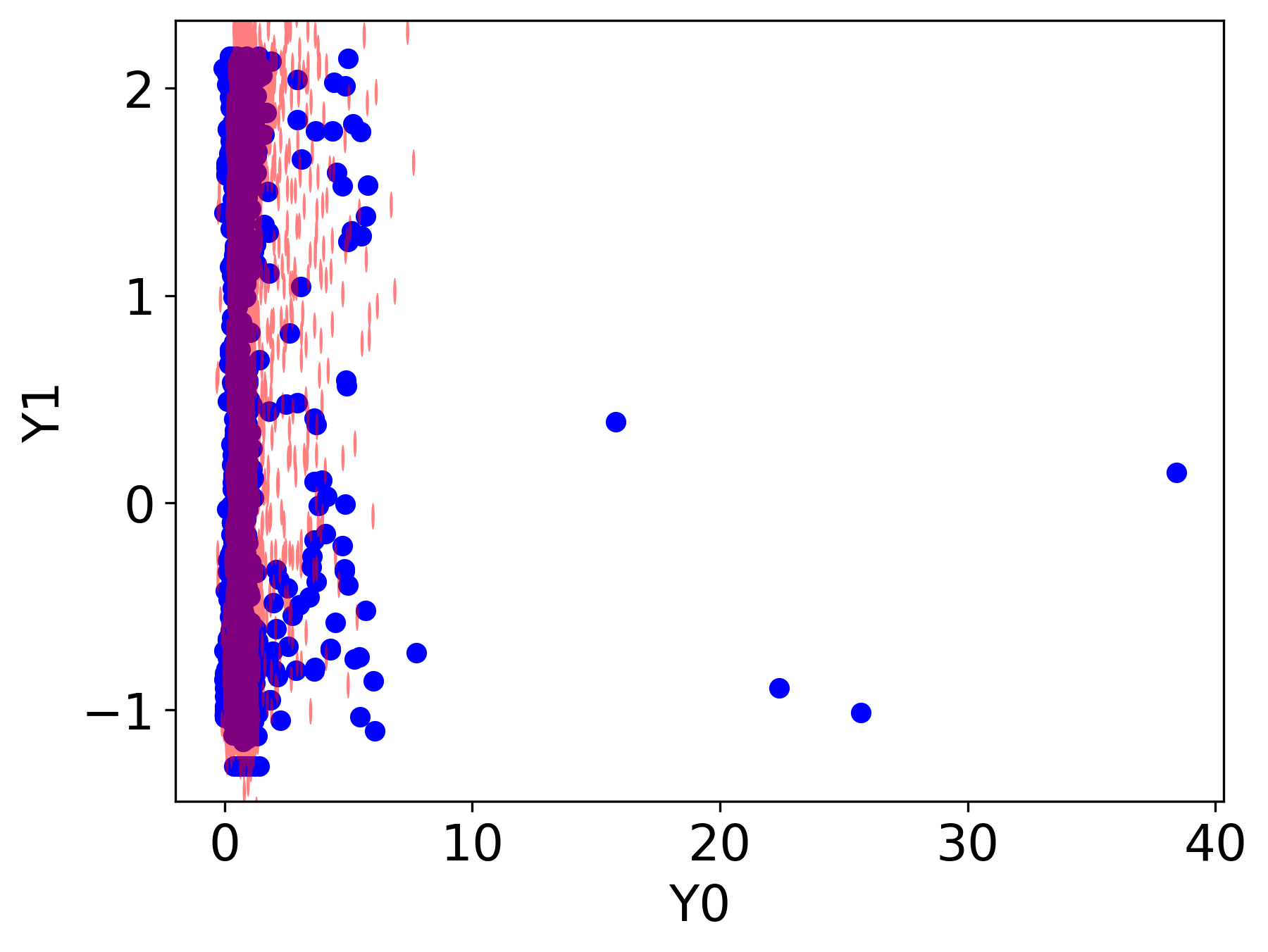}}} & 
    {\centering{\rowincludegraphics[width=\imagewidth\linewidth]{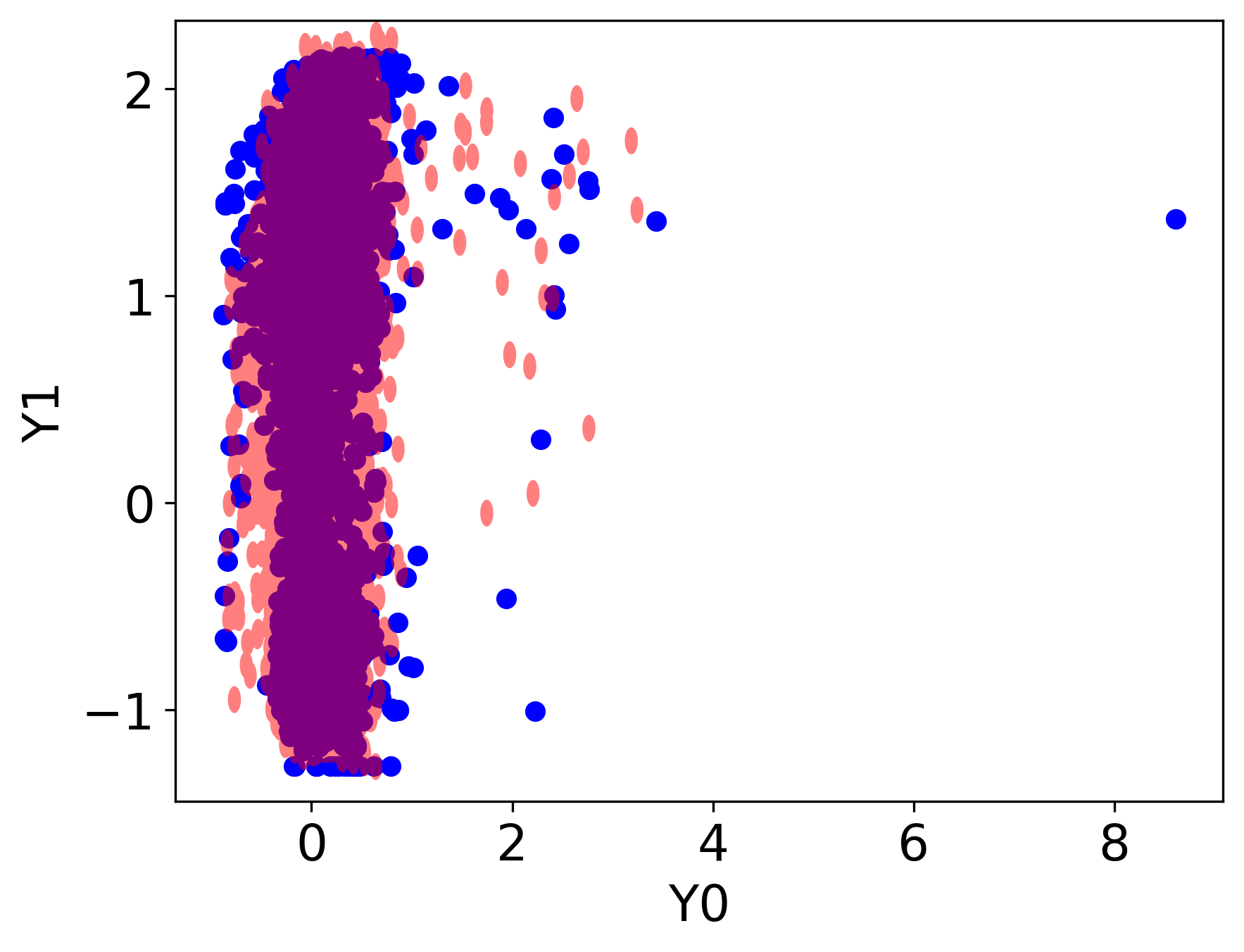}}} \\
            Coverage &{\parbox{\imagewidth\linewidth} {\quad \hspace{\coveragetexthspace} 90.40\% }}& {\parbox{\imagewidth\linewidth} {\quad \hspace{\coveragetexthspace} 90.60\% }} & {\parbox{\imagewidth\linewidth} {\quad \hspace{\coveragetexthspace} 88.78\% }} \\

    \end{tabular}%
    }
    \captionsetup{format=hang} \caption{Quantile regions constructed for Bio data set. The regions were obtained by each of the methods: \texttt{Na\"ive QR}, \texttt{NPDQR}, \texttt{VQR}, and our \texttt{ST-DQR}. In this data set, $Y_0$ and $Y_1$ are two protein structural features.}
    
\label{fig:bio_data_results}
\end{figure}

\section{Conclusion}\label{sec:conclusion}

In this work, we introduced the \texttt{ST-DQR} method to construct non-parametric and flexible quantile regions of an arbitrary shape. We also proposed a modular extension of conformal prediction to the multivariate response setting that guarantees any pre-specified coverage level. Experiments showed that our method generates informative quantile regions for data with response vectors and features of high dimensions.

A promising future direction could be to exploit the property that the response is approximately normally distributed in the latent space $Z_x$, and construct a quantile region for normally distributed data instead of using \texttt{DQR}. Another direction could be to replace the CVAE model (used to transform points from space $\mathcal{Y}$ to space $\mathcal{Z}$ and vice versa) with more recent techniques, such as normalizing flows \citep{kobyzev2020normalizing,rezende2015variational}. Turning the calibration procedure, for very high-dimensional responses, other notions of statistical error beyond marginal coverage---such as the false-negative rate across coordinates---may be more appropriate. Extensions of our procedure to control other error rates would be possible in combination with generalizations of conformal prediction~\citep{bates2021distributionfree, angelopoulos2021learn}, and we view this as an important next step.
Lastly, it would be exciting to explore the conditional coverage of multivariate quantile regression methods, and offer techniques that further improve it, e.g., by generalizing the one proposed by \cite{oqr} to this setting.

\acks{Y.R. and S.F. were supported by the ISRAEL SCIENCE FOUNDATION (grant No. 729/21). Y.R. thanks the Career Advancement Fellowship, Technion, for providing research support. Y.R. and S.F. thank Alex Bronstein, Sai-Sanketh Vedula, and Aviv Rosenberg for insightful discussions.} 

\appendix

\section{Theoretical Results}\label{sec:theo_results}
We now present the proofs of the theoretical results presented in the main manuscript.
\subsection{Coverage Guarantee of Na\"ive Multivariate Quantile Regression}\label{sec:naive_qr_proof}
The na\"ive multivariate quantile regression introduced in Section \ref{sec:naive_qr} achieves the desired coverage level, as proved next.
\begin{equation}
\begin{split}
\mathbb{P}[Y_{n+1}\in R(x)] & = \mathbb{P}\left[\bigwedge\limits_{j=1}^d Y_{n+1}\in C^j(x)\right] \\
&= 1-\mathbb{P}\left[\bigvee\limits_{j=1}^d Y_{n+1}\notin C^j(x)\right]\\ 
& \geq 1- \sum_{j=1}^d \mathbb{P}\left[Y_{n+1}\notin C^j(x)\right] \\
&= 1- \sum_{j=1}^d \alpha/d \\
&= 1-\alpha.
\end{split}
\end{equation}

\subsection{Proof of Theorem 1}\label{sec:thm_1_proof}
\begin{proof}[Proof of Theorem~\ref{thm:qr_in_y}]
We begin by proving that for a fixed $X=x$,
\begin{equation}\label{eq:d_domain_and_codomain}
\mathcal{D}: \mathbb{R}^r \rightarrow \textrm{supp}(Y\mid X=x).
\end{equation}
Assume for the sake of contradiction that Equation~\eqref{eq:d_domain_and_codomain} does not hold. That is, there exists $y\in\textrm{Im}(\mathcal{D})$ such that $y \notin  \textrm{supp}(Y\mid X=x)$. Therefore, there exists $\varepsilon>0$ such that the ball $B := \{ a \in \mathbb{R}^d : \norm{a-y}_2 \leq \varepsilon \}$ satisfies: 
\begin{equation}
B \cap \textrm{supp}(Y\mid X=x) = \emptyset, \quad \mathbb{P}(\mathcal{D}(Z_x;X=x) \in B \mid X=x) > 0.
\end{equation} 
However, under the assumption, $\mathcal{D}(Z_x;X=x)\stackrel{d}{=}Y\mid X=x$, so it follows that: 
\begin{equation}
\mathbb{P}(Y \in B \mid X=x) = \mathbb{P}(\mathcal{D}(Z_x;X=x) \in B \mid X=x) > 0,
\end{equation}
which contradicts $B \cap \textrm{supp}(Y\mid X=x) = \emptyset$. We conclude $ \textrm{Im}(D) \subseteq \textrm{supp}(Y\mid X=x)$.
Finally, since a quantile region in space $\mathcal{Y}$ satisfies $R_{\mathcal{Y}}(x) = \mathcal{D}(R_{\mathcal{Z}}(x);x)$ by construction, we have
\begin{equation}
R_{\mathcal{Y}}(x) \subseteq \textrm{Im}(\mathcal{D}) \subseteq \textrm{supp}(Y\mid X=x).
\end{equation}
\end{proof}

\subsection{Proof of Theorem 2}\label{sec:thm_2_proof}
\begin{proof}[Proof of Theorem \ref{thm:cov_guarantee}] We provide the proof for Case 1, in which $c_\textrm{init} \leq 1-\alpha$. The proof for the complementary case is similar. Recall that the quantile region is defined as:
\begin{equation*}
S^{\gamma_\textrm{cal}}(x) = \left\{ y \in \mathbb{R}^d : \min_{a\in R_\mathcal{Y}(x)}d(a,y) \leq \gamma_\textrm{cal} \right\},
\end{equation*}
where
\begin{equation*}
\gamma_\textrm{cal} = \hat{Q}_{1-\alpha}( \{E^+_i \}_{i \in \mathcal{I}_2}), \quad E^+_i = \min_{a\in  R_\mathcal{Y}(X_i)}d(a,Y_i),
\end{equation*}
and $\hat{Q}_{1-\alpha}( \{E^+_i \}_{i \in \mathcal{I}_2})$ is the $\ceil*{(1-\alpha)(1+ |\mathcal{I}_2 |)}$-th smallest value in $\{E^+_i \}_{i \in \mathcal{I}_2}$.
This implies that:
\begin{equation}\label{eq:conf_score_quantile}
Y_{n+1} \in S^{\gamma_\textrm{cal}}(X_{n+1}) \iff E^+_{^+n+1} \leq \hat{Q}_{1-\alpha}( \{E^+_i \}_{i \in \mathcal{I}_2}).
\end{equation}
Since the conformity scores $\left\{E^+_i \right\}_{i \in \mathcal{I}_2}$ and $E^+_{n+1}$ are exchangeable, the probability of the event in~\eqref{eq:conf_score_quantile} is at least $1-\alpha$.
The remaining technical details for proving this statement follow from \cite{CQR}. The upper bound guarantee of the coverage follows from~\eqref{eq:conf_score_quantile} as well, by applying \citep[][Lemma 2]{CQR}.
\end{proof}

\subsection{\texttt{ST-DQR} Coverage Rate Lower Bound}\label{sec:thm_3_proof}

While the property guaranteed in Theorem \ref{thm:qr_in_y} is highly desired, it does not suffice, as it can be trivially satisfied by an empty region $R_\mathcal{Y}(x)=\emptyset$, which is a subset of $\textrm{supp}(Y\mid X=x)$, hence satisfies the property of Theorem \ref{thm:qr_in_y}. We therefore require the quantile region $R_\mathcal{Y}(x)$ to achieve a high coverage rate as well. For complex distributions of $Y \mid X=x$, the task of constructing a quantile region with a good coverage rate is difficult to achieve via \texttt{NPDQR}. By contrast, the distribution of the response in space $\mathcal{Z}$ is spherical (recall that $Z \sim \mathcal{N}(0,1)^r$) and thus much simpler to handle in the sense that the region constructed by \texttt{NPDQR} in space $\mathcal{Z}$ is likely to achieve a better coverage rate. Therefore, we would like this coverage property to be preserved when transforming the quantile region back to space $\mathcal{Y}$. We now show that our method satisfies this property, i.e., the coverage attained in space $\mathcal{Y}$ is at least as good as the one in space $\mathcal{Z}$. 
\begin{proposition}\label{thm:cov_preservation}
Suppose $\left( \mathcal{E}(y;x), \mathcal{D}(z;x) \right)$ is a CVAE model as in Theorem~\ref{thm:qr_in_y}.
Suppose $R_{\mathcal{Z}}(x)$ is a quantile region in space $\mathcal{Z}$, and $R_{\mathcal{Y}}(x)$ is a quantile region in space $\mathcal{Y}$, as defined in Theorem~\ref{thm:qr_in_y}. Assuming the coverage rate of $R_{\mathcal{Z}}(x)$ is $1-\beta$, then the coverage rate of $R_{\mathcal{Y}}(x)$ is at least $1-\beta$.
\end{proposition}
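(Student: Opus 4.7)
The plan is to prove this by a direct pushforward-of-events argument, leveraging only the two ingredients already available: the set-valued definition $R_{\mathcal{Y}}(x) = \mathcal{D}(R_{\mathcal{Z}}(x); x)$ and the CVAE distributional identity $\mathcal{D}(Z_x; X=x) \stackrel{d}{=} Y \mid X=x$ (from the hypothesis of Theorem~\ref{thm:qr_in_y}). No probabilistic machinery beyond monotonicity of measure is needed.

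The first step is a tautological set-theoretic observation: for every $z \in R_{\mathcal{Z}}(x)$, the point $\mathcal{D}(z;x)$ lies in $\mathcal{D}(R_{\mathcal{Z}}(x);x) = R_{\mathcal{Y}}(x)$ by definition of the image. Substituting the random variable $Z_x$ for $z$, this gives the event inclusion
\begin{equation}
\{Z_x \in R_{\mathcal{Z}}(x)\} \;\subseteq\; \{\mathcal{D}(Z_x;x) \in R_{\mathcal{Y}}(x)\},
\end{equation}
so that monotonicity of conditional probability yields
\begin{equation}
\mathbb{P}\bigl(\mathcal{D}(Z_x;x) \in R_{\mathcal{Y}}(x) \,\big|\, X=x \bigr) \;\geq\; \mathbb{P}\bigl(Z_x \in R_{\mathcal{Z}}(x) \,\big|\, X=x \bigr) \;\geq\; 1-\beta,
\end{equation}
where the last inequality uses the assumption that $R_{\mathcal{Z}}(x)$ has coverage $1-\beta$.

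The second step is to replace $\mathcal{D}(Z_x;x)$ with $Y$ in the probability. This is precisely where the CVAE distributional identity enters: since $\mathcal{D}(Z_x; X=x) \stackrel{d}{=} Y \mid X=x$, the event probability is unchanged, and we obtain $\mathbb{P}(Y \in R_{\mathcal{Y}}(x) \mid X=x) \geq 1-\beta$. Integrating over $X$ gives the claimed unconditional coverage bound.

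The only nontrivial issue I foresee is measurability of $R_{\mathcal{Y}}(x) = \mathcal{D}(R_{\mathcal{Z}}(x);x)$, since the image of a Borel set under a continuous (but possibly non-injective) map need not be Borel in general. However, this is a minor technicality rather than a genuine obstacle: the continuity of $\mathcal{D}$ granted by Theorem~\ref{thm:qr_in_y} ensures $R_{\mathcal{Y}}(x)$ is analytic hence universally measurable, and the argument can alternatively be phrased purely in terms of the measurable preimage $\mathcal{D}^{-1}(R_{\mathcal{Y}}(x)) \supseteq R_{\mathcal{Z}}(x)$, which circumvents the issue entirely. Beyond this bookkeeping, the proof is essentially a one-line pushforward.
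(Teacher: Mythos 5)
Your proposal is correct and follows essentially the same route as the paper's own proof: the event inclusion $\{Z_x \in R_{\mathcal{Z}}(x)\} \subseteq \{\mathcal{D}(Z_x;x) \in R_{\mathcal{Y}}(x)\}$, monotonicity of probability, and then the distributional identity $\mathcal{D}(Z_x;X=x)\stackrel{d}{=}Y\mid X=x$ to convert the bound into a statement about $Y$. Your added remark on measurability of the image set is a technicality the paper does not address, but otherwise the two arguments coincide.
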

\begin{proof}
By the assumption of the coverage rate:
\begin{equation}
\mathbb{P}(Z_x \in R_{\mathcal{Z}}(x) \mid X=x) = 1-\beta.
\end{equation}
Since $\mathcal{D}$ is a function, it follows that:
\begin{equation}
Z_x \in R_{\mathcal{Z}}(x) \implies \mathcal{D}(Z_x; X=x) \in \mathcal{D}(R_{\mathcal{Z}}(x); X=x)
\end{equation}
Therefore:
\begin{equation}
\mathbb{P} \left( \mathcal{D}(Z_x;x) \in \mathcal{D}\left(R_{\mathcal{Z}}(x) ;x\right) \mid X=x \right) \geq \mathbb{P}(Z_x \in R_{\mathcal{Z}}(x) \mid X=x)  = 1-\beta.
\end{equation}
Finally, since $Y\mid X=x  \stackrel{d}{=} \mathcal{D}(Z_x; X=x)$, and, $R_{\mathcal{Y}}(x) = \mathcal{D}(R_{\mathcal{Z}}(x) ; x)$, we conclude that:
\begin{equation}
\mathbb{P}(Y \in R_{\mathcal{Y}}(x) \mid X=x) = \mathbb{P} \left( \mathcal{D}\left(Z_x; x\right) \in \mathcal{D}\left(R_{\mathcal{Z}}(x) ; x\right) \mid X=x \right) \geq  1-\beta.
\end{equation}
\end{proof}

\section{Data Sets Details}\label{sec:datasets_details}
This section provides details about the generation of the synthetic data, and about the real data sets used in the experiments.

\subsection{Synthetic Data Details}\label{sec:syn_datasets_details}

The generation of the feature vector and the response variable of the \emph{linear} version of the synthetic data is done in the following way:
\begin{align*}
& \hat{\beta} \sim \textrm{Uniform} (0,1)^{p}, & \\
& \beta = \frac{\hat{\beta}}{\Vert\hat{\beta}\Vert_1}, & \\
& Z\sim \textrm{Uniform} (-\pi,\pi), &\\
& \phi \sim \textrm{Uniform} (0,2\pi), &\\
& R \sim  \textrm{Uniform} (-0.1,0.1), &\\
& X \sim  \textrm{Uniform} (0.8,3.2)^p, &\\
& Y_0 =  \frac{Z}{\beta^T  X} + R \cos(\phi), &\\
& Y_1 =  \frac{1}{2}\left(-\cos(Z) + 1 \right) + R \sin(\phi), &
\end{align*}
where Uniform$(a,b)$ is a uniform distribution on the interval $(a,b)$. The \emph{non-linear} version of the synthetic data is generated in the same way, except for an additional non-linear dependence between $X$ and $Y_1$:
\begin{align*}
& Y_1 =  \frac{1}{2}\left(-\cos(Z) + 1 \right) + R \sin(\phi) + \sin(\frac{1}{n}{\sum_{i=1}^{p}X_i}).
\end{align*}
In the three-dimensional response case, we define the third response variable as:
\begin{equation}
Y_2 =  \sin (\frac{Z}{\beta^TX}),
\end{equation}
and in the four-dimensional response case, the fourth response $Y_3$ is defined as:
\begin{equation}
Y_3 = \cos(\sin (\frac{Z}{\beta^TX})) + R \cos(\phi) \sin(\phi)
\end{equation}
We report the number of samples of each data set in Table \ref{tab:syn_datasets_info}.

\begin{table}[htbp]

\setstretch{1.5}
  \centering
\scalebox{1.}{
\centering
\begin{tabular}{ccc}
    \toprule[1.1pt]
    \textbf{Setting} & $\boldsymbol{p}$ & \textbf{Number of Samples} \\
    \midrule
    
    linear & 1 & 20000 \\
    linear & 10 & 20000 \\
    linear & 50 & 80000 \\
    linear & 100 & 100000 \\

    non-linear & 1 & 20000 \\
    non-linear & 10 & 20000 \\

    \bottomrule[1.1pt]
    \end{tabular}%
}
\captionsetup{format=hang} \caption{Synthetic data sets information. The number of samples as a function of the feature dimension.}
    \label{tab:syn_datasets_info}

\end{table}

\subsection{Real Data Details}\label{sec:real_datasets_details}

The real data sets originally contained a one-dimensional response, so we increase the target dimension by considering one of the features as a response variable instead. The feature concatenated to the response was chosen to be highly correlated to it, and to have a small correlation to the other features, so it will not be easy to predict.
Table~\ref{tab:real_datasets_info} displays the size of each data set, the feature dimension, the response dimension, and the index of the feature that is used as a response instead of an input variable.

\begin{table}[htbp]

\setstretch{1.5}
  \centering
\scalebox{0.8}{
\centering
\begin{tabular}{ccccc}
    \toprule[1.1pt]
    \textbf{Data Set Name} & \textbf{Number of Samples} & $\boldsymbol{p}$ & $\boldsymbol{d}$ & \textbf{Additional Response} \\
    \midrule

    \textbf{\cite{blog_data}}  & 52397 & 279 & 2 & The time between the blog post publication and base-time \\

    \textbf{\cite{bio_data}}  & 45730 & 8 & 2 & F7 - Euclidean distance \\
    
    \textbf{\cite{house_data}}  & 21613 & 17 & 2 & Latitude of a house \\

    \textbf{\cite{meps19_data}}  & 15785 & 138 & 2 & Overall rating of feelings \\

    \textbf{\cite{meps20_data}} & 17541 & 138 & 2 & Overall rating of feelings  \\

    \textbf{\cite{meps21_data} } & 15656 & 138 & 2 & Overall rating of feelings  \\
    
    \bottomrule[1.1pt]
    \end{tabular}%
}
\captionsetup{format=hang} \caption{Information about the real data sets.}
    \label{tab:real_datasets_info}

\end{table}

\section{Experimental Setup}\label{sec:exp_settings}

The network we used receives as an input a vector of size $p+d$ (where $p$ is the feature dimension, and $d$ is the response dimension). The first $p$ variables in the input vector correspond to the elements of the feature vector, and the last variables correspond to the desired quantile level. We split the data sets (both real and synthetic) into a training set (38.4\%), calibration (25.6\%), validation set (16\%) used for early stopping, and a test set (20\%) to evaluate performance.
Then, the feature vectors and the responses were preprocessed using z-score normalization.
The neural network consists of 3 layers of 64 hidden units, and a leaky ReLU activation function with parameter $0.2$. The learning rate used is $1e^{-3}$, the optimizer is Adam \citep{adam}, and the batch size is 256 for all methods. 
The maximum number of epochs is 10000, but the training is stopped early if the validation loss does not improve for 100 epochs, and in this case, the model with the lowest loss is chosen. 
The number of distinct directions used in each gradient step is 32, and they are taken from a fixed collection of 2048 directions that were sampled once, before the training process. The number of directions used to determine the quantile region belonging is 256, and they are sampled from the same collection of directions.
The results are averaged over all seeds in the range between 0 and 19 (inclusive).
We reduce the dimension of the feature vector to 50 in meps data sets, and to 100 in blog data set, using PCA.
The code we use is based on the implementation of \cite{beyond_pinball_loss}.

\subsection{Na\"ive Quantile Regression Setup}\label{sec:naive_setup}

We trained a quantile regression model using the pinball loss to predict two quantile levels for each dimension $d$: $1-\alpha/d$ and $\alpha/d$. The quantiles are then used to construct a prediction interval, according to the explanation in~\ref{sec:quantile_regression}.
The calibration scheme used extends CQR, developed by \cite{CQR}, to the multi-dimensional response case, which we describe in detail next. Let $\alpha_\textrm{lo}=\alpha/2$ and $\alpha_\textrm{hi}=1-\alpha/2$. The CQR conformity scores are defined as:
\begin{equation}
E_{i} = \max \{ \max \{\hat{q}^j_\textrm{lo}(X_i) - Y_i, Y_i - \hat{q}^j_\textrm{hi}(X_i) \} : j \in \{1,..,d\}\},
\end{equation}
where $\hat{q}^j_\textrm{lo}, \hat{q}^j_\textrm{hi}$ are the estimated lower and upper quantiles of the $j$-th dimension of $Y \mid X$, respectively. We define by $Q$ the $(1-\alpha)(\mid \mathcal{I}_2 \mid +1)$-th empirical quantile of $\{E_i \}_{i\in \mathcal{I}_2}$. The calibrated quantile region is given by:
\begin{equation}
\hat{R}(x) = \bigtimes_{j \in \{1,...,d\}} [\hat{q}^j_\textrm{lo}(x) -Q, \hat{q}^j_\textrm{hi}(x) +Q]
\end{equation}

\subsection{Directional Quantile Regression Setup}

As explained in Section \ref{sec:need_conf}, the empirical coverage rate of an uncalibrated \texttt{DQR} model is significantly lower than the nominal level. We therefore require the model to achieve a higher coverage level, according to Tables \ref{tab:syn_coverage_rates_info}, \ref{tab:real_coverage_rates_info}. The quantile region obtained by \texttt{DQR} was then calibrated to achieve 90\% coverage rate, according to Algorithm~\ref{alg:cal}. The level of the estimated directional quantiles is chosen using an independent train/validation/test split, where the value that achieves the highest coverage level is chosen. The examined directional-quantile levels are 90\%, 93\%, and 95\%. For the four-dimensional response data sets, we also examined a directional-quantile level of 98\% for \texttt{NPDQR}.

\subsection{Implementation Details of Our Method}
Table \ref{tab:CVAE_layers} displays the hidden dimension of each layer in the encoder $\mathcal{E}$ and decoder $\mathcal{D}$ of the CVAE. The dimensions were chosen according to the model's performance over the linear synthetic data set, with feature vectors of different dimensions. The networks include a dropout with parameter $0.1$, and a batch-norm layer for blog data set only. The learning rate used to train the CVAE is $1e^{-4}$ for the real data sets and $1e^{-3}$ for the synthetic data sets, and for both the batch size used is $512$. The activation function used is the leaky ReLU function with parameter $0.2$. The maximum number of epochs is 10000, but the training is stopped early if the validation loss does not improve for 200 epochs, and in this case, the model with the lowest loss is chosen. In Tables \ref{tab:syn_coverage_rates_info}, \ref{tab:real_coverage_rates_info} we report the nominal levels of the estimated directional quantiles used to construct the uncalibrated quantile regions. The regions are then calibrated according to Algorithm~\ref{alg:cal}.

\begin{table}[htbp]

\setstretch{1.5}
  \centering
\scalebox{1}{
\centering
\begin{tabular}{cc}
    \toprule[1.1pt]
    $\boldsymbol{p}$ & \textbf{Hidden layers dimension} \\
    \midrule
    
    \textbf{$p \leq 5$} & 32, 64, 128, 256, 128, 64, 32 \\

    \textbf{$5 < p \leq 8$} & 64, 128, 256, 128, 64  \\

    \textbf{$8 < p \leq 10$} & 64, 128, 256, 512, 256, 128, 64  \\
    
    \textbf{$10 < p \leq 25$} & 64, 128, 256, 256, 128, 64  \\
    
    \textbf{$25 < p$} & 128, 256, 512, 512, 256, 128  \\

    \bottomrule[1.1pt]
    \end{tabular}%
}
\captionsetup{format=hang} \caption{Dimension of each hidden layer for the CVAE architecture as a function of $p$.}
    \label{tab:CVAE_layers}
\end{table}

\begin{table}[htbp]

\setstretch{1.5}
  \centering
\scalebox{0.8}{
\centering
\begin{tabular}{ccc|cc}
    \toprule[1.1pt]
    \textbf{Data Set setting} & $\boldsymbol{d}$ &  $\boldsymbol{p}$ & \textbf{\texttt{NPDQR} nominal coverage level} &  \textbf{\texttt{ST-DQR} nominal coverage level} \\
    \midrule
    
    \textbf{linear} & 2 & 10 & 95\% & 95\% \\
    \textbf{linear} & 2 & 50 & 95\% & 95\% \\
    \textbf{linear} & 2 & 100 & 95\% & 95\% \\

    \textbf{non-linear} & 2 & 1 & 95\% & 93\% \\
    \textbf{non-linear} & 2 & 10 & 95\% & 93\% \\
    \textbf{non-linear} & 3 & 1 & 95\% & 93\% \\
    \textbf{non-linear} & 3 & 10 & 95\% & 93\% \\
    \textbf{non-linear} & 4 & 1 & 98\% & 93\% \\
    \textbf{non-linear} & 4 & 10 & 98\% & 95\% \\

    \bottomrule[1.1pt]
    \end{tabular}%
}
\captionsetup{format=hang} \caption{The directional-quantile levels used for a \texttt{NPDQR} model in the synthetic data sets.}
    \label{tab:syn_coverage_rates_info}

\end{table}

\begin{table}[htbp]

\setstretch{1.5}
  \centering
\scalebox{0.8}{
\centering
\begin{tabular}{c|cc}
    \toprule[1.1pt]
    \textbf{Data Set Name} & \textbf{\texttt{NPDQR} nominal coverage level} &  \textbf{\texttt{ST-DQR} nominal coverage level} \\
    \midrule
    
    \textbf{blog\_data} & 95\% & 93\%  \\

    \textbf{bio}& 95\% & 95\% \\
    
    \textbf{house} & 95\% & 95\% \\

    \textbf{meps\_19} &  95\% & 93\% \\

    \textbf{meps\_20} & 95\% & 93\%  \\

    \textbf{meps\_21} & 95\% & 93\%  \\
    
    \bottomrule[1.1pt]
    \end{tabular}%
}
\captionsetup{format=hang} \caption{The directional-quantile levels used for a \texttt{NPDQR} model in the real data sets.}
    \label{tab:real_coverage_rates_info}

\end{table}

\subsection{Machine's Spec}\label{sec:exp_spec}

The resources used for the experiments are:
\begin{itemize}
    \item \textbf{CPU}: Intel(R) Xeon(R) E5-2650 v4.
    \item \textbf{GPU}: Nvidia TITAN-X, 1080TI, 2080TI.
    \item \textbf{OS}: Ubuntu 18.04.

\end{itemize}

\section{Quantile Regions Constructed for House Data Set}\label{sec:additional_real_figs}

Similarly to figures \ref{fig:syn_data_results} and \ref{fig:bio_data_results}, we display in Figure \ref{fig:house_data_results} the constructed quantile regions for the House data set. We split the data into three clusters, as described in \ref{sec:real_data_results}, and color in red the quantile region of each cluster. It is clear from the figure that our method constructs the most informative quantile regions, among all existing methods. The \texttt{VQR} method could not produce quantile regions for this data set since the dimension of the feature vector is too large to handle for the software we used; see Table \ref{tab:vqr_time_memory} in the Appendix.
\renewcommand\imagewidth{0.25}
\renewcommand\texthspace{1.1cm}
\begin{figure}[htbp]
\setstretch{1.1}
  \centering
  
    \scalebox{1.}{
    \begin{tabular}{cccc}
    \multicolumn{1}{c}{\textbf{Method}} & \multicolumn{3}{c}{\textbf{Quantile region}}\\ 
    
    {} & {\parbox{\imagewidth\linewidth} {\quad \hspace{\texthspace} $c=0$}}  & {\parbox{\imagewidth\linewidth} {\quad \hspace{\texthspace} $c=1$}} &  {\parbox{\imagewidth\linewidth} {\quad \hspace{\texthspace} $c=2$}}   \\ \\
    
    \texttt{Na\"ive QR} & {\centering{\rowincludegraphics[width=\imagewidth\linewidth]{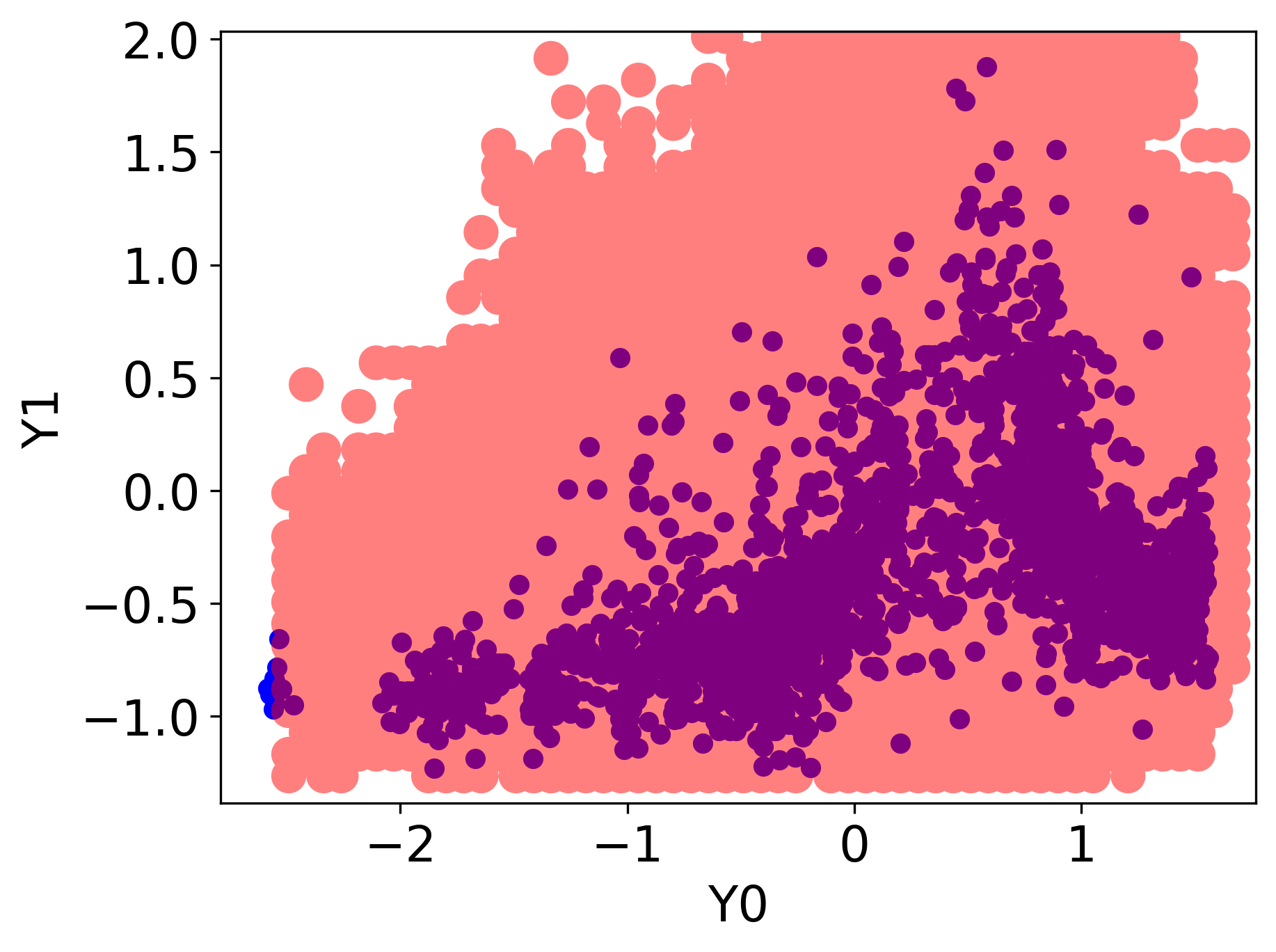}}} & {\centering{\rowincludegraphics[width=\imagewidth\linewidth]{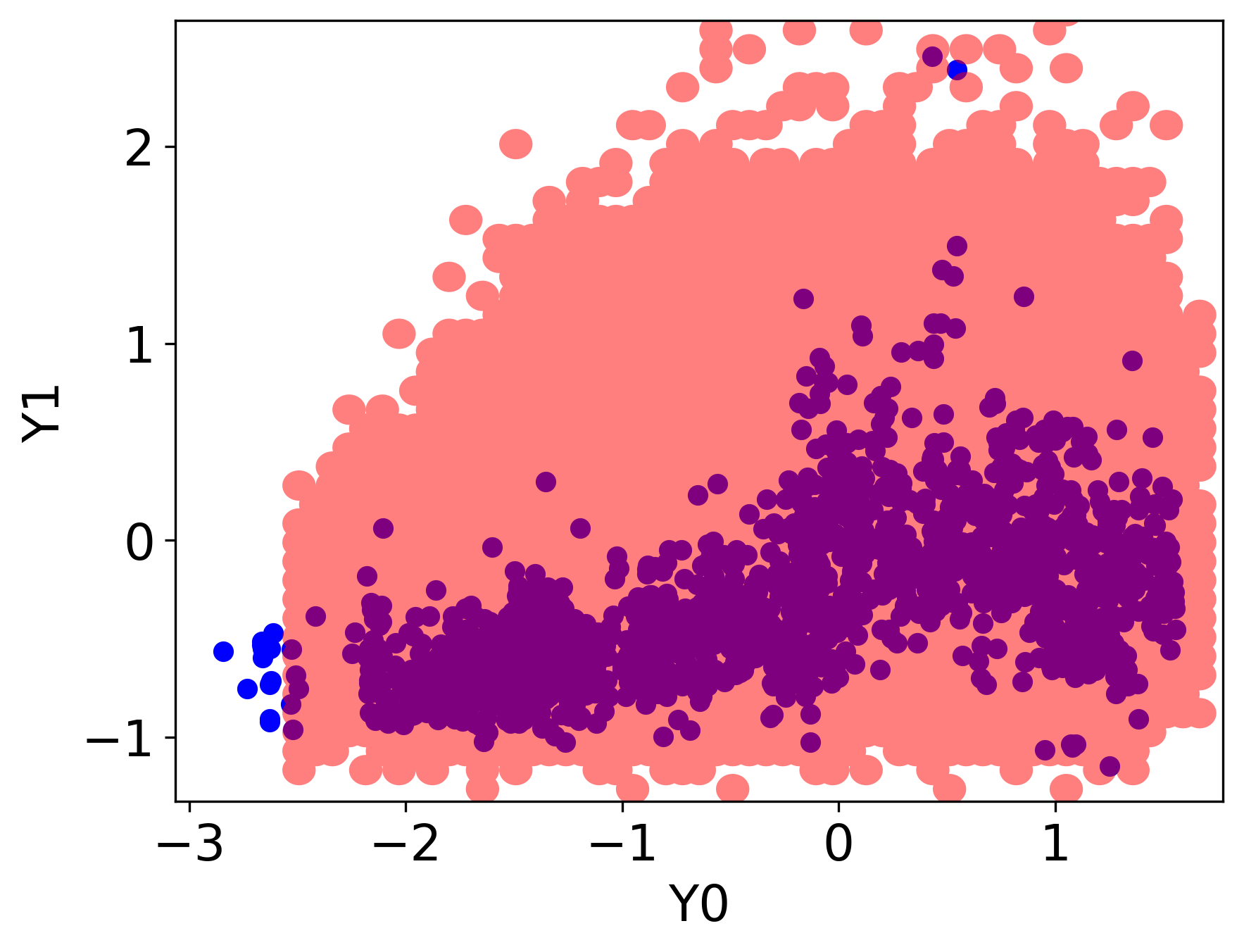}}} & {\centering{\rowincludegraphics[width=\imagewidth\linewidth]{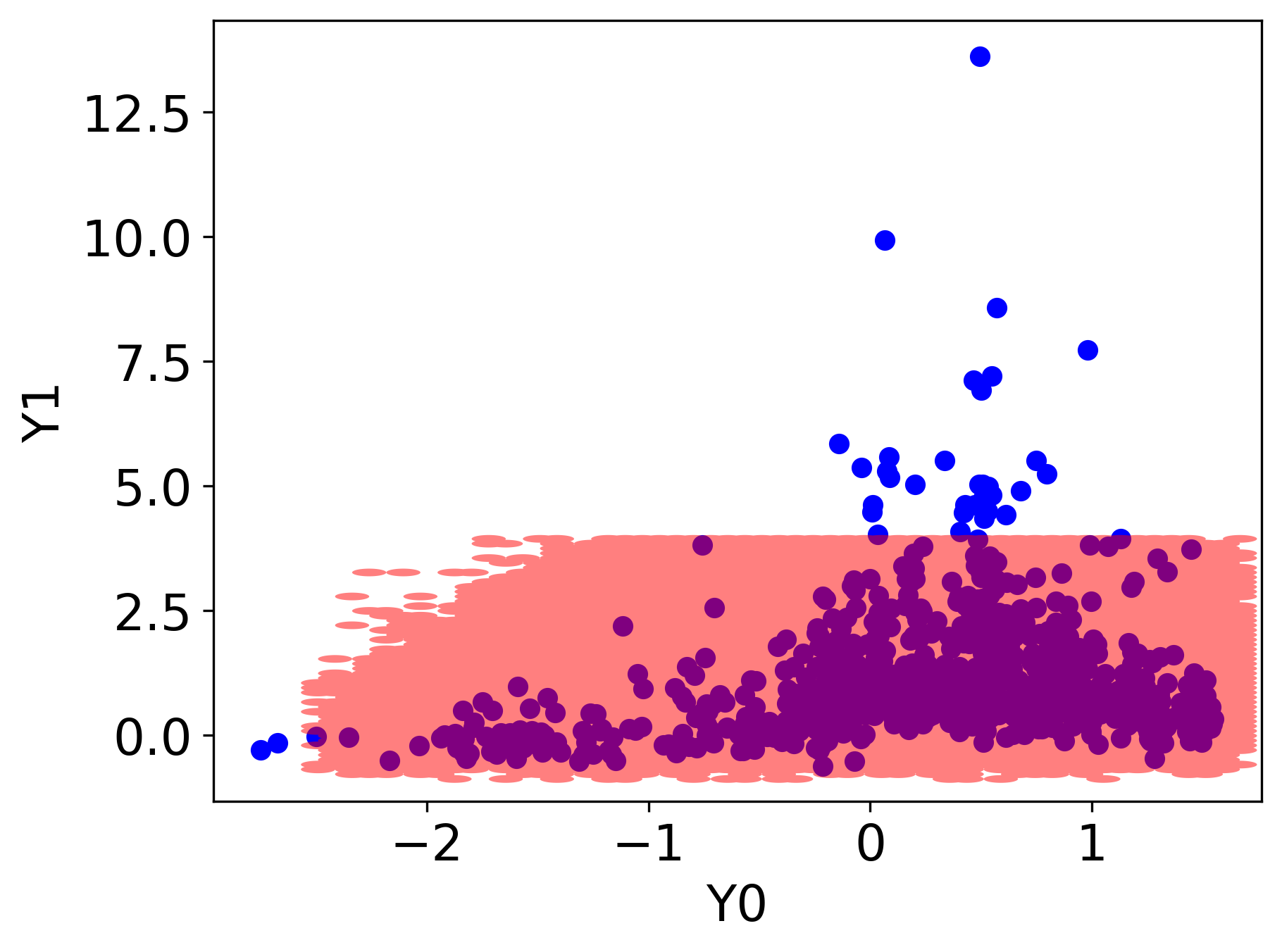}}} \\
    
    \texttt{NPDQR} & {\centering{\rowincludegraphics[width=\imagewidth\linewidth]{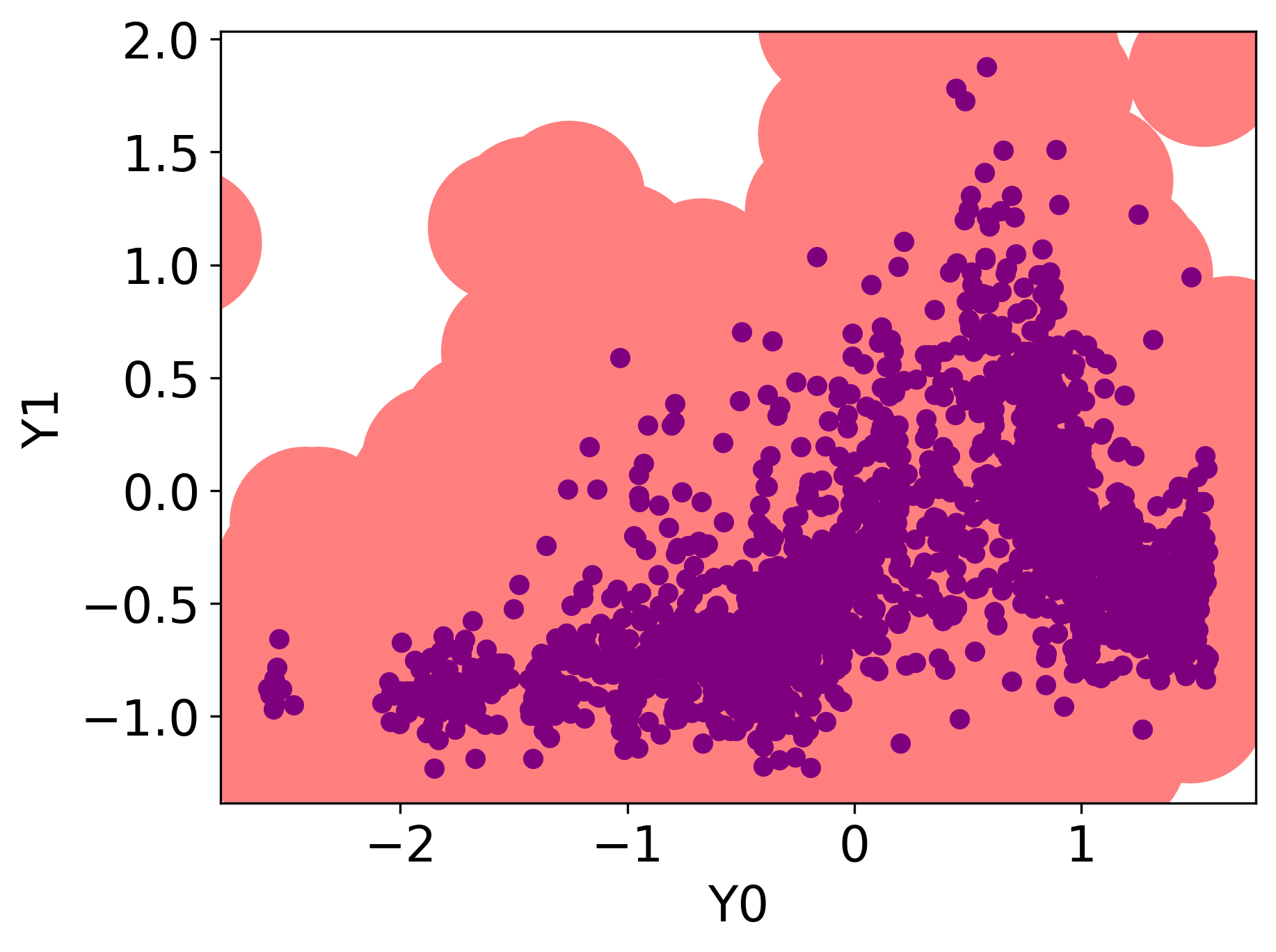}}} & 
    {\centering{\rowincludegraphics[width=\imagewidth\linewidth]{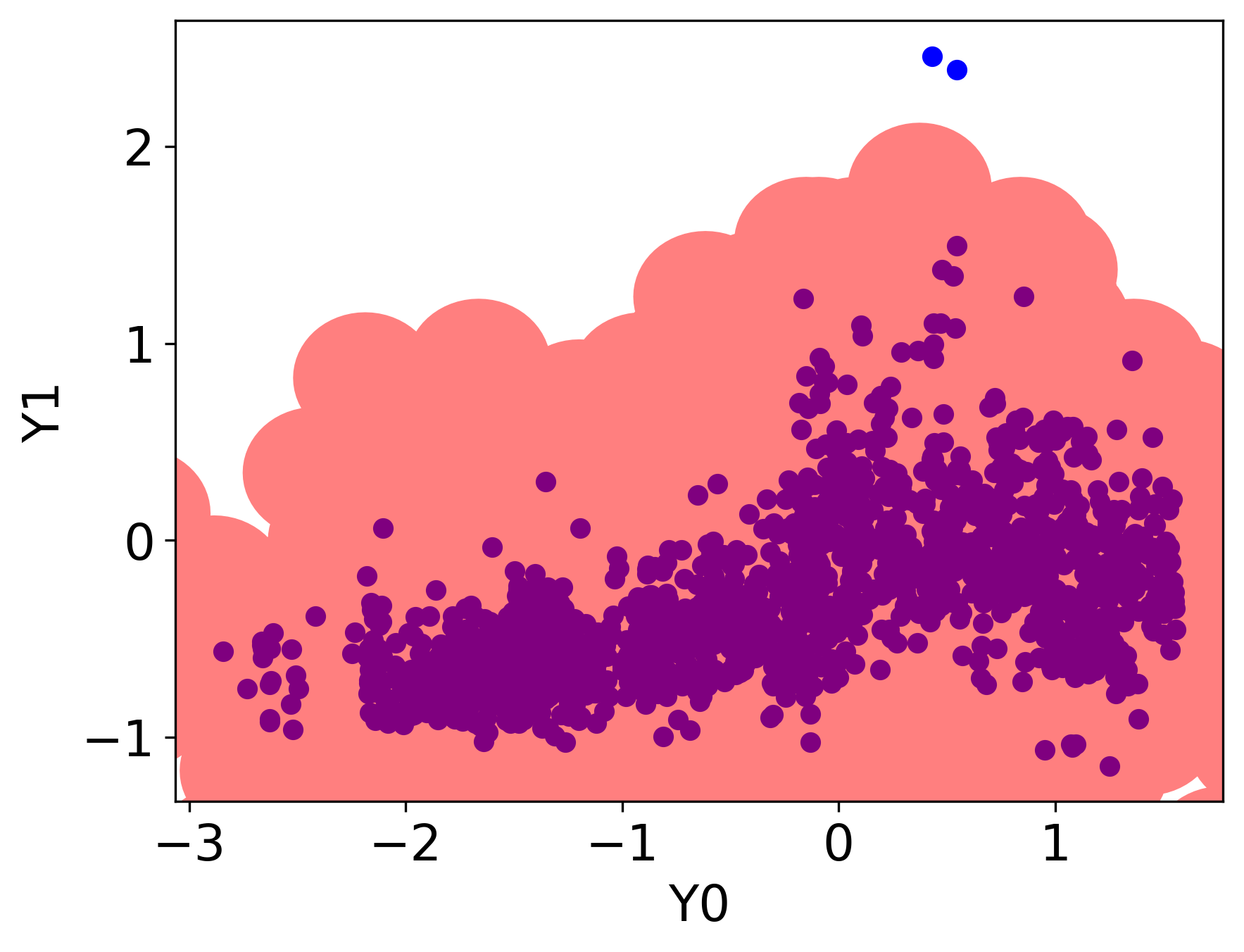}}} & 
    {\centering{\rowincludegraphics[width=\imagewidth\linewidth]{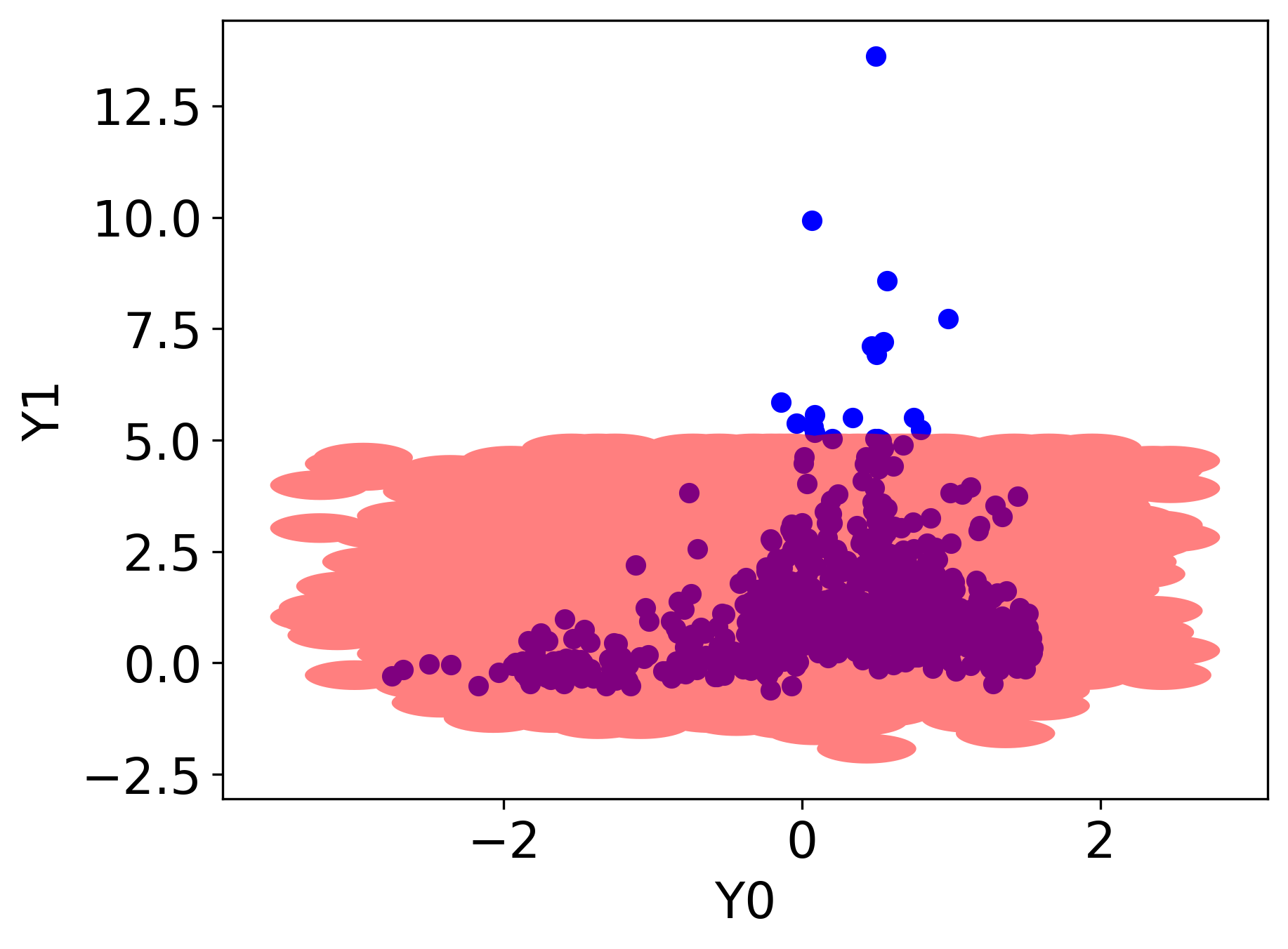}}} \\

    \texttt{ST-DQR} & {\centering{\rowincludegraphics[width=\imagewidth\linewidth]{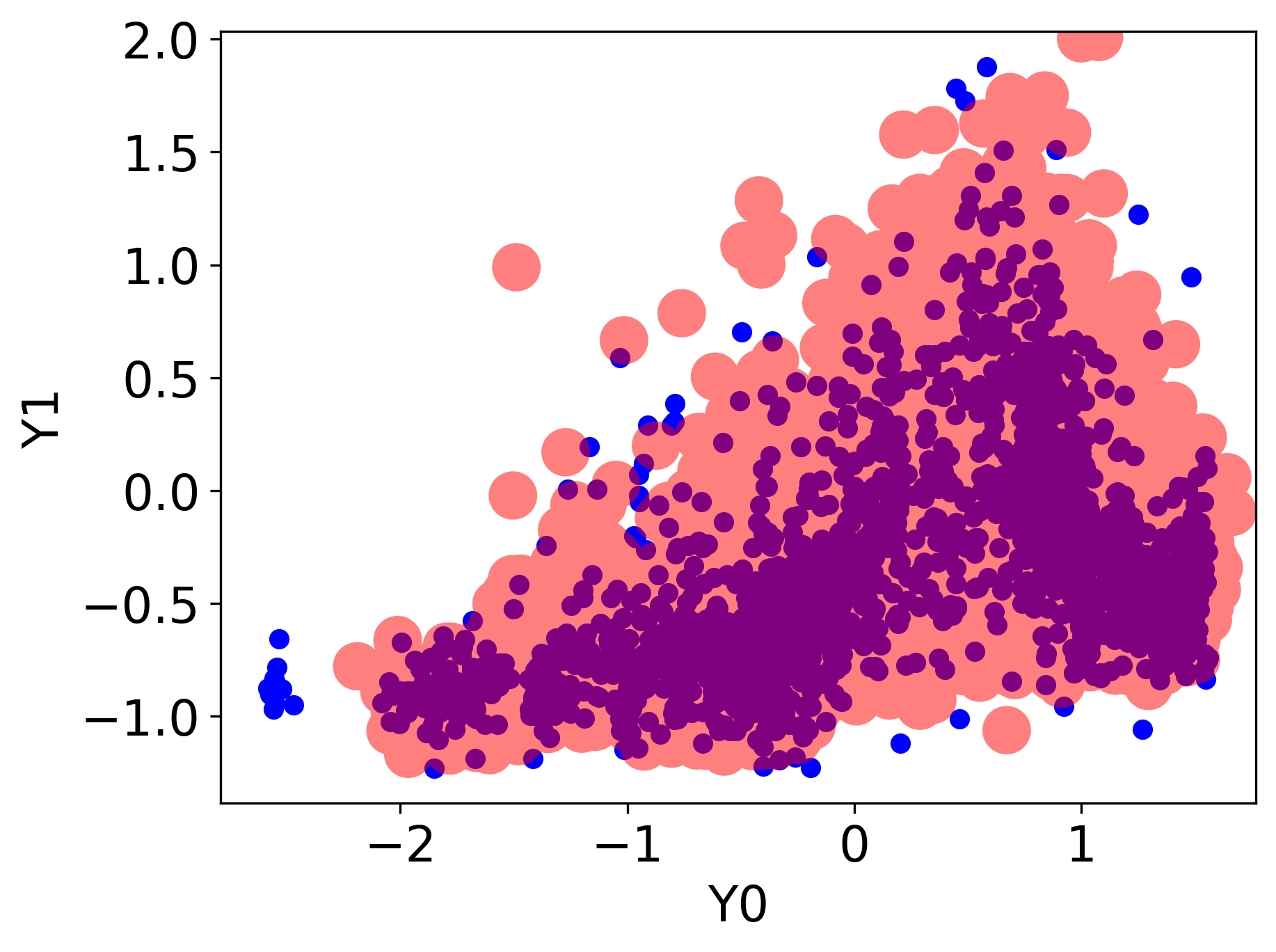}}} &
    {\centering{\rowincludegraphics[width=\imagewidth\linewidth]{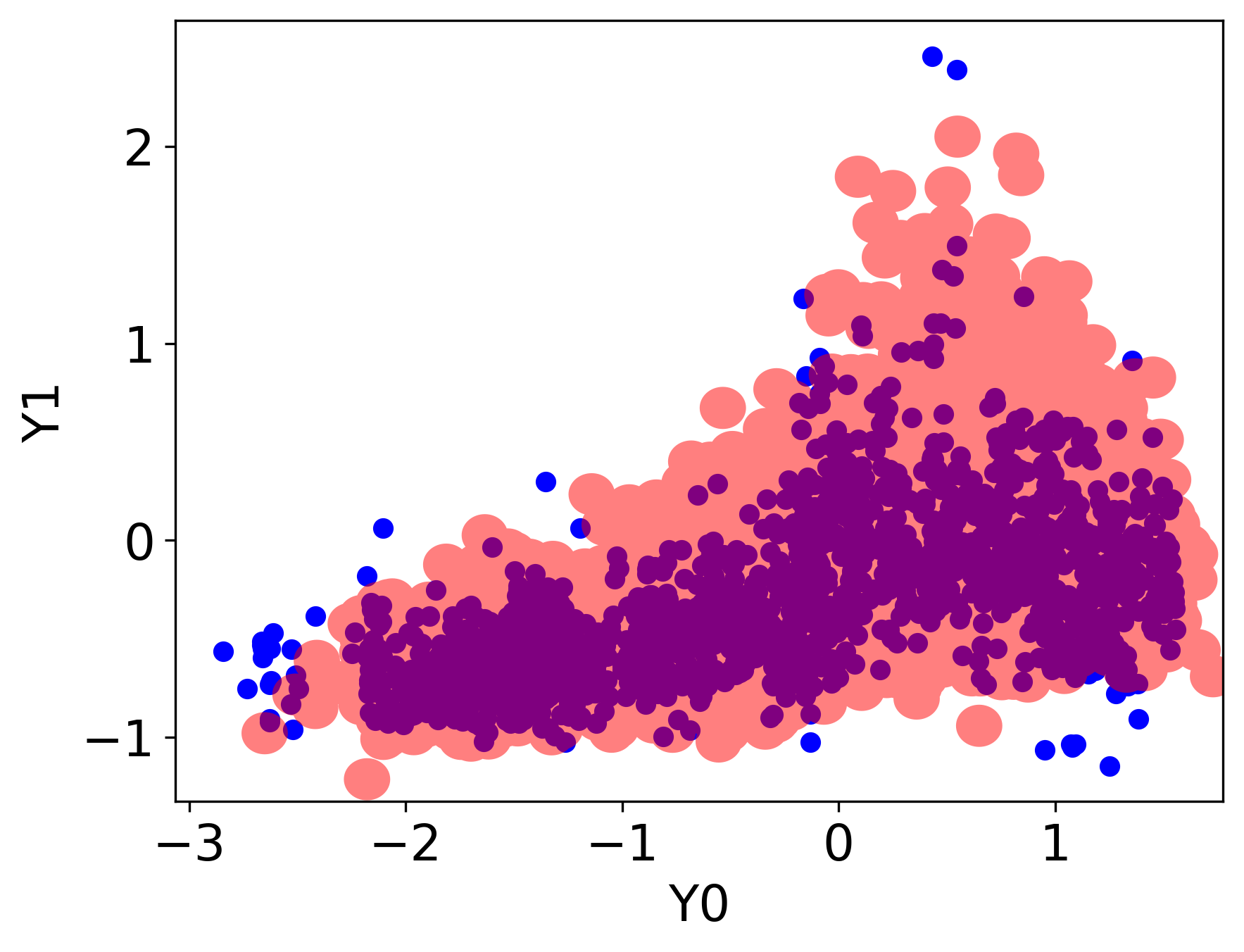}}} & 
    {\centering{\rowincludegraphics[width=\imagewidth\linewidth]{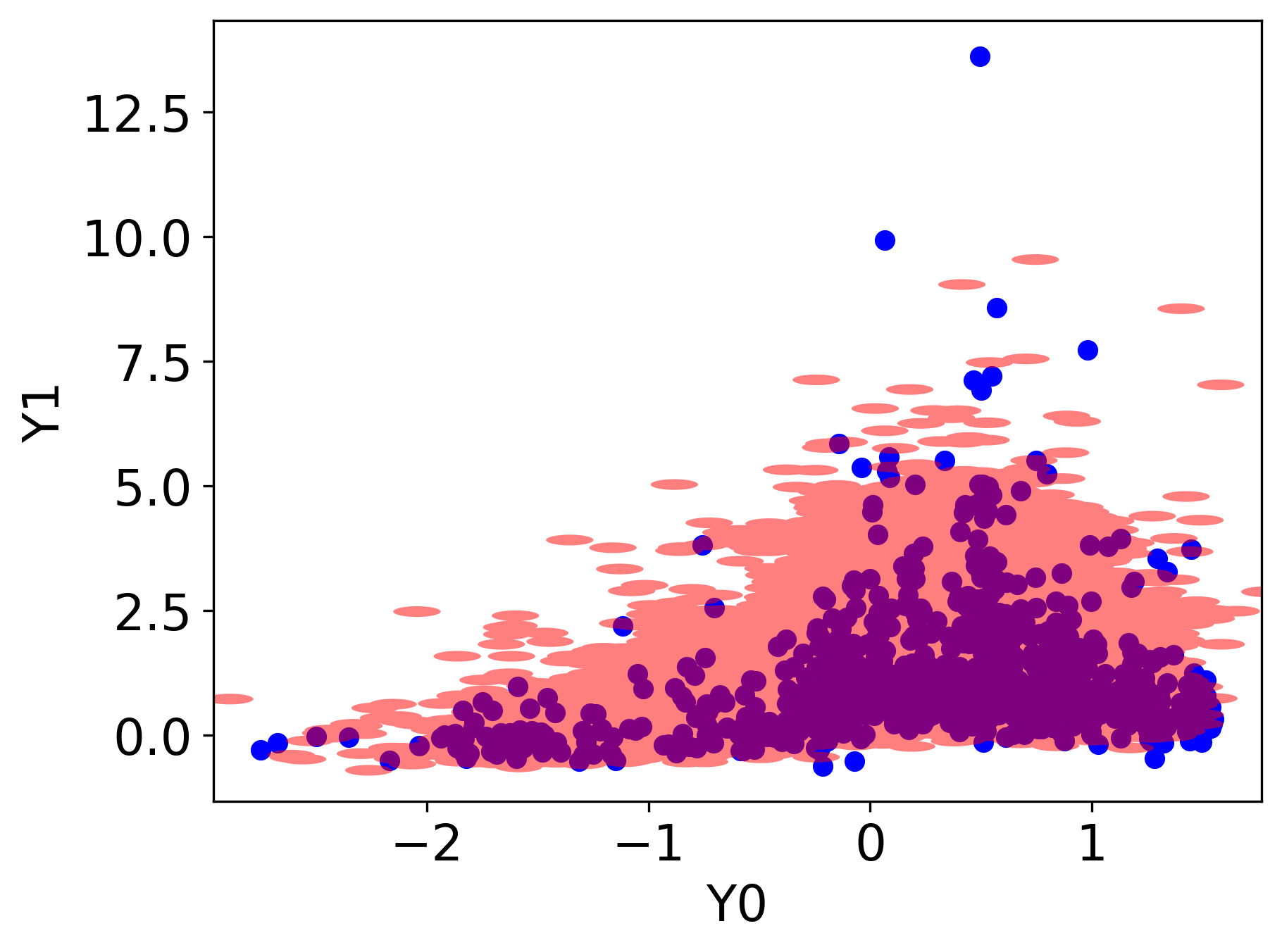}}} \\

    \end{tabular}%
    }
    \captionsetup{format=hang} \caption{House data set results. Quantile region obtained by each of the methods: \texttt{Na\"ive QR}, \texttt{NPDQR}, and our \texttt{ST-DQR}. In this data set, $Y_0$ and $Y_1$ are the price and latitude of a house.}
    
\label{fig:house_data_results}
\end{figure}

\section{Additional Experiments}
In this section we provide additional experiments, analyzing the conditional coverage, the effect of the calibration set, and more, using the methods discussed in this work. The experimental setup is identical to the one described in Section \ref{sec:experiments}, unless explicitly stated otherwise.

\subsection{Calibrating on the Training Set}

In this section, we display the performance of each method calibrated on the training set instead of the calibration set. That is, here, all methods were calibrated to achieve $1-\alpha=90\%$ coverage rate on the training set. 
A similar experiment examining the effect of calibrating with a training set instead of a calibration set was previously suggested by \cite{barber2021predictive}, showing that naively calibrated intervals do not attain the right coverage level.
The results given below indicate that a model calibrated using the training data does not achieve the desired coverage level, emphasizing the necessity of a calibration set.

Table \ref{tab:uncal_real_data_coverage_area_std} presents the coverage rates and the areas of each method on the real data sets. This table shows that the coverage attained by these methods is far from the nominal level. Furthermore, this table reveals that our method achieves the best marginal coverage.

\begin{table}[!htb]
    \centering
    \setstretch{1.4}
    \begin{minipage}{.5\linewidth}
        \captionsetup{format=hang} \caption*{Coverage rate}
        \centering
  \scalebox{0.75}{
      \centering

    \begin{tabular}{c|ccc}

    \toprule[1.1pt]
    
    \textbf{Data Set name} & \textbf{\texttt{ST-DQR}} & \textbf{\texttt{Na\"ive QR}} & \textbf{\texttt{NPDQR}}  \\

    \midrule
\textbf{bio}       &       88.51 (.101) &  86.902 (.147) &  85.907 (.173) \\
\textbf{house}     &       88.29 (.164) &  82.228 (.348) &   75.555 (.28) \\
\textbf{blog\_data} &      87.675 (.159) &  82.802 (.186) &     78.9 (.32) \\
\textbf{meps\_19}   &      88.028 (.171) &  83.432 (.231) &  82.653 (.871) \\
\textbf{meps\_20}   &       88.28 (.148) &  84.595 (.179) &  81.536 (.946) \\
\textbf{meps\_21}   &      87.664 (.207) &  83.456 (.278) &  82.996 (.943) \\
    \bottomrule[1.1pt]
    
    \end{tabular}%
    }
    \end{minipage}%
    \begin{minipage}{.5\linewidth}
      \centering
        \captionsetup{format=hang} \caption*{Area of quantile regions} 
  \scalebox{0.75}{
      \centering
    \begin{tabular}{ccc}

    \toprule[1.1pt]
    
    \textbf{\texttt{ST-DQR}} & \textbf{\texttt{Na\"ive QR}} & \textbf{\texttt{NPDQR}}  \\

    \midrule
306.77 (3.869)  &   388.469 (4.699) &   325.729 (5.141) \\
338.502 (6.657) &   347.108 (7.483) &   259.874 (4.539) \\
95.812 (7.953)  &    184.51 (4.369) &   146.073 (5.806) \\
160.33 (6.21)   &  399.095 (22.885) &    320.96 (16.72) \\
162.464 (3.133) &  419.837 (13.336) &   311.72 (17.832) \\
162.164 (4.409) &  409.609 (17.192) &  344.641 (16.135) \\
    \bottomrule[1.1pt]
    
    \end{tabular}%

    }
    \end{minipage} 
    \captionsetup{format=hang} \caption{Real data experiments. Coverage, area, and their standard error of the quantile regions constructed by each method calibrated on the training set.}
    \label{tab:uncal_real_data_coverage_area_std}

\end{table}

\subsection{Evaluating Conditional Coverage}
In this section, we analyze the conditional validity of the proposed method and compare it to the existing ones.

\subsubsection{Synthetic Data Sets}

Figure \ref{fig:syn_cov_vs_x} presents the coverage as a function of the feature vector on the non-linear synthetic data with $d=2$ response values and $n=10000$ samples. We generated the feature vector to have the value of $x$ repeated $p=10$ times. This figure indicates that our \texttt{ST-DQR} achieves the best conditional coverage compared to the competitors. 
Table \ref{fig:syn_cov_vs_n} shows the coverage as a function of the sample size on the synthetic data described above for different values of $x$. This figure reveals that as the sample size increases, the conditional coverage of all methods gets closer to the nominal level. Additionally, while \texttt{ST-DQR} attains poor conditional coverage for small samples, it performs better than existing methods for larger samples. Furthermore, it constructs quantile regions that represent better underlying uncertainty compared to competitors, as discussed next.
Figures \ref{fig:syn_changing_n_data_results} display the quantile regions constructed by \texttt{ST-DQR} on the non-linear synthetic data described above with different sample sizes. This figure shows that even with a small data set, the regions reflect well the conditional uncertainty of the response. This stands in striking contrast to the regions constructed by \texttt{NPDQR}, visualized in Figure \ref{fig:npdqr_syn_changing_n_data_results} In addition, the regions seem to converge to the true conditional density of $Y\mid X=x$ as the sample size increases.

The experiments conducted in this section indicate that our \texttt{ST-DQR} achieves good conditional coverage and performs better as the sample size increases.
\begin{figure}[htbp]
\setstretch{1.1}
  \centering

    {\centering{\includegraphics[width=0.5\linewidth]{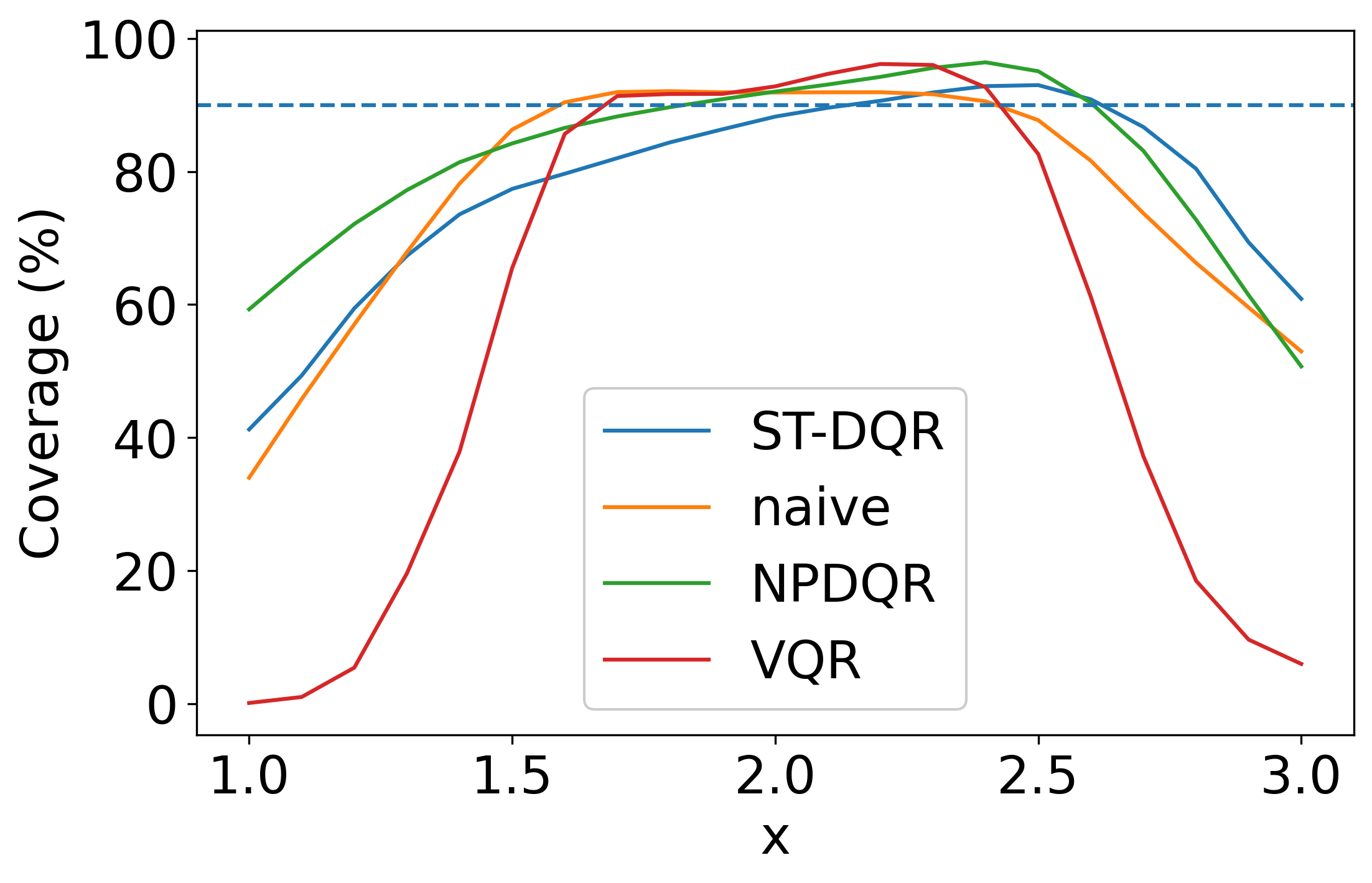}}}

    \captionsetup{format=hang} \caption{Conditional coverage achieved over the non-linear synthetic data with 10000 samples.}
    
\label{fig:syn_cov_vs_x}%
\end{figure}%

\renewcommand\imagewidth{0.5}

\begin{figure}[htbp]
\setstretch{1.1}
  \centering
  
    \scalebox{1.}{
    \begin{tabular}{c}
    \\ 

    {\centering{\rowincludegraphics[width=\imagewidth\linewidth]{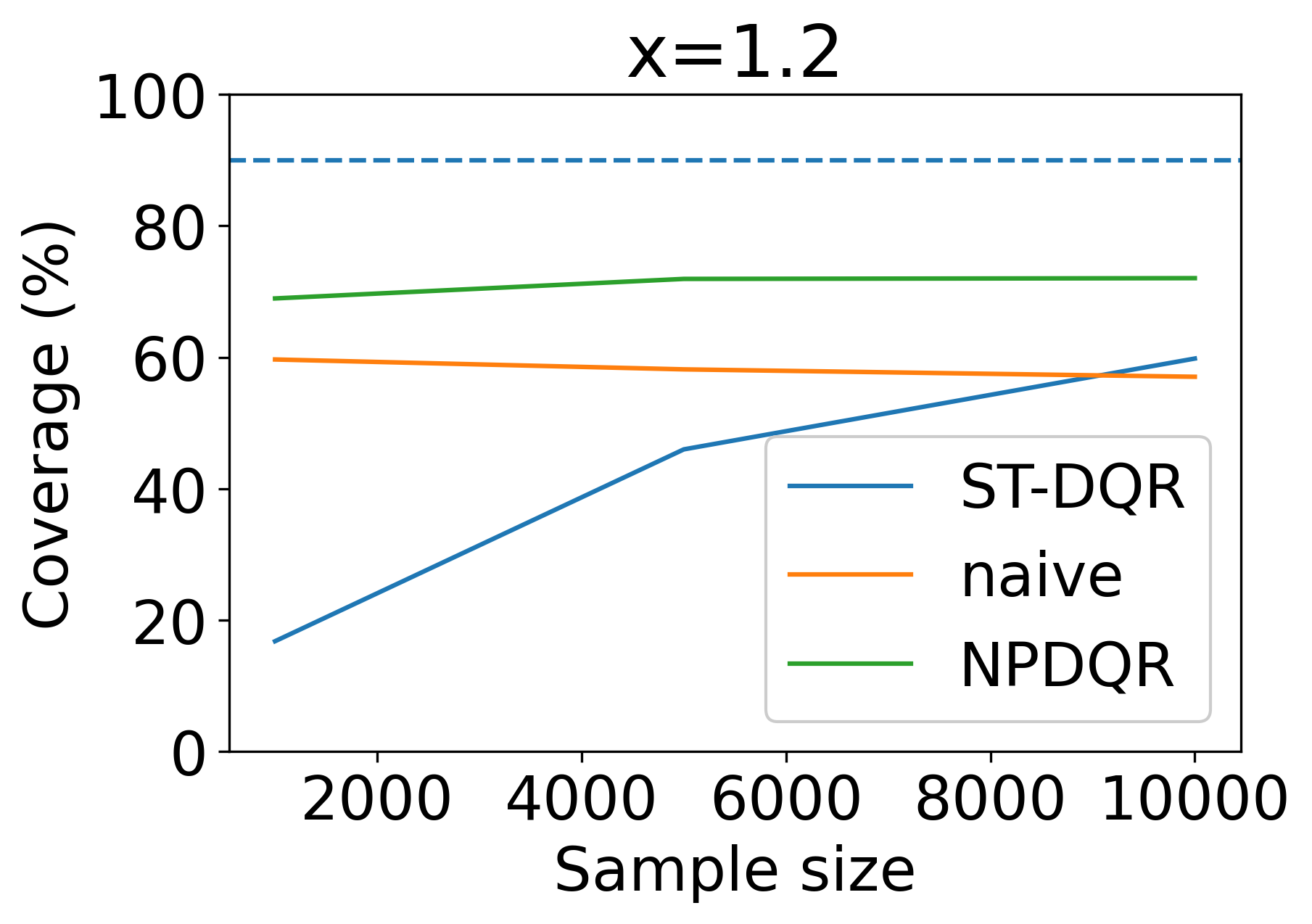}}} \\ {\centering{\rowincludegraphics[width=\imagewidth\linewidth]{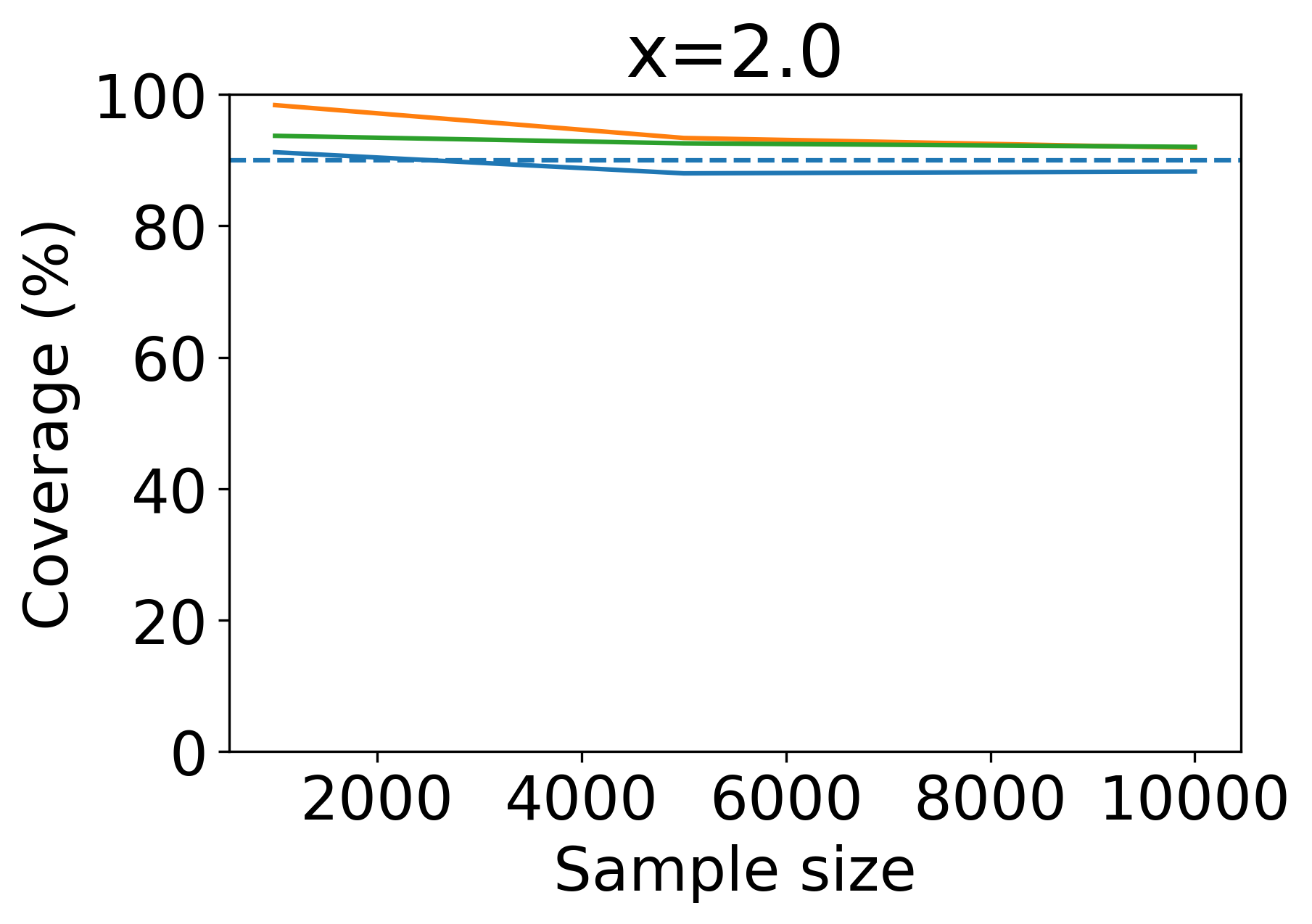}}} \\ {\centering{\rowincludegraphics[width=\imagewidth\linewidth]{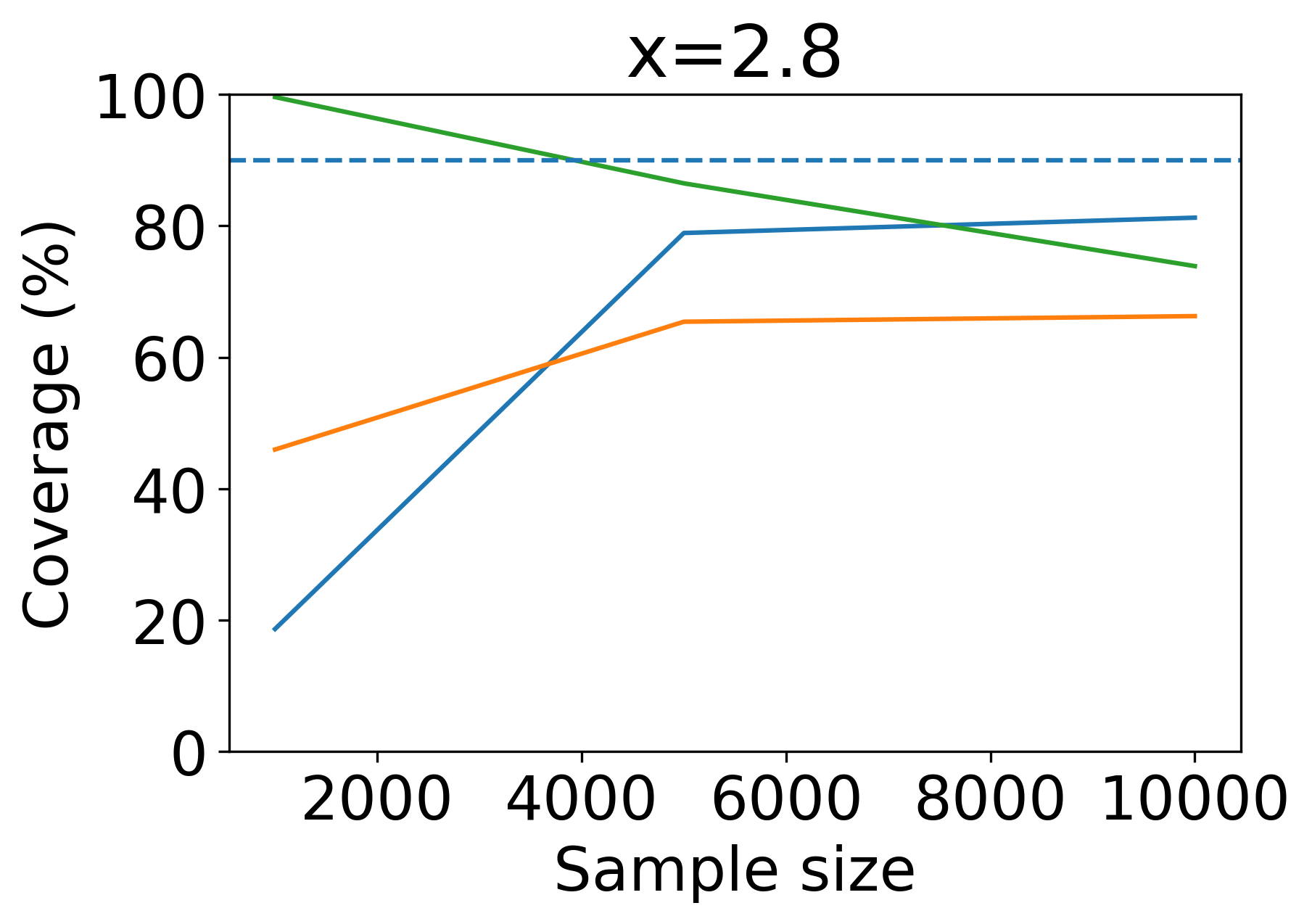}}}
    \end{tabular}%
    }
    \captionsetup{format=hang} \caption{Conditional coverage achieved over the non-linear synthetic data with increasing sample size and for different feature vectors.}
    
\label{fig:syn_cov_vs_n}
\end{figure}

\renewcommand\imagewidth{0.25}
\renewcommand\texthspace{1.15cm}
\renewcommand\coveragetexthspace{1.3cm}

\begin{figure}[htbp]
\setstretch{1.1}
  \centering
  
    \scalebox{1.}{
    \begin{tabular}{cccc}
    \multicolumn{1}{c}{Data Set Size} & \multicolumn{3}{c}{\textbf{Quantile region}}\\ 
    
    {} & {\parbox{\imagewidth\linewidth} {\quad \hspace{\texthspace} $x=1.5$}}  & {\parbox{\imagewidth\linewidth} {\quad \hspace{\texthspace} $x=2$}} &  {\parbox{\imagewidth\linewidth} {\quad \hspace{\texthspace} $x=2.5$}}   \\ \\
    
    500 & {\centering{\rowincludegraphics[width=\imagewidth\linewidth]{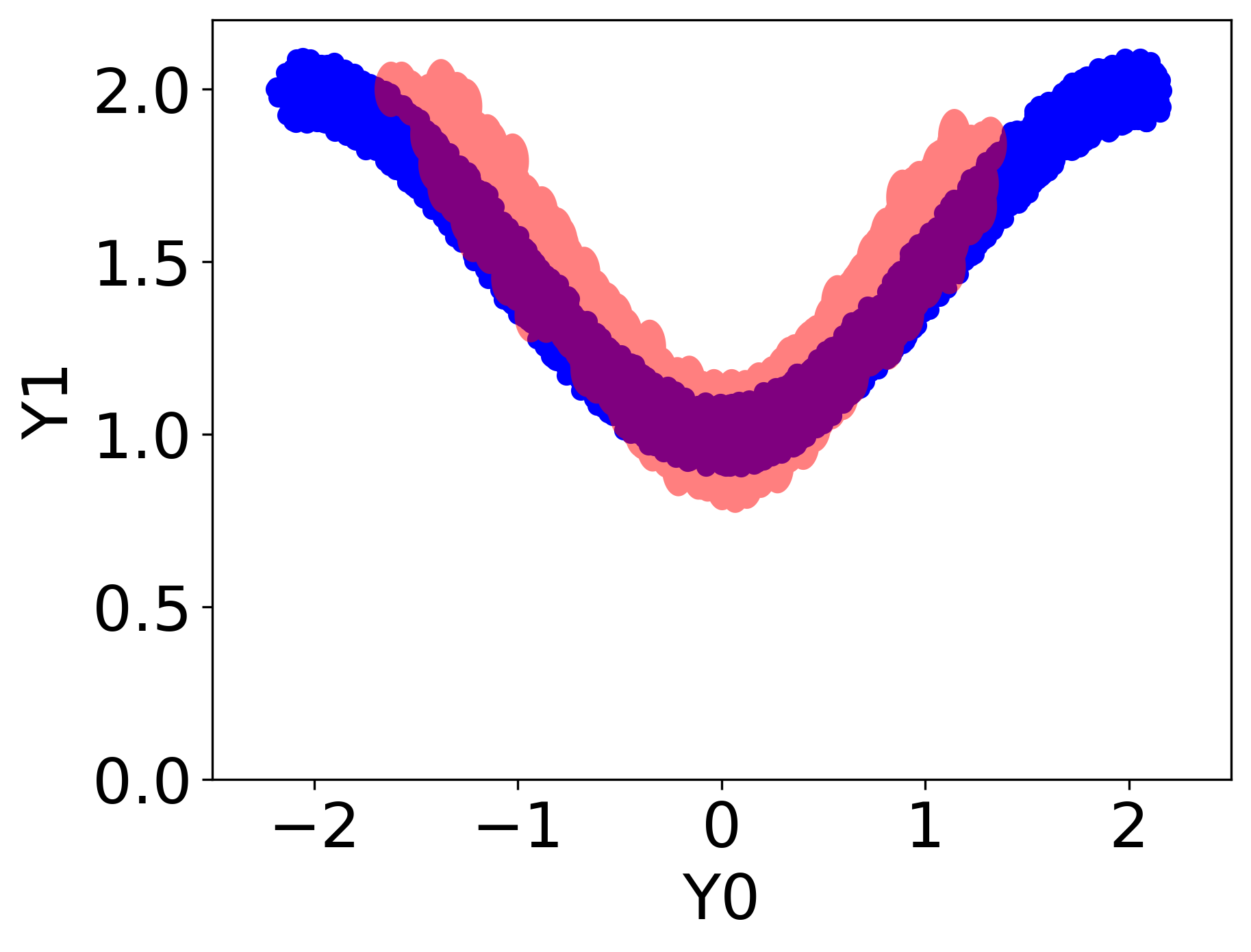}}} & {\centering{\rowincludegraphics[width=\imagewidth\linewidth]{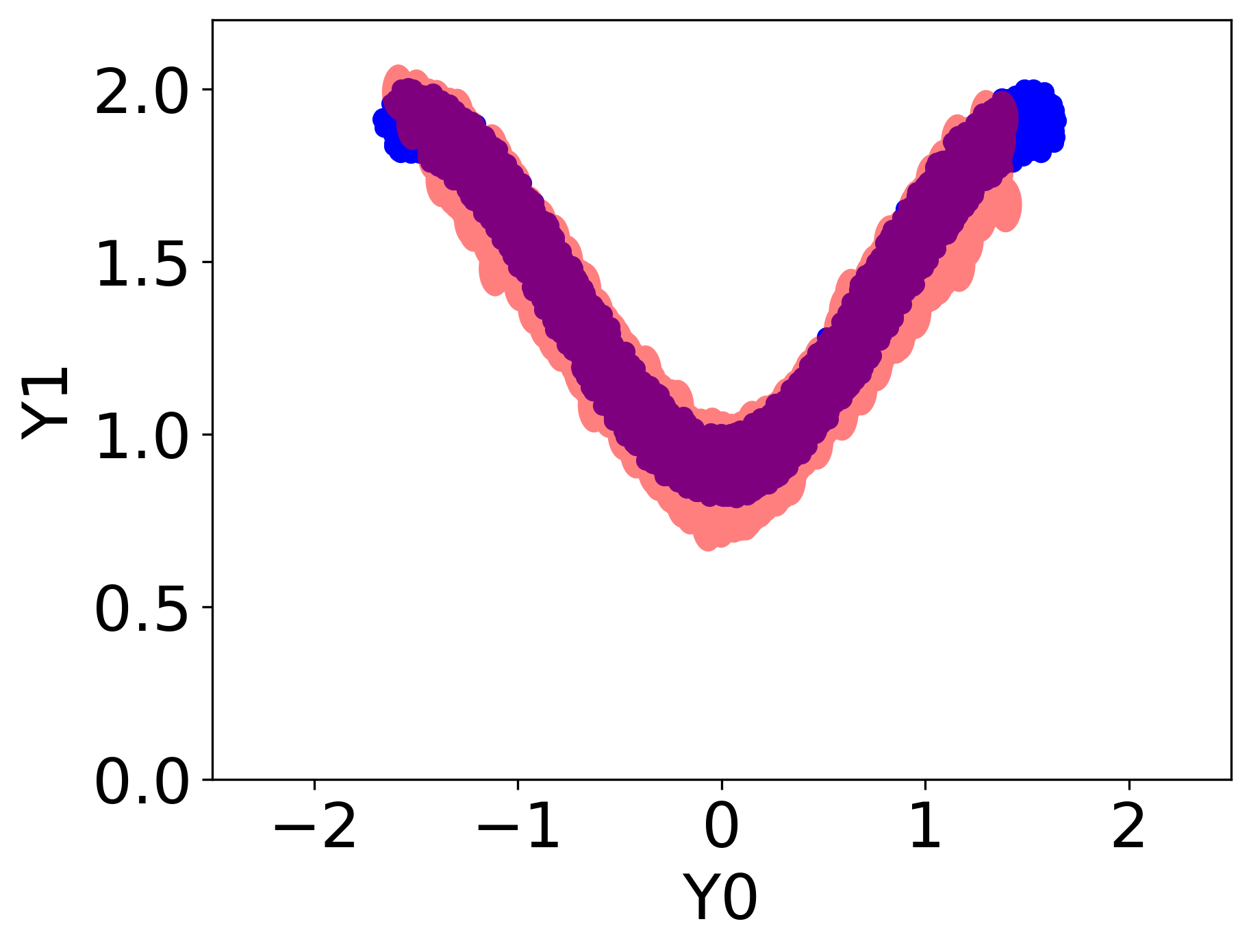}}} & {\centering{\rowincludegraphics[width=\imagewidth\linewidth]{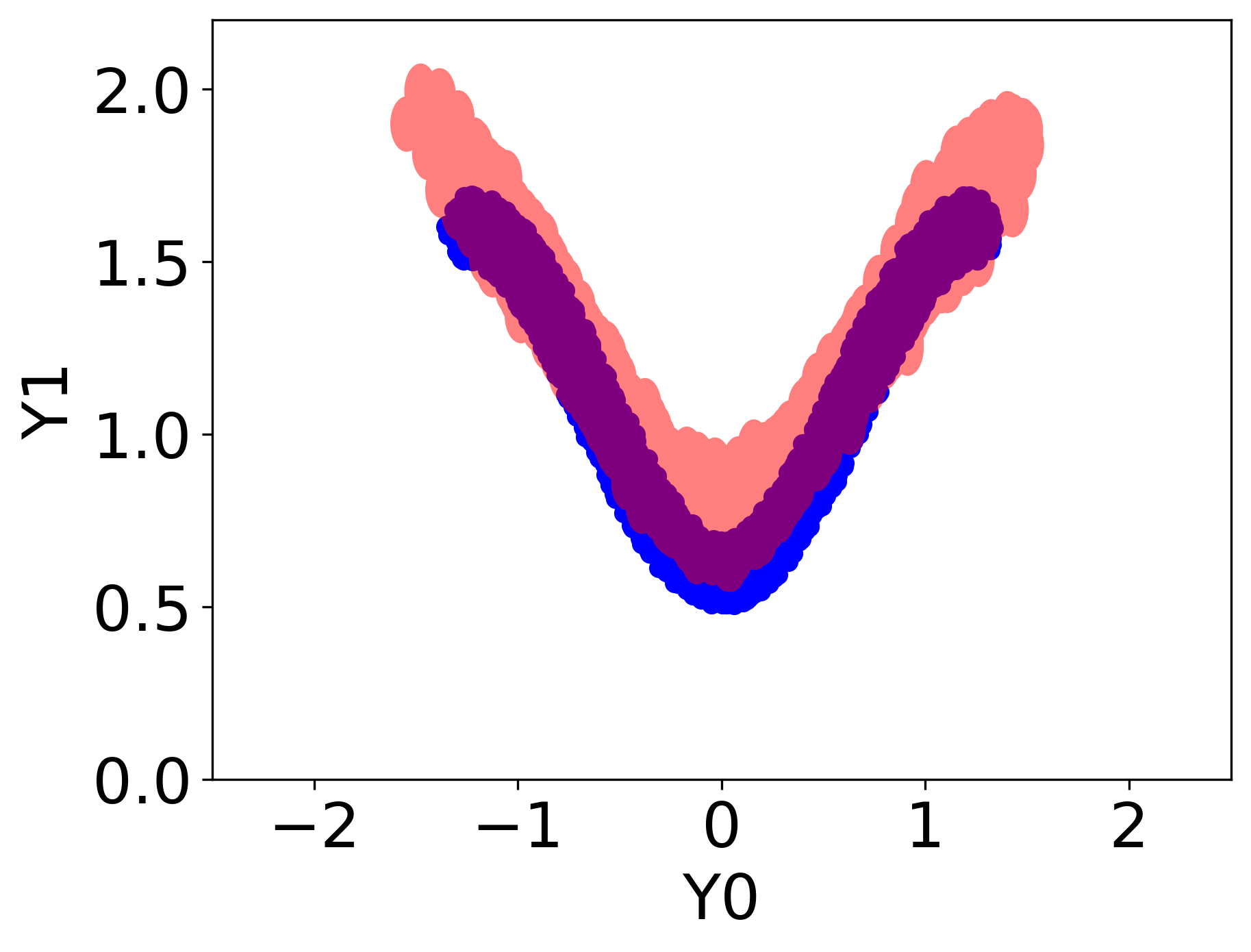}}} \\
    
    Coverage &{\parbox{\imagewidth\linewidth} {\quad \hspace{\coveragetexthspace} 61.5\% }}& {\parbox{\imagewidth\linewidth} {\quad \hspace{\coveragetexthspace} 92\% }} & {\parbox{\imagewidth\linewidth} {\quad \hspace{\coveragetexthspace} 91.58\% }} \\
    \\
    10000 & {\centering{\rowincludegraphics[width=\imagewidth\linewidth]{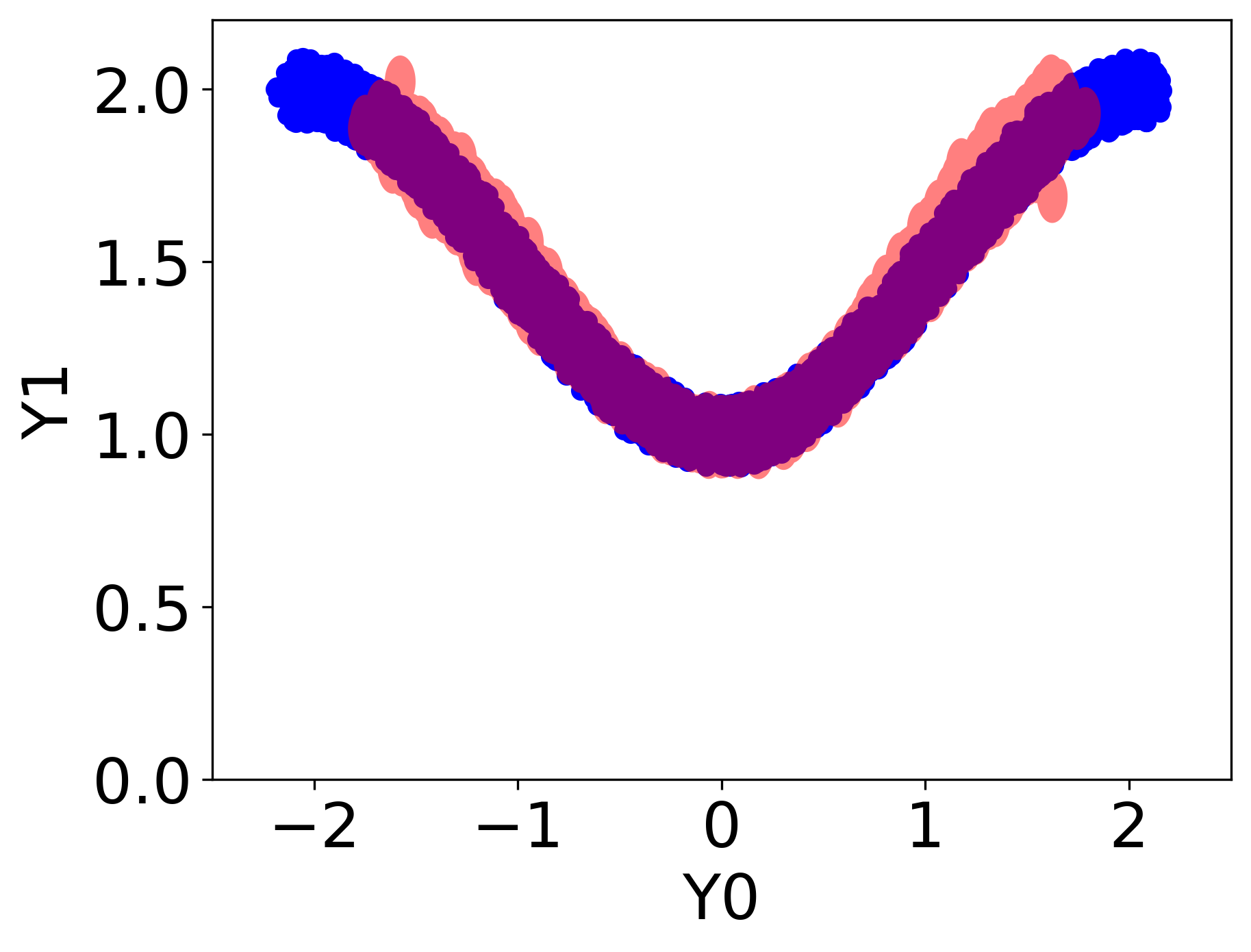}}} &
    {\centering{\rowincludegraphics[width=\imagewidth\linewidth]{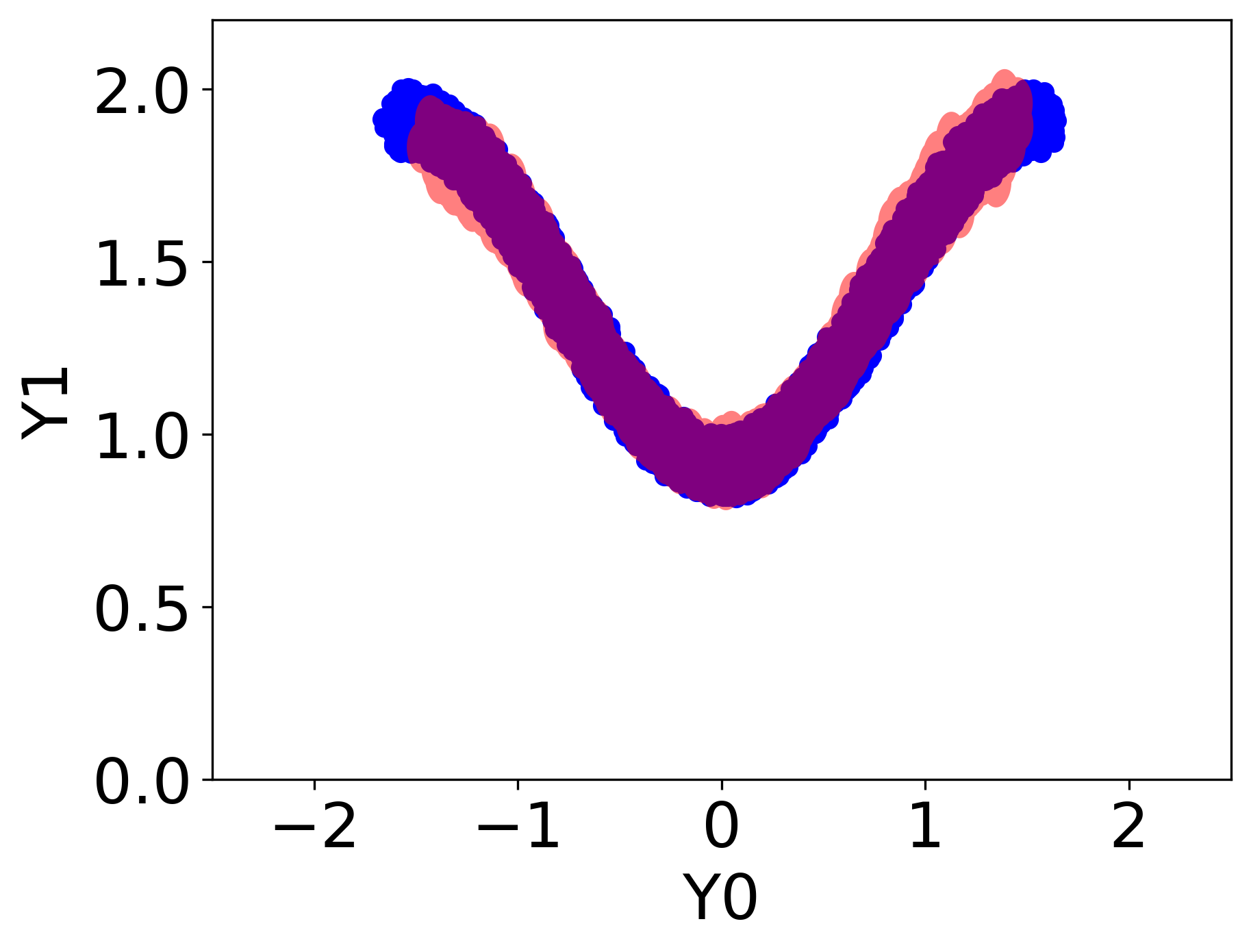}}} & 
    {\centering{\rowincludegraphics[width=\imagewidth\linewidth]{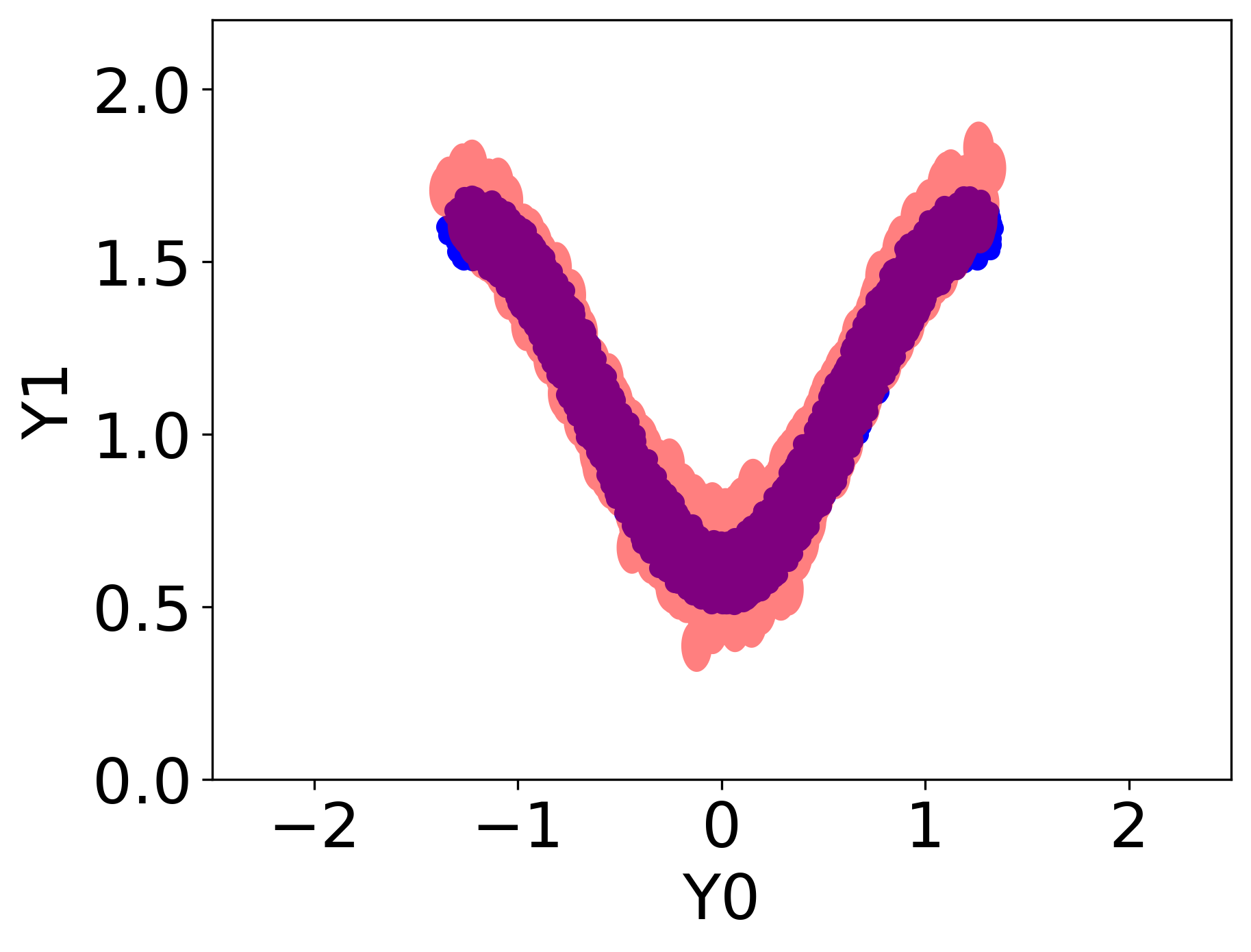}}} \\
        
    Coverage &{\parbox{\imagewidth\linewidth} {\quad \hspace{\coveragetexthspace} 84.12\% }}& {\parbox{\imagewidth\linewidth} {\quad \hspace{\coveragetexthspace} 93.28\% }} & {\parbox{\imagewidth\linewidth} {\quad \hspace{\coveragetexthspace} 99.32\% }} \\
    \\
    50000 & {\centering{\rowincludegraphics[width=\imagewidth\linewidth]{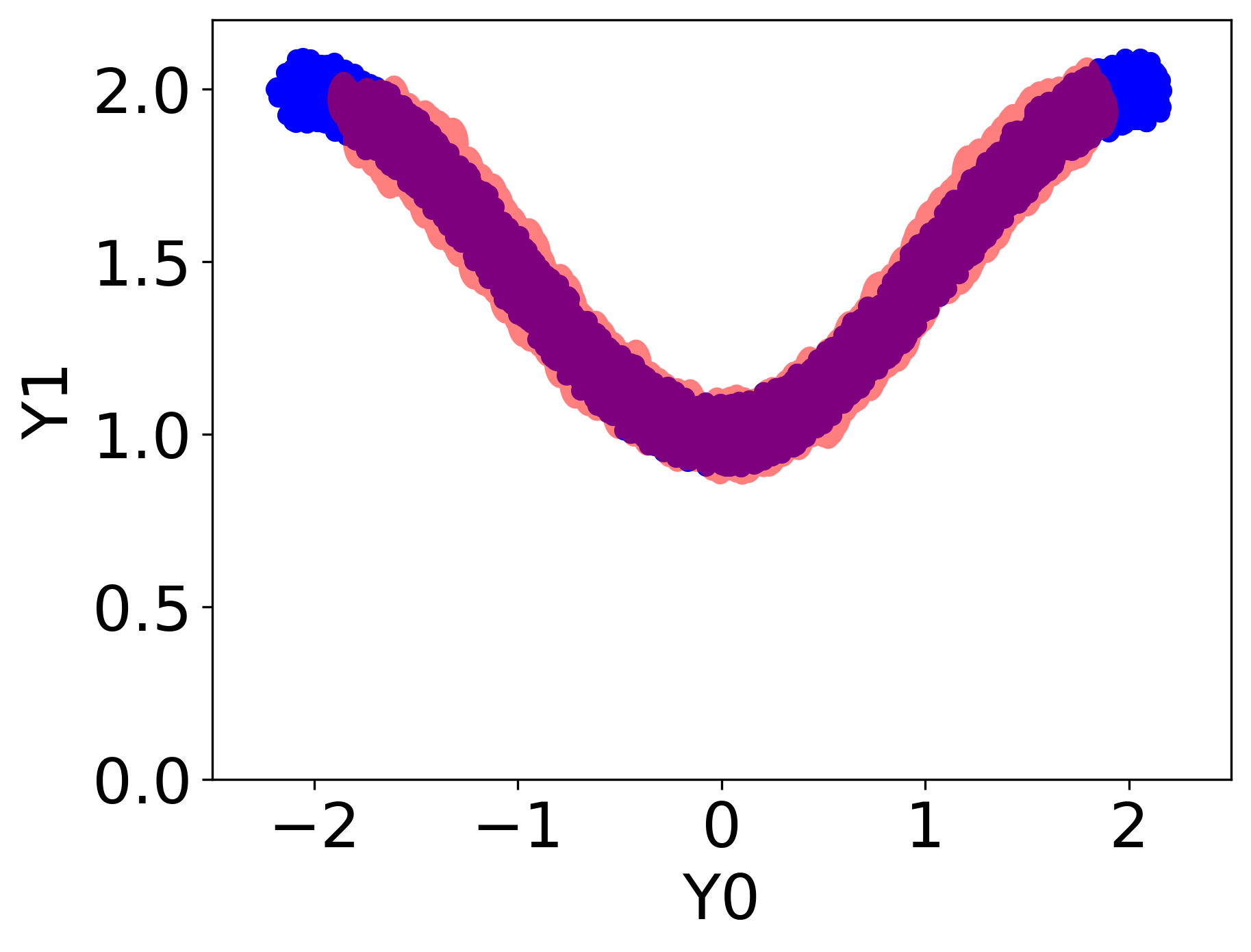}}} &
    {\centering{\rowincludegraphics[width=\imagewidth\linewidth]{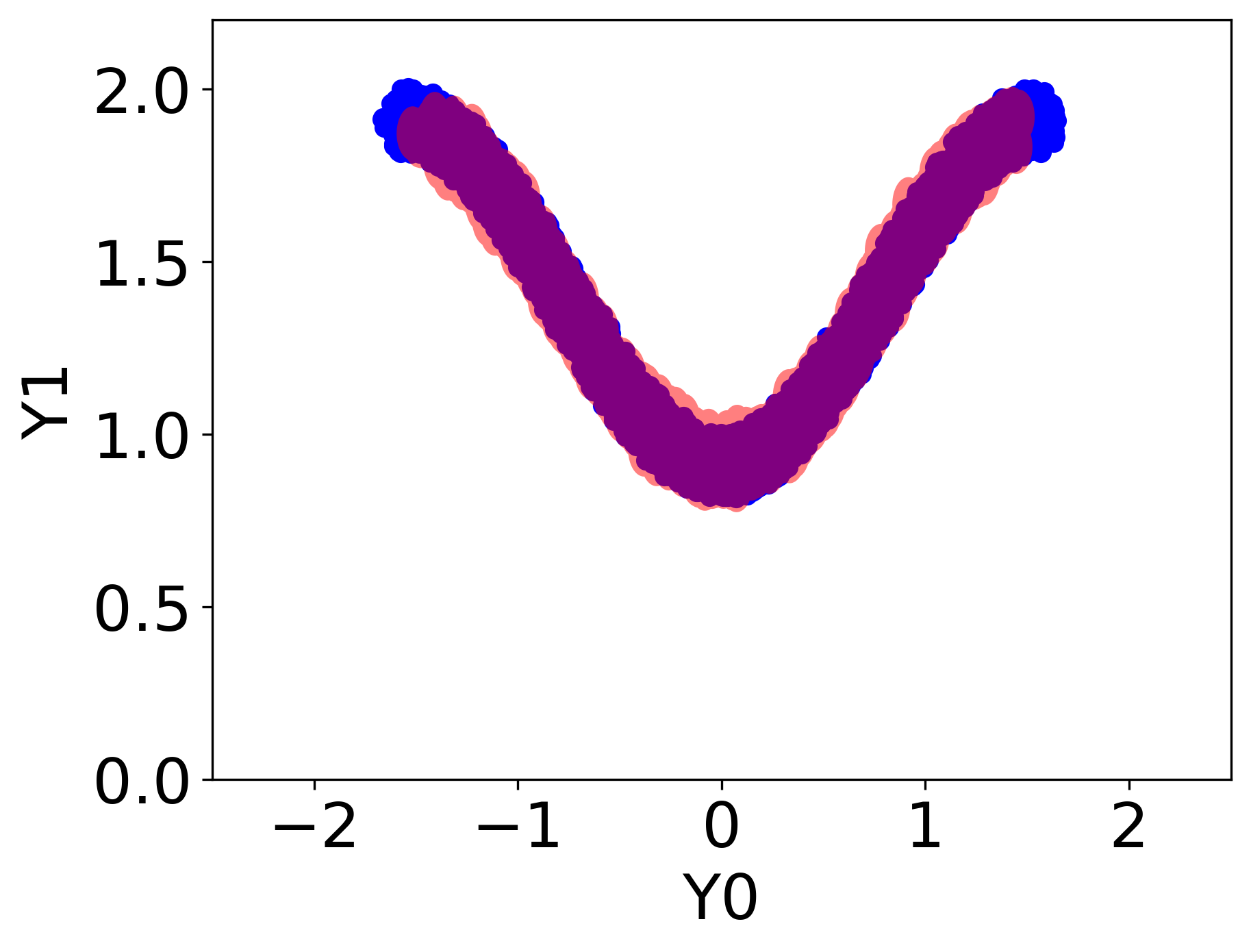}}} & 
    {\centering{\rowincludegraphics[width=\imagewidth\linewidth]{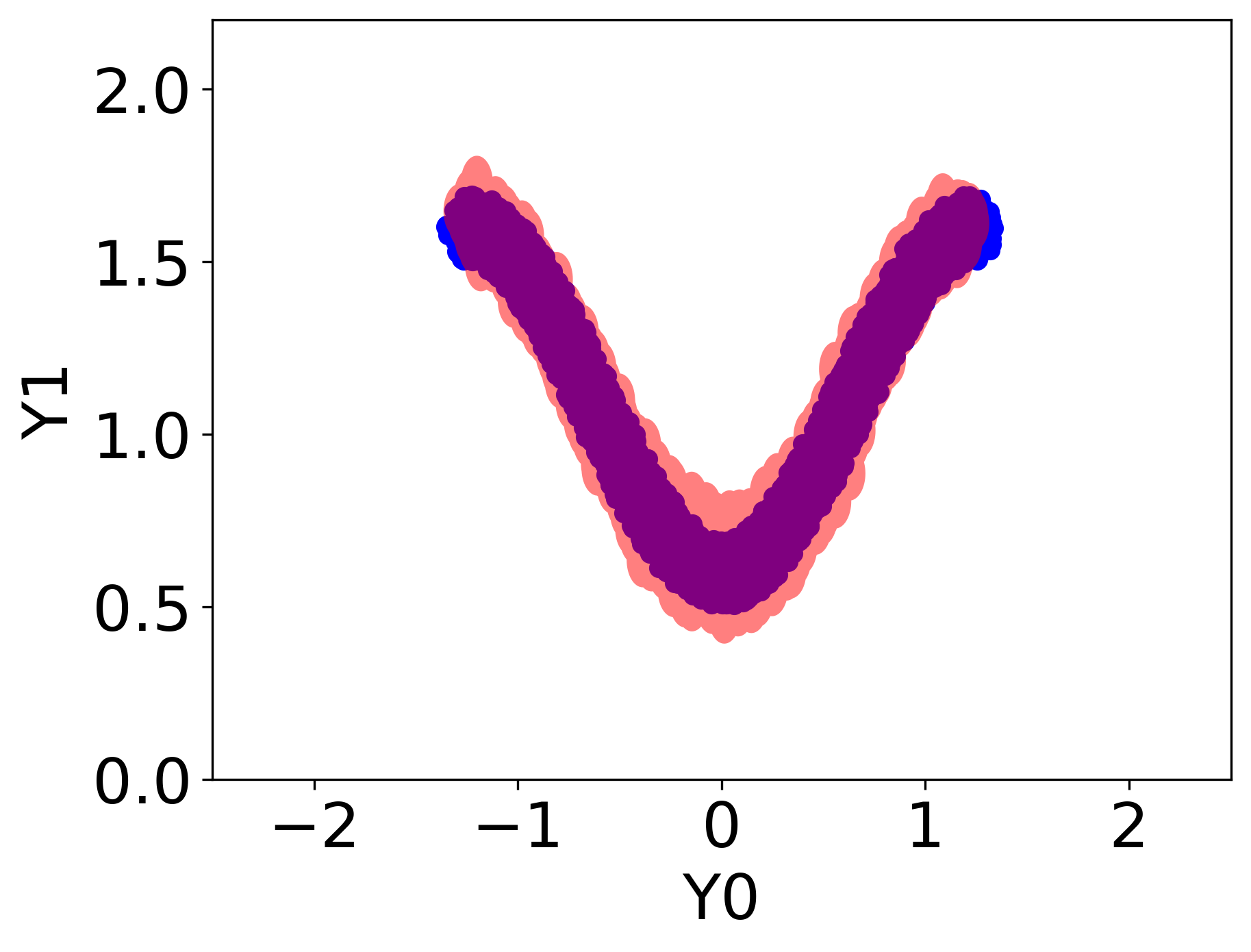}}} \\
    
    Coverage &{\parbox{\imagewidth\linewidth} {\quad \hspace{\coveragetexthspace} 88.01\% }}& {\parbox{\imagewidth\linewidth} {\quad \hspace{\coveragetexthspace} 93.42\% }} & {\parbox{\imagewidth\linewidth} {\quad \hspace{\coveragetexthspace} 98.95\% }} \\
    
    \end{tabular}%
    }
    \captionsetup{format=hang} \caption{Calibrated quantile regions constructed by \texttt{ST-DQR} for the linear synthetic data set with $p=10$ for different data set sizes.}
    
\label{fig:syn_changing_n_data_results}
\end{figure}

\renewcommand\imagewidth{0.25}
\renewcommand\texthspace{1.15cm}
\renewcommand\coveragetexthspace{1.3cm}

\begin{figure}[htbp]
\setstretch{1.1}
  \centering
  
    \scalebox{1.}{
    \begin{tabular}{cccc}
    \multicolumn{1}{c}{Data Set Size} & \multicolumn{3}{c}{\textbf{Quantile region}}\\ 
    
    {} & {\parbox{\imagewidth\linewidth} {\quad \hspace{\texthspace} $x=1.5$}}  & {\parbox{\imagewidth\linewidth} {\quad \hspace{\texthspace} $x=2$}} &  {\parbox{\imagewidth\linewidth} {\quad \hspace{\texthspace} $x=2.5$}}   \\ \\
    
    500 & {\centering{\rowincludegraphics[width=\imagewidth\linewidth]{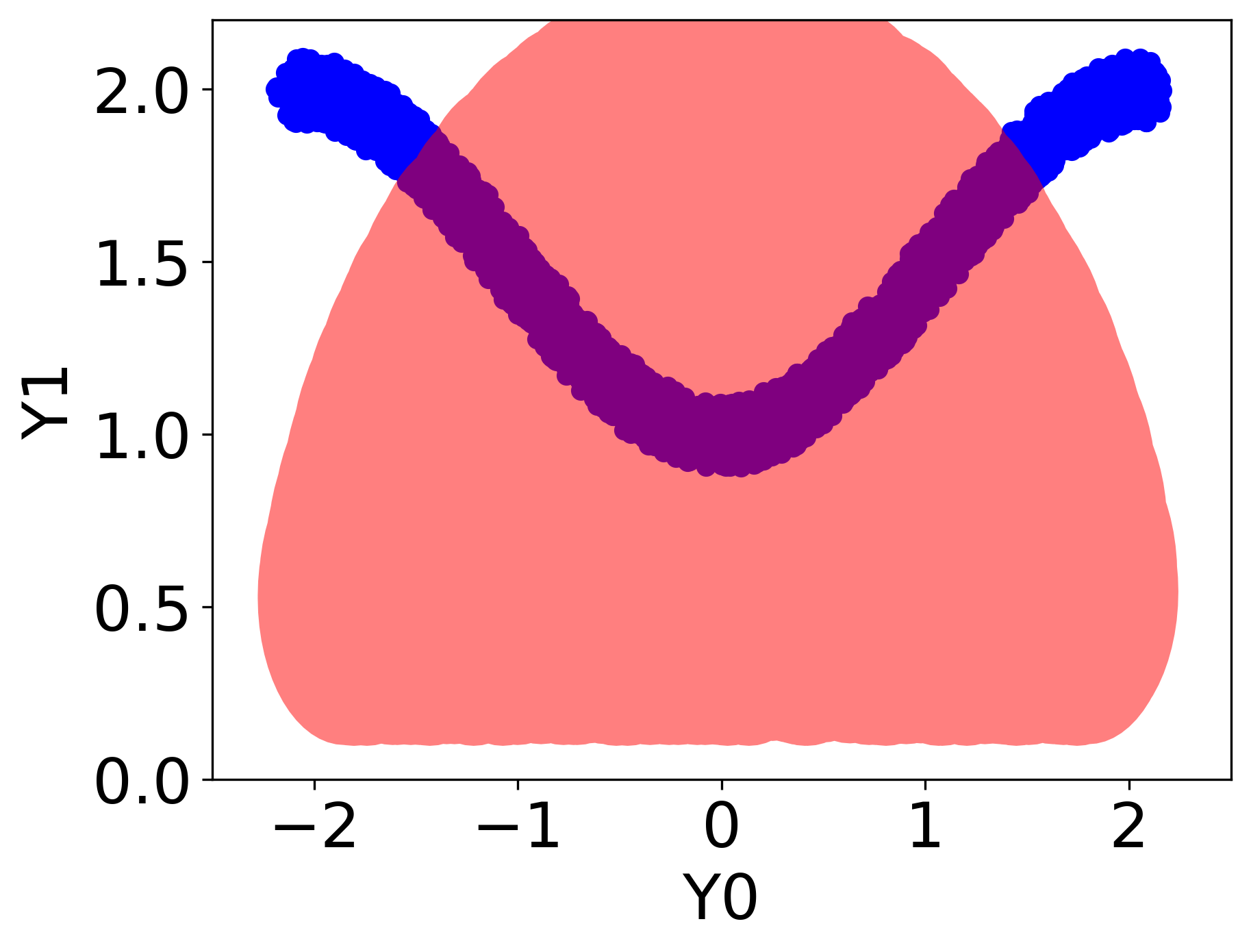}}} & {\centering{\rowincludegraphics[width=\imagewidth\linewidth]{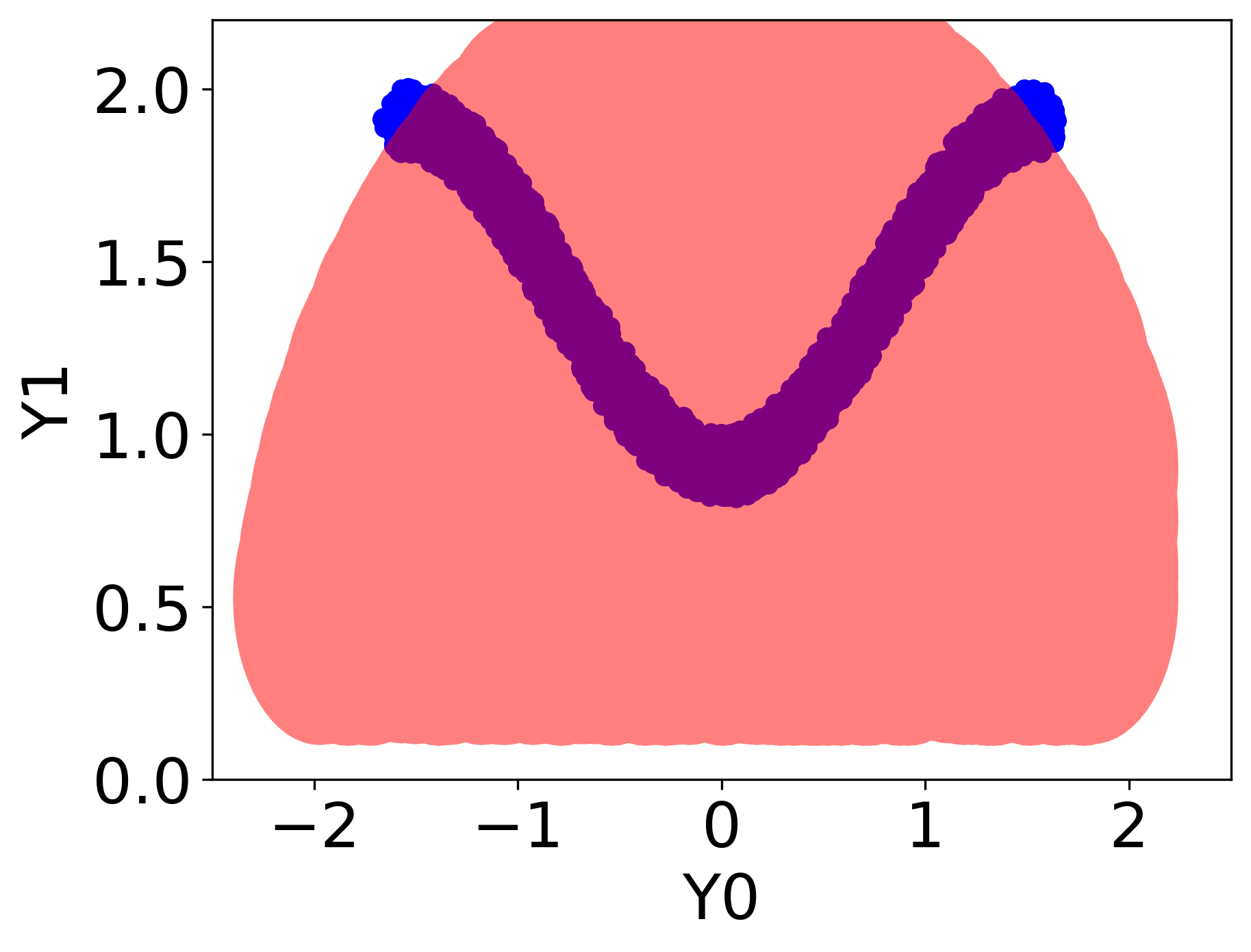}}} & {\centering{\rowincludegraphics[width=\imagewidth\linewidth]{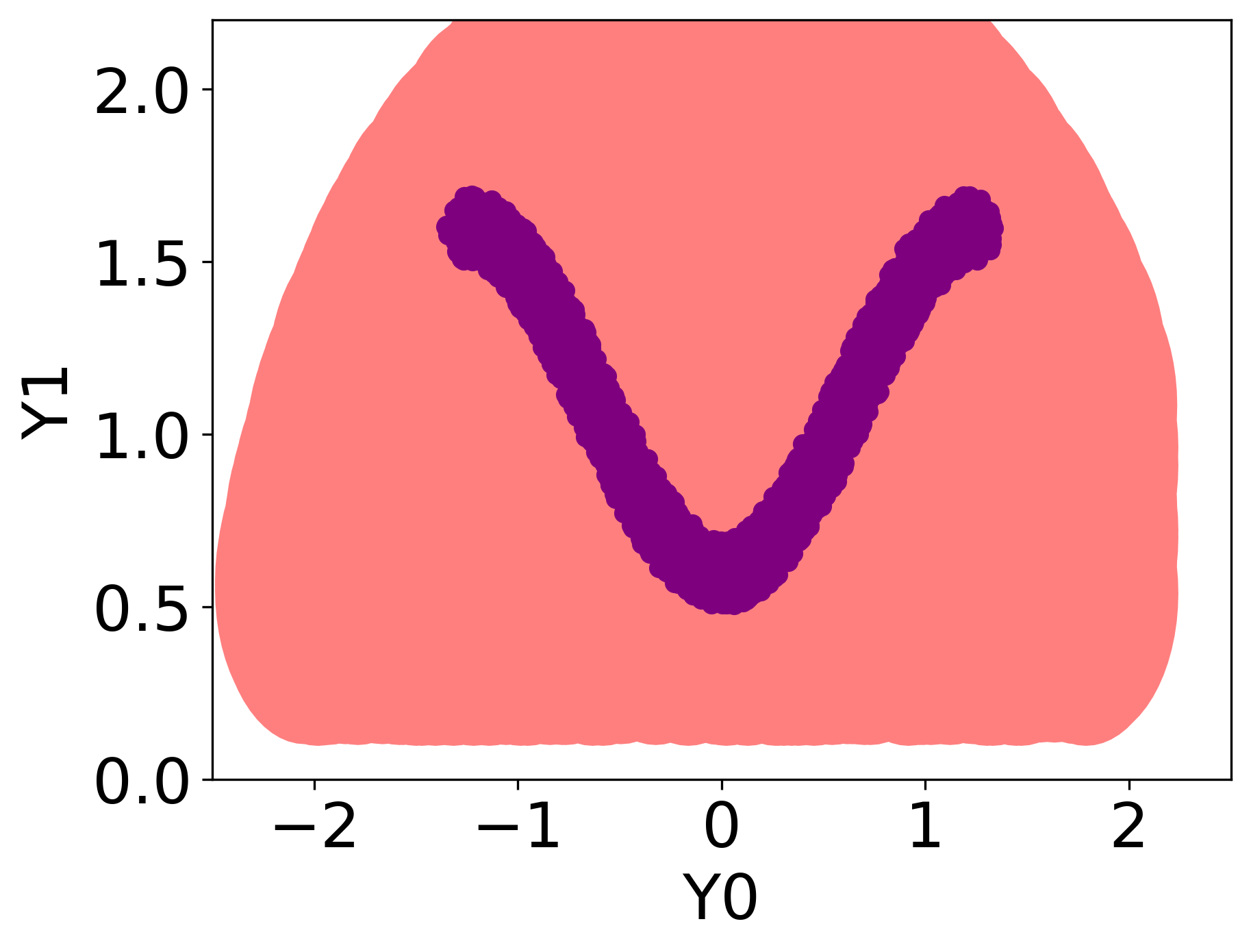}}} \\
    
    Coverage &{\parbox{\imagewidth\linewidth} {\quad \hspace{\coveragetexthspace} 71.16\% }}& {\parbox{\imagewidth\linewidth} {\quad \hspace{\coveragetexthspace} 97.63\% }} & {\parbox{\imagewidth\linewidth} {\quad \hspace{\coveragetexthspace} 100\% }} \\
    \\

    10000 & {\centering{\rowincludegraphics[width=\imagewidth\linewidth]{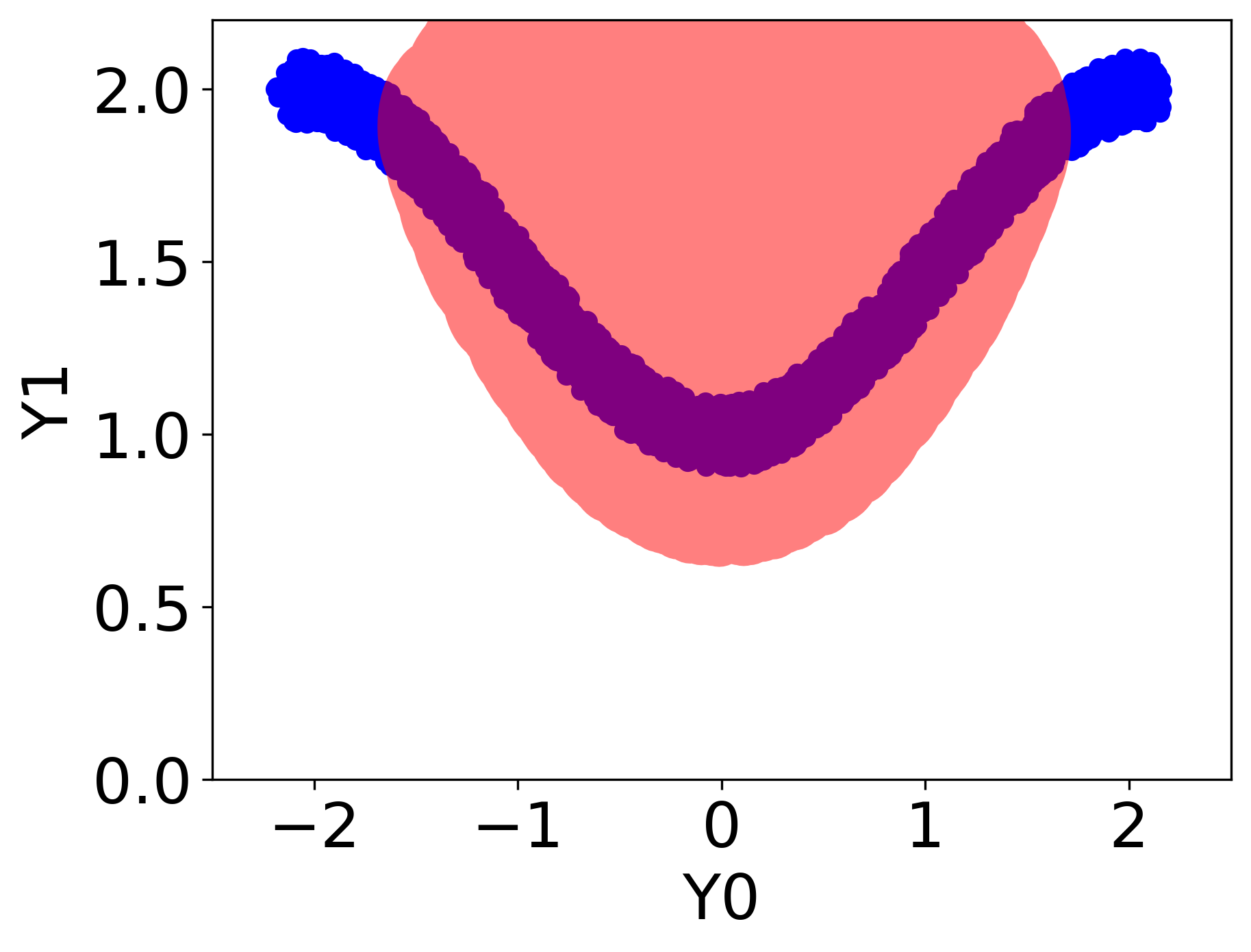}}} &
    {\centering{\rowincludegraphics[width=\imagewidth\linewidth]{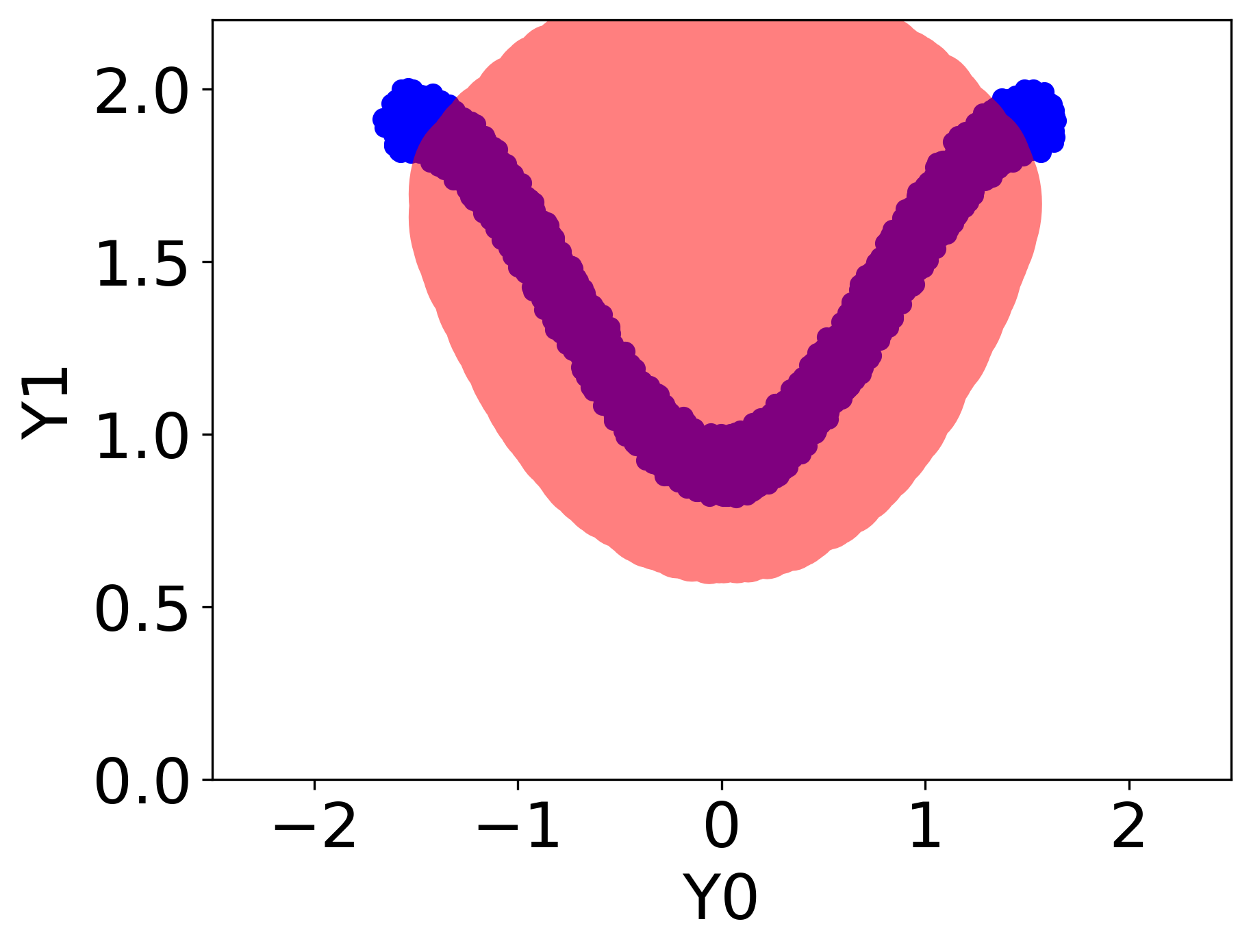}}} & 
    {\centering{\rowincludegraphics[width=\imagewidth\linewidth]{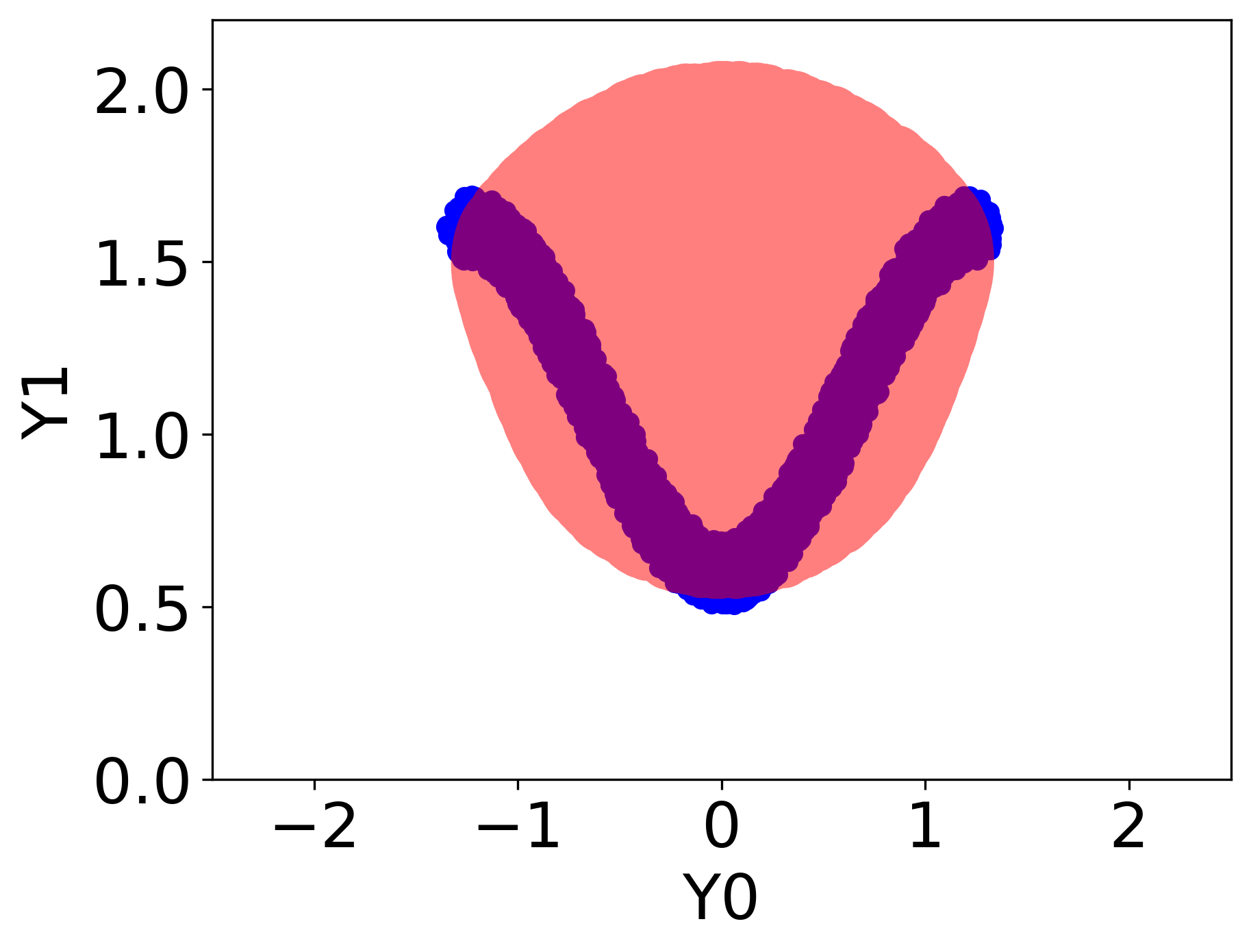}}} \\
        
        Coverage &{\parbox{\imagewidth\linewidth} {\quad \hspace{\coveragetexthspace} 79.5\% }}& {\parbox{\imagewidth\linewidth} {\quad \hspace{\coveragetexthspace} 91\% }} & {\parbox{\imagewidth\linewidth} {\quad \hspace{\coveragetexthspace} 98.2\% }} \\
    \\
    50000 & {\centering{\rowincludegraphics[width=\imagewidth\linewidth]{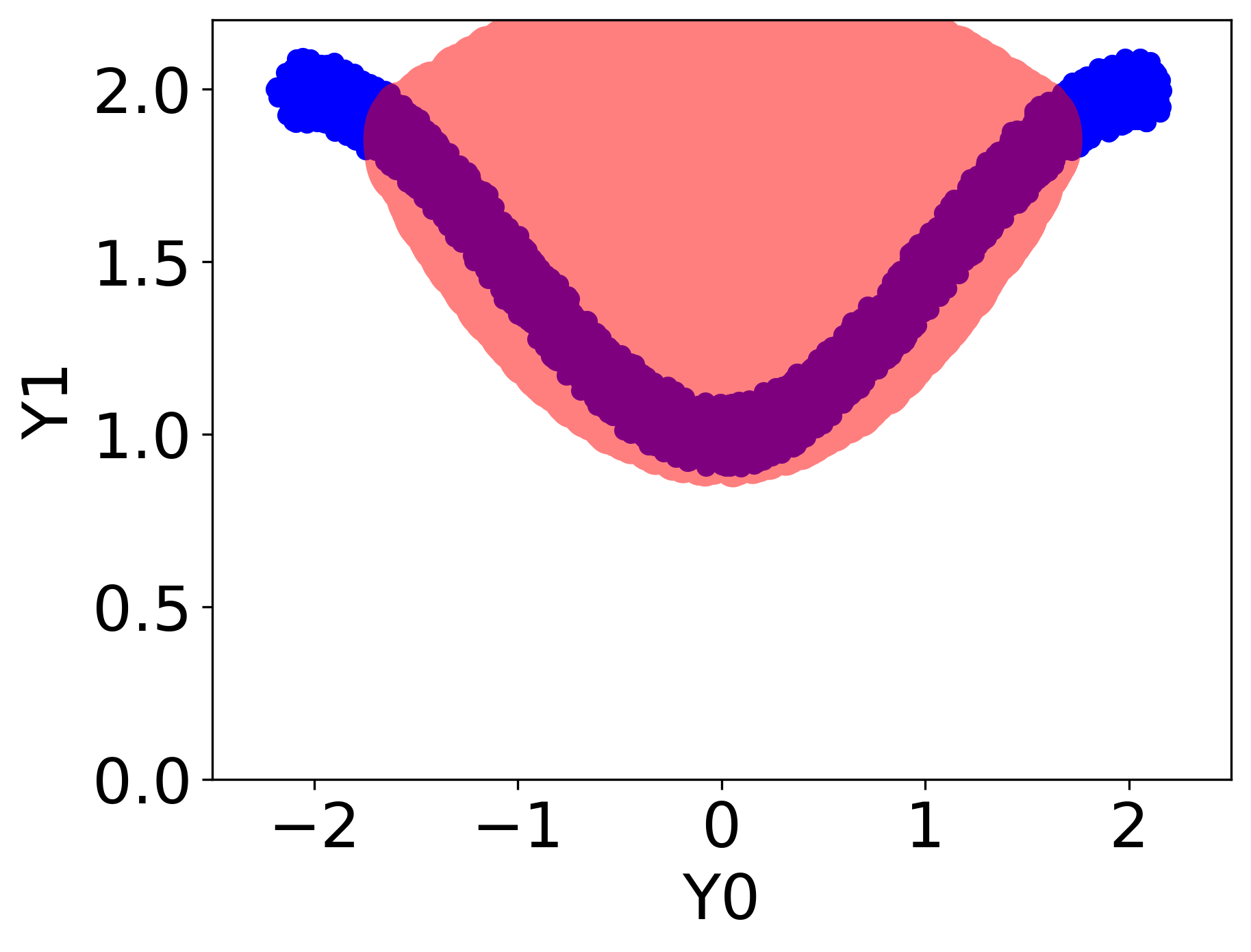}}} &
    {\centering{\rowincludegraphics[width=\imagewidth\linewidth]{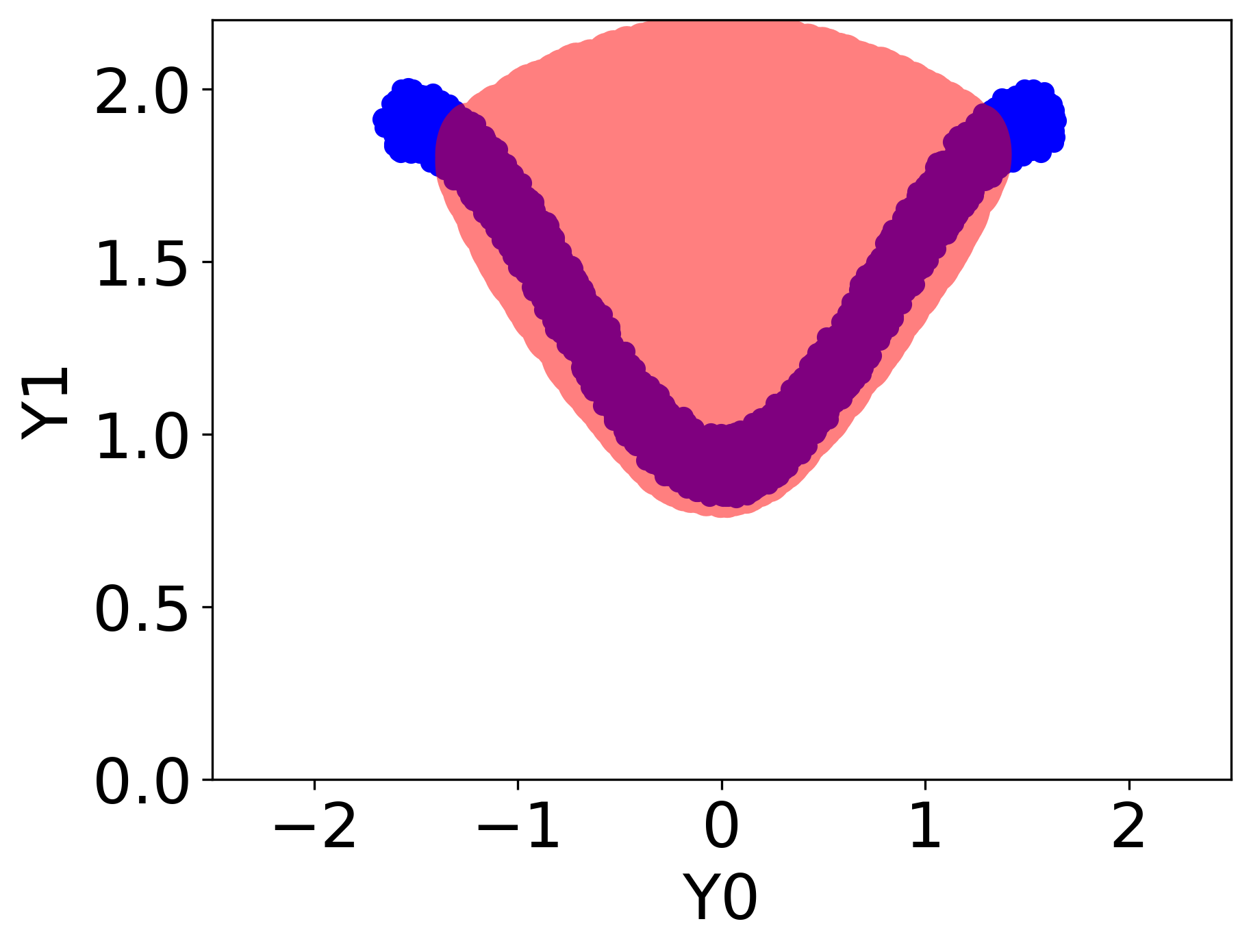}}} & 
    {\centering{\rowincludegraphics[width=\imagewidth\linewidth]{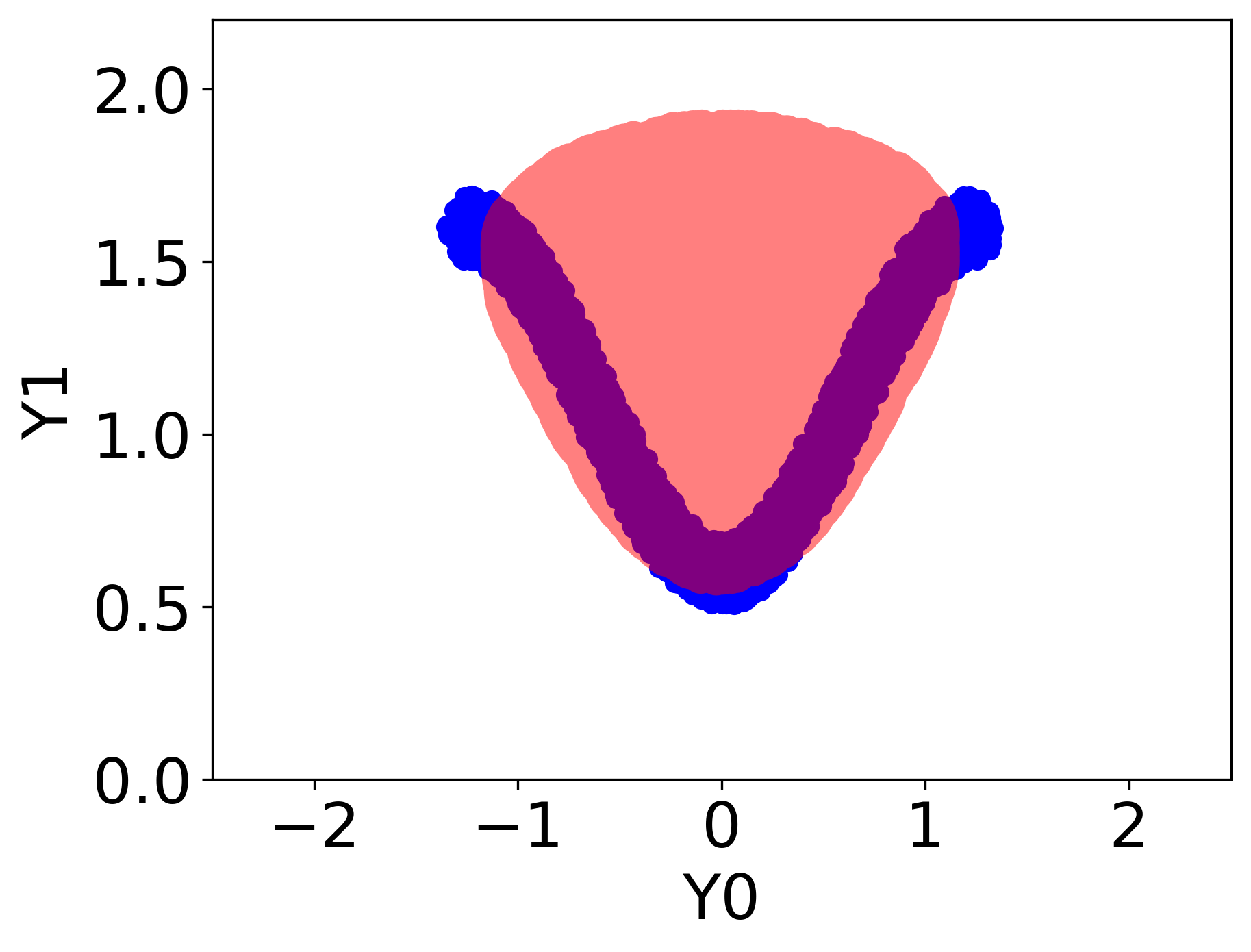}}} \\
    
    Coverage &{\parbox{\imagewidth\linewidth} {\quad \hspace{\coveragetexthspace} 81.95\% }}& {\parbox{\imagewidth\linewidth} {\quad \hspace{\coveragetexthspace} 87.15\% }} & {\parbox{\imagewidth\linewidth} {\quad \hspace{\coveragetexthspace} 91.15\% }} \\
    
    \end{tabular}%
    }
    \captionsetup{format=hang} \caption{Calibrated quantile regions constructed by \texttt{NPDQR} for the linear synthetic data set with $p=10$ for different data set sizes.}
    
\label{fig:npdqr_syn_changing_n_data_results}
\end{figure}

\subsubsection{Real Data sets}

In this section, we assess conditional coverage violation on the real data sets by
measuring the deviation of the cluster coverage from the nominal level. Formally, we define the \texttt{$\Delta$Coverage} as:
\begin{equation}
\texttt{$\Delta$Coverage} = \frac{1}{|C|} \mathlarger{\mathlarger{\sum}}_{c\in C} \left| \frac{1}{|c|} \sum_{x_i\in c} \mathbbm{1}\{ y_i \in S^{\gamma_{\text{cal}}}(x_i) \} - (1-\alpha) \right| ,
\end{equation}
where $C$ is a split of the test set split into clusters, as defined in Section~\ref{sec:real_data_results}.
Figure \ref{fig:real_data_delta_cov} displays the \texttt{$\Delta$Coverage} achieved by the techniques discussed in this paper on the real data sets introduced in Section \ref{sec:real_data_results}. This figure reveals that our \texttt{ST-DQR} attains the best \texttt{$\Delta$Coverage}, indicating for a good conditional coverage compared to existing methods.

In Figure \ref{fig:reduced_real_data_delta_cov} we report the \texttt{$\Delta$Coverage} on a different version of the real data sets, in which each feature vector is reduced to dimension 10 using PCA so that \texttt{VQR} is feasible. The table shows that even for these modified versions of the data sets, our method achieves better conditional coverage compared to \texttt{VQR}.

\renewcommand\imagewidth{0.8}
\begin{figure}[htbp]
\setstretch{1.1}
  \centering
  
    \scalebox{1.}{
    \rowincludegraphics[width=\imagewidth\linewidth]{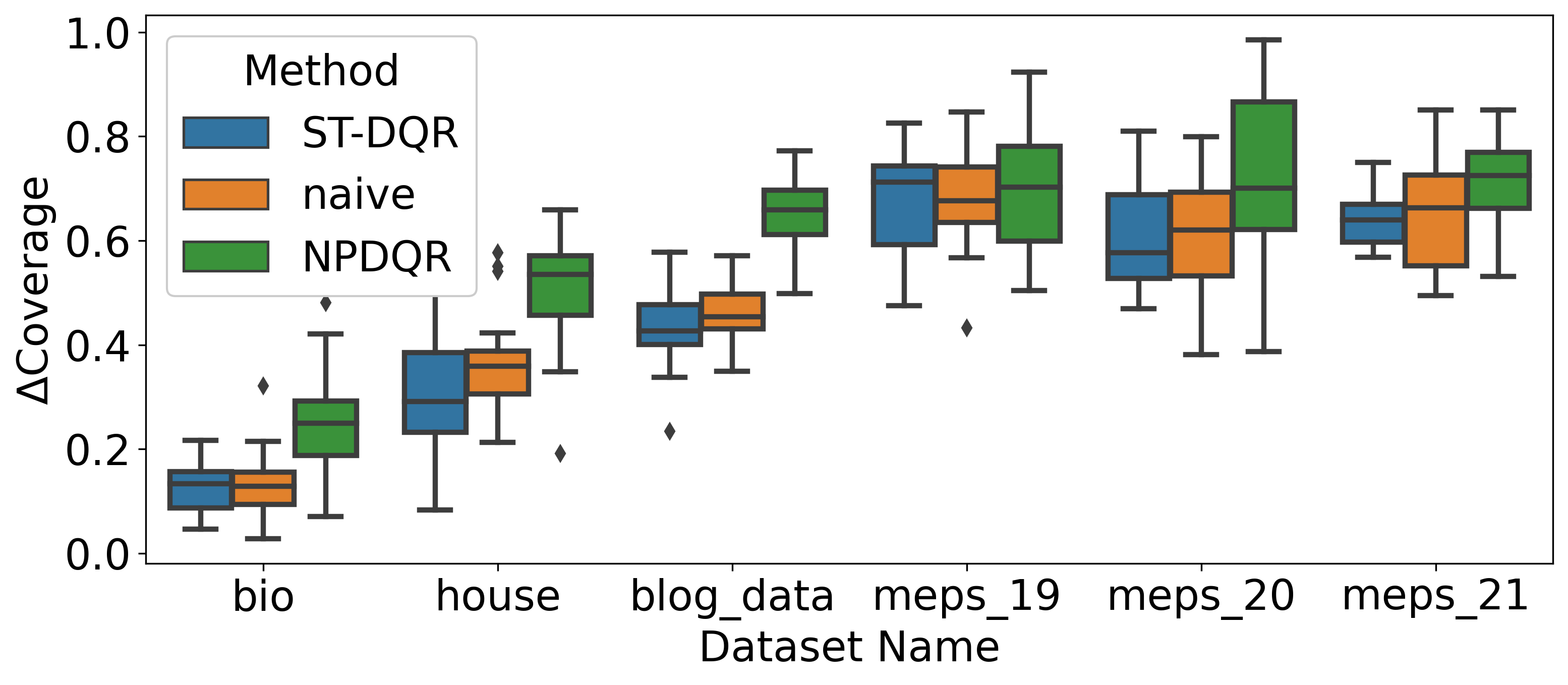}
    }
    \captionsetup{format=hang} \caption{\texttt{$\Delta$Coverage} achieved on the real data sets.}
    
\label{fig:real_data_delta_cov}
\end{figure}

\begin{figure}[htbp]
\setstretch{1.1}
  \centering
  
    \scalebox{1.}{
    \rowincludegraphics[width=\imagewidth\linewidth]{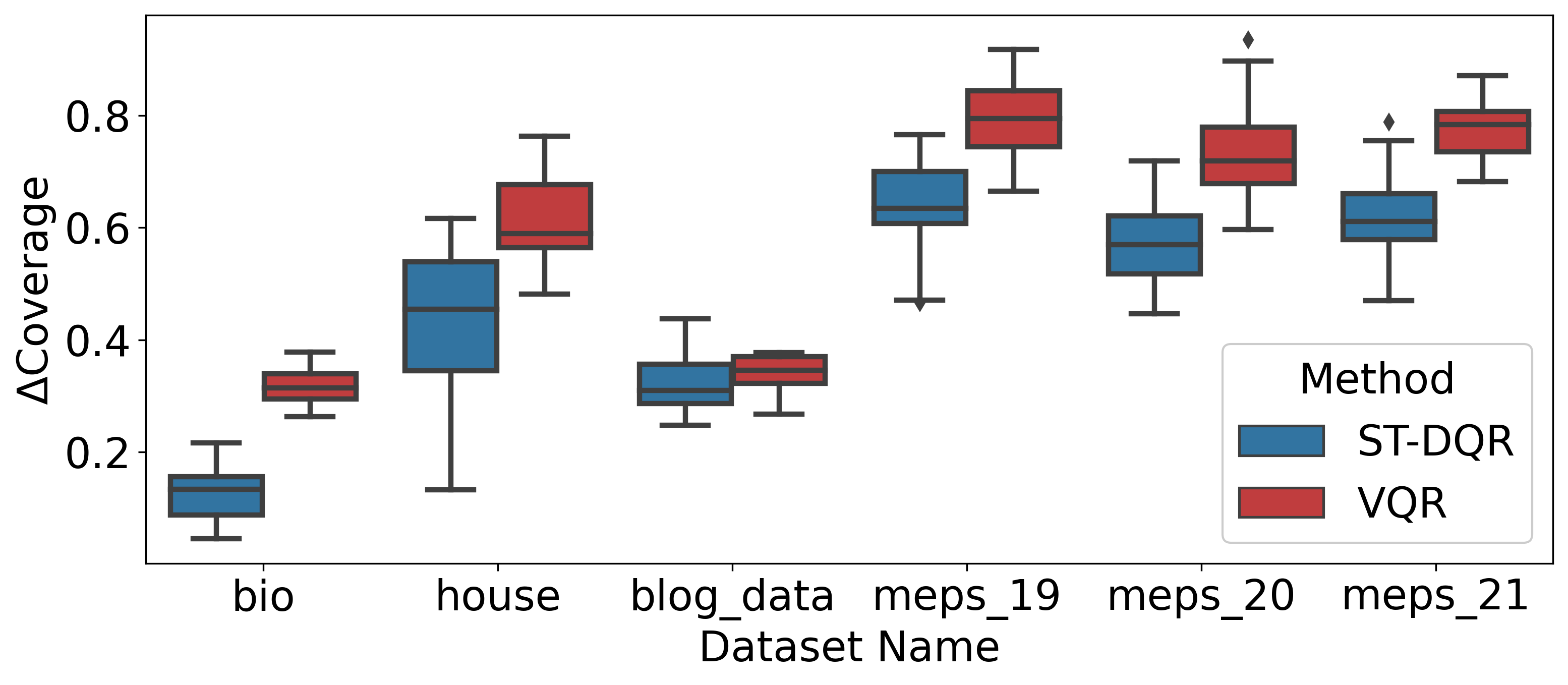}
    }
    \captionsetup{format=hang} \caption{\texttt{$\Delta$Coverage} achieved on the reduced version of the real data sets where all feature vectors were reduced to dimension 10 using PCA.}
    
\label{fig:reduced_real_data_delta_cov}
\end{figure}

\subsection{Standard Errors}\label{sec:std}
In this section, we report the standard errors of the metrics reported in the experiments section.

\subsubsection{Synthetic Data}\label{sec:syn_std}

We report the coverage rate along with its standard error, and the area along with its standard error in Tables \ref{tab:syn_data_coverage_std}, \ref{tab:syn_data_area_std} respectively.

\begin{table}[htbp]

\setstretch{1.5}
  \centering
\scalebox{0.8}{
\centering
    \begin{tabular}{ccc|ccccc}

    \toprule[1.1pt]
    
    \textbf{Setting} & $\boldsymbol{d}$ & $\boldsymbol{p}$ & \textbf{\texttt{ST-DQR}} &  \textbf{\texttt{Na\"ive QR}} & \textbf{\texttt{NPDQR}} & \textbf{\texttt{VQR}} \\

    \midrule
linear    &          2 &          1 &       89.943 (.126) &  90.059 (.147) &  90.041 (.162) &  89.755 (.101) \\
linear    &          2 &         10 &       89.926 (.137) &  89.789 (.154) &  90.131 (.115) &  90.065 (.125) \\
linear    &          2 &         50 &         89.91 (.08) &   89.99 (.058) &    89.96 (.07) &               - \\
linear    &          2 &        100 &       89.963 (.082) &  89.993 (.059) &  90.003 (.062) &               - \\
nonlinear &          2 &          1 &       90.126 (.169) &  90.078 (.171) &  90.165 (.163) &   90.13 (.145) \\
nonlinear &          3 &          1 &       90.165 (.139) &   90.114 (.16) &  90.021 (.141) &  90.156 (.131) \\
nonlinear &          3 &         10 &       89.991 (.164) &  89.881 (.173) &  90.051 (.133) &               - \\
nonlinear &          4 &          1 &       90.031 (.125) &  90.175 (.109) &  89.955 (.121) &               - \\
nonlinear &          4 &         10 &       89.792 (.141) &  89.841 (.155) &  89.956 (.138) &               - \\

    \bottomrule[1.1pt]
    
    \end{tabular}%
}
\captionsetup{format=hang} \caption{Simulated data experiments. Coverage rate and standard error achieved with each method.}
    \label{tab:syn_data_coverage_std}

\end{table}

\begin{table}[htbp]

\setstretch{1.5}
  \centering
\scalebox{0.8}{
\centering
    \begin{tabular}{ccc|cccc}

    \toprule[1.1pt]
    
    \textbf{Setting} &  $\boldsymbol{d}$ & $\boldsymbol{p}$ & \textbf{\texttt{ST-DQR}} &  \textbf{\texttt{Na\"ive QR}} & \textbf{\texttt{NPDQR}} & \textbf{\texttt{VQR}} \\

    \midrule
linear    &          2 &          1 &      315.313 (1.539) &     1223.439 (3.374) &      1024.657 (13.404) &     465.427 (3.019) \\
linear    &          2 &         10 &       384.33 (1.805) &     1680.308 (3.408) &      1622.519 (15.904) &     485.761 (2.263) \\
linear    &          2 &         50 &       388.561 (1.97) &     1777.014 (1.816) &      1525.568 (13.966) &                   - \\
linear    &          2 &        100 &        456.75 (5.01) &     1791.202 (1.751) &      1555.475 (23.214) &                   - \\
nonlinear &          2 &          1 &       236.668 (2.11) &      797.438 (2.514) &        694.339 (8.421) &     646.065 (2.643) \\
nonlinear &          3 &          1 &     933.924 (12.592) &  22984.579 (108.375) &   19982.523 (1050.929) &  7621.711 (135.156) \\
nonlinear &          3 &         10 &    1251.648 (32.406) &   44070.685 (140.94) &   34917.304 (2498.402) &                   - \\
nonlinear &          4 &          1 &       379.863 (7.52) &  27364.108 (154.317) &   82495.522 (5832.992) &                   - \\
nonlinear &          4 &         10 &      293.998 (5.958) &  53999.244 (210.159) &  129599.146 (9656.909) &                   - \\

    \bottomrule[1.1pt]
    
    \end{tabular}%
}

\captionsetup{format=hang} \caption{Simulated data experiments. Area and standard error of the quantile regions constructed by each method.}
    \label{tab:syn_data_area_std}

\end{table}

\subsubsection{Real Data}\label{sec:real_std}

Table \ref{tab:real_data_stderr} shows the standard error of the coverage and area of the quantile regions constructed for each of the real data sets using the methods discussed in this paper. Additionally, we report the area of quantile regions constructed by our method with different values of $r$ in Table~\ref{tab:real_area_std_different_r}. Table~\ref{tab:real_reduced_data_stderr} displays the standard error of each of the metrics for the reduced version of the real data sets. 

\begin{table}[!htb]
    \centering
    \setstretch{1.4}
    \begin{minipage}{.5\linewidth}
        \captionsetup{format=hang} \caption*{Coverage rate}
        \centering
  \scalebox{0.75}{
      \centering

    \begin{tabular}{c|ccc}

    \toprule[1.1pt]
    
    \textbf{Data Set name} & \textbf{\texttt{ST-DQR}} & \textbf{\texttt{Na\"ive QR}} & \textbf{\texttt{NPDQR}}  \\

    \midrule
\textbf{bio}       &         90.0 (.093) &  90.002 (.106) &  89.892 (.102) \\
\textbf{house}     &       90.157 (.133) &  89.978 (.133) &  89.876 (.143) \\
\textbf{blog\_data} &       90.145 (.089) &  90.064 (.076) &  90.016 (.075) \\
\textbf{meps\_19}   &       90.144 (.121) &  89.892 (.108) &  89.865 (.163) \\
\textbf{meps\_20}   &       89.997 (.145) &    90.0 (.135) &  89.913 (.126) \\
\textbf{meps\_21}   &        89.899 (.14) &  89.879 (.092) &   89.676 (.15) \\
    \bottomrule[1.1pt]
    
    \end{tabular}%
    }
    \end{minipage}%
    \begin{minipage}{.5\linewidth}
      \centering
        \captionsetup{format=hang} \caption*{Area of quantile regions} 
  \scalebox{0.75}{
      \centering
    \begin{tabular}{ccc}

    \toprule[1.1pt]
    
    \textbf{\texttt{ST-DQR}} & \textbf{\texttt{Na\"ive QR}} & \textbf{\texttt{NPDQR}}  \\

    \midrule
333.057 (3.8)    &   407.304 (5.333) &   406.852 (6.955) \\
360.616 (6.789)  &   421.087 (4.062) &   414.327 (5.349) \\
138.527 (11.535) &     214.8 (4.857) &   252.249 (7.093) \\
188.503 (7.585)  &  417.525 (19.792) &  408.791 (11.748) \\
190.762 (3.866)  &  433.061 (12.122) &  421.309 (11.149) \\
196.867 (7.593)  &  431.402 (15.341) &  435.418 (10.448) \\
    \bottomrule[1.1pt]
    
    \end{tabular}%

    }
    \end{minipage} 
    \captionsetup{format=hang} \caption{Real data experiments. Coverage, area, and standard error of the quantile regions constructed by each method.}
    \label{tab:real_data_stderr}

\end{table}

\begin{table}[htbp]

\setstretch{1.5}
  \centering
\scalebox{0.8}{
\centering
\begin{tabular}{c|cccc}
    \toprule[1.1pt]
    \textbf{Data Set} & $\boldsymbol{r=1}$ & $\boldsymbol{r=2}$ & $\boldsymbol{r=3}$ & $\boldsymbol{r=4}$ \\
    \midrule

    \textbf{bio} & 317.316 (3.004) &      321.482 (3.447) &        333.057 (3.8) &      342.314 (3.941) \\
    \textbf{house} & 268.405 (2.949) &      326.061 (6.588) &      360.616 (6.789) &      359.062 (5.866) \\
    \textbf{meps\_20} & 192.848 (4.43)  &      198.335 (4.181) &      190.762 (3.866) &      190.021 (4.225) \\
    
    \bottomrule[1.1pt]
    \end{tabular}%
}
\captionsetup{format=hang} \caption{Area and its standard error of quantile regions constructed on real data sets using \texttt{ST-DQR} with different values of $r$.}
    \label{tab:real_area_std_different_r}

\end{table}

\begin{table}[!htb]
    \centering

    \setstretch{1.4}
    \begin{minipage}{.5\linewidth}
        \captionsetup{format=hang} \caption*{Coverage rate}
        \centering
  \scalebox{0.6}{
      \centering

    \begin{tabular}{c|cccc}

    \toprule[1.1pt]
    
    \textbf{Data Set} & \textbf{\texttt{ST-DQR}} & \textbf{\texttt{Na\"ive QR}} & \textbf{\texttt{NPDQR}} & \textbf{\texttt{VQR}}  \\

    \midrule
    \textbf{bio}       &         90.0 (.093) &  90.002 (.106) &  89.892 (.102) &   89.87 (.104) \\
    \textbf{house}     &       89.923 (.159) &  90.094 (.111) &  90.067 (.149) &  90.119 (.172) \\
    \textbf{blog\_data} &       90.078 (.075) &  90.035 (.072) &  89.823 (.078) &  90.034 (.076) \\
    \textbf{meps\_19}   &       90.179 (.121) &   90.16 (.105) &  89.919 (.093) &  90.149 (.128) \\
    \textbf{meps\_20}   &       90.087 (.158) &  89.925 (.139) &  90.036 (.158) &  90.134 (.179) \\
    \textbf{meps\_21}   &       90.061 (.152) &  89.957 (.149) &  89.965 (.166) &  89.887 (.111) \\
    \bottomrule[1.1pt]
    
    \end{tabular}%
    }
    \end{minipage}%
    \begin{minipage}{.5\linewidth}
      \centering
        \captionsetup{format=hang} \caption*{Area of quantile regions}
  \scalebox{0.6}{
      \centering
    \begin{tabular}{cccc}

    \toprule[1.1pt]
    
    \textbf{\texttt{ST-DQR}} & \textbf{\texttt{Na\"ive QR}} & \textbf{\texttt{NPDQR}}& \textbf{\texttt{VQR}}  \\

    \midrule
    333.057 (3.8)   &   407.304 (5.333) &   406.852 (6.955) &   530.053 (5.189) \\
    384.985 (6.94)  &   577.808 (5.167) &   548.684 (6.955) &   523.282 (4.645) \\
    208.05 (3.63)   &   362.808 (6.332) &   405.793 (8.749) &   344.921 (4.684) \\
    249.926 (9.727) &  704.739 (27.762) &  381.729 (11.357) &  269.517 (11.503) \\
    240.692 (5.099) &  664.476 (19.175) &   358.091 (7.766) &   264.666 (6.205) \\
    249.156 (8.992) &   709.56 (24.683) &  383.216 (13.327) &   264.41 (10.889) \\
    \bottomrule[1.1pt]
    
    \end{tabular}%

    }
    \end{minipage} 
        \captionsetup{format=hang} \caption{Reduced real data experiments. Coverage, area, and standard error of the quantile regions constructed by each method. All feature vectors were reduced to dimension 10 using PCA.}
        \label{tab:real_reduced_data_stderr}

\end{table}

\section{Technical Details}\label{sec:tech_details}
In this section, we provide technical details regarding techniques used in this work.

\subsection{The Initial Distance Threshold}\label{sec:gamma_init}
In this section, we formally define $\gamma_\textrm{init}$. Let $R_\mathcal{Y}= \{a_1, a_2,...a_m \}$, and denote the distance of $a_i$ from its neighbor by $D_i = \min_{a\in R_\mathcal{Y}(x)}{d(a_i,a)}$. The initial distance threshold $\gamma_\textrm{init}$ is defined as the 90-th smallest quantile of $\{D_i: i=1,...,m\}$.

\subsection{Grid Size}\label{sec:grid_size}
Recall that our method maps a discretized quantile region $R_\mathcal{Z}(x)$ to space $\mathcal{Y}$. We also discretize the space $\mathcal{Y}$ to compute the area of a quantile region. In this section, we describe how we make these discretizations in practice. For both tasks, we define a grid in space $\mathcal{Y}$ or space $\mathcal{Z}$, and set the grid's boundaries in each dimension to be the 99\% and 1\% empirical quantiles of the training set's response variables, respectively. We also increase the upper and lower boundaries by 1 for a grid used to discretize a quantile region, i.e., $R_\mathcal{Y}(x)$ or $R_\mathcal{Z}(x)$, and by 0.2 for a grid used to measure the area of a region in $\mathcal{Y}$. Table \ref{tab:cells_in_grid} shows the number of points in the grid, depending on its purpose.
In all cases, the grid is equally spaced, having the same number of points in each dimension. 

\begin{table}[!htb]
    \centering

    \setstretch{1.4}
  \scalebox{0.9}{

    \begin{tabular}{ccc}

    \toprule[1.1pt]
    
    \textbf{Space dimension} & \textbf{Area calculation} & $\boldsymbol{R_\mathcal{Y}(x)}$ \textbf{or} $\boldsymbol{R_\mathcal{Z}(x)}$  \\

    \midrule
    2 & $ 3025 $  & $ 1e^4 $ \\ 
    3 & $ 103823 $ & $ 42875 $ \\
    4 & $ 234256 $ & $ 104976 $ \\
    \bottomrule[1.1pt]
    
    \end{tabular}%
    }
        \captionsetup{format=hang} \caption{Number of cells in a grid used for area calculation, or for discretizing the quantile regions $R_\mathcal{Y}(x)$ and $R_\mathcal{Z}(x)$.}
            \label{tab:cells_in_grid}

\end{table}
\subsection{Dealing With High-Dimensional Responses}\label{sec:high_dim_y}

In many cases, it is required to quantify the uncertainty of a high-dimensional response that has more than three dimensions. In this section, we present several ways to interpret the quantile regions produced by our \texttt{ST-DQR} in the high-dimensional setting.

\paragraph{Dimension-wise prediction interval.} The simplest way to interpret a high-dimensional quantile region is to construct an interval for each dimension. For instance, given a quantile region $R(X)\subseteq\mathcal{Y}$, the prediction interval for the $j$-th variable of the response can be defined as: $C^j(X)=[\min (R(X))^j, \max (R(X))^j]$. The Cartesian product of these intervals forms the complement of $R(X)$ to a rectangle. Furthermore, this rectangle is more conservative as it covers more area than the region $R(X)$.
\paragraph{Querying a point coverage.} The quantile region can be used to query whether certain points fall inside it or not. By doing so, the user can check if a scenario is likely
and take a decision according to the response.
\paragraph{Visualizing 3d slices.} Assuming there are $d$ responses, one can visualize the quantile region of each triplet $(i_1, i_2, i_3)\in\{1,...,d\}^3$ while the other coordinates are held fixed at some value. This way, the user can learn the relation between each response triplet.

\subsection{Conditional Variational Auto Encoder}\label{sec:cvae_and_kl_lambda}

A \emph{conditional variational auto encoder} (CVAE), proposed by \cite{cvae}, is a model that aims to capture the implicit conditional distribution of $Y \mid X$ by reconstructing the input data from its encoded representation. Such a model is composed of an encoder $\mathcal{E}$ that encodes $Y \mid X$ into a latent variable $Z_x$, and a decoder $\mathcal{D}$ that reconstructs $(Z_x; x)$ back to the original variable $Y \mid X$. 
Formally, CVAE is a pair $(\mathcal{E}(\cdot;x), \mathcal{D}(\cdot;x))$ that ideally satisfies:
\begin{equation}
\begin{split}
& Z_x = \mathcal{E}(Y;X=x) \sim \mathcal{N}(0,1)^r,\\
& \mathcal{D}(\mathcal{E}(Y;X=x);X=x) \stackrel{d}{=} Y \mid X=x, 
\end{split}
\end{equation}
where $\mathcal{N}(0,1)$ is a standard normal distribution. In practice, we estimate $(E, \mathcal{D})$ with a neural network.
The loss used for training the CVAE model includes two components: a reconstruction loss (measuring the mean squared error) that penalizes for inaccurate reconstruction, and an additional term that encourages the encoded vector to be normally distributed. In this work, we used a loss that measures the KL-divergence \citep{kl2, kl1}, between the encoded representations and the multivariate normal distribution, multiplied by a coefficient $\lambda$. Below, we examine the effects of the KL-divergence penalty coefficient and the dimension of the encoded space on the performance of the CVAE model. To maximize performance, we suggest choosing these hyper-parameters via cross-validation.

\subsubsection{KL-divergence Loss Coefficient}

In this section, we investigate the effects of the KL-divergence loss penalty coefficient, denoted by $\lambda$, on the performance of the resulting CVAE model. Figure \ref{fig:cvae_lambda} presents the encoded and reconstructed samples, computed by CVAE models with different values of $\lambda$ on the non-linear synthetic data set, with $p=1$; see more details about the synthetic data in Section \ref{sec:syn_datasets_details}. We report the results only for the conditional distribution of $Y \mid X=2$. The figure shows that as $\lambda$ increases, the distribution of $Z_x$ is closer to standard normal, but the reconstruction becomes worse. On the other hand, as $\lambda$ decreases, the CVAE recovers better the input data, but the distribution of $Z_x$ is far from being standard normal. Relying on that figure, we set $\lambda$ to be equal to $0.01$ for all of the experiments presented in this paper.
\renewcommand\imagewidth{0.9}
\begin{figure}[htbp]
  \centering

    \includegraphics[width=\imagewidth\linewidth]{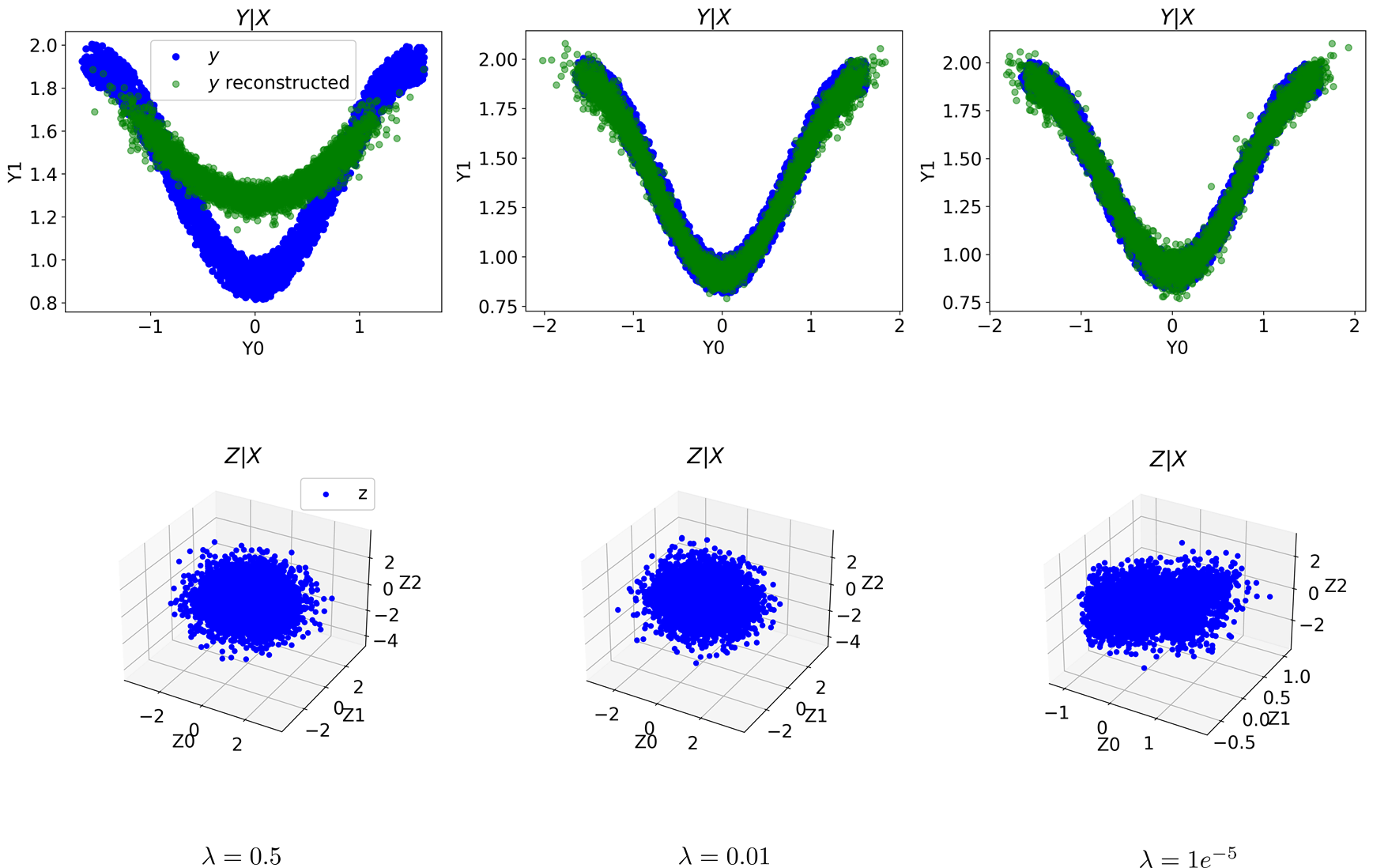}

    \captionsetup{format=hang} \caption{CVAE results for different $\lambda$s.}
    
\label{fig:cvae_lambda}%
\end{figure}%

\subsubsection{The Dimension of the Latent Space}\label{sec:vae_r}

In this section, we examine the second hyper-parameter of the CVAE, which is the dimension of the latent vector $Z$, denoted by $r$. 
Specifically, we study the effects of $r$ on the CVAE performance as well as the efficiency of the constructed quantile regions. The reconstruction errors for different values of $r$ are summarized in Table \ref{tab:real_mse_different_r}. The table shows that the CVAE reconstructs better the data when using a larger latent space dimension. This trend is also visualized in Figure \ref{fig:mse_vs_r}. This is anticipated, as the expressiveness of the CVAE depends on $r$, in the sense that as $r$ increases, the CVAE is more flexible. In practice, we recommend setting this hyper-parameter to be the dimension of the response, or choosing it using cross-validation.
\begin{table}[htbp]
\setstretch{1.5}
  \centering
\scalebox{0.8}{
\centering
\begin{tabular}{c|cccc}
    \toprule[1.1pt]
    \textbf{Data Set} & $\boldsymbol{r=1}$ & $\boldsymbol{r=2}$ & $\boldsymbol{r=3}$ & $\boldsymbol{r=4}$  \\
    \midrule
    
\textbf{bio}     &         0.034 (6e-3) &         0.021 (7e-3) &         0.021 (7e-3) &        0.021 (.001) \\
\textbf{house}   &         0.036 (6e-3) &        0.031 (.002) &        0.026 (.002) &        0.025 (.002) \\
\textbf{meps\_20} &        0.104 (.003) &         0.025 (6e-3) &         0.024 (7e-3) &         0.025 (4e-3) \\

    \bottomrule[1.1pt]
    \end{tabular}%
}
\captionsetup{format=hang} \caption{CVAE reconstruction loss and its standard error on real data sets achieved with different latent space dimensions.}
    \label{tab:real_mse_different_r}
\end{table}
\renewcommand\imagewidth{0.5}
\begin{figure}[htbp]
\setstretch{1.1}
  \centering

    {\centering{\includegraphics[width=\imagewidth\linewidth]{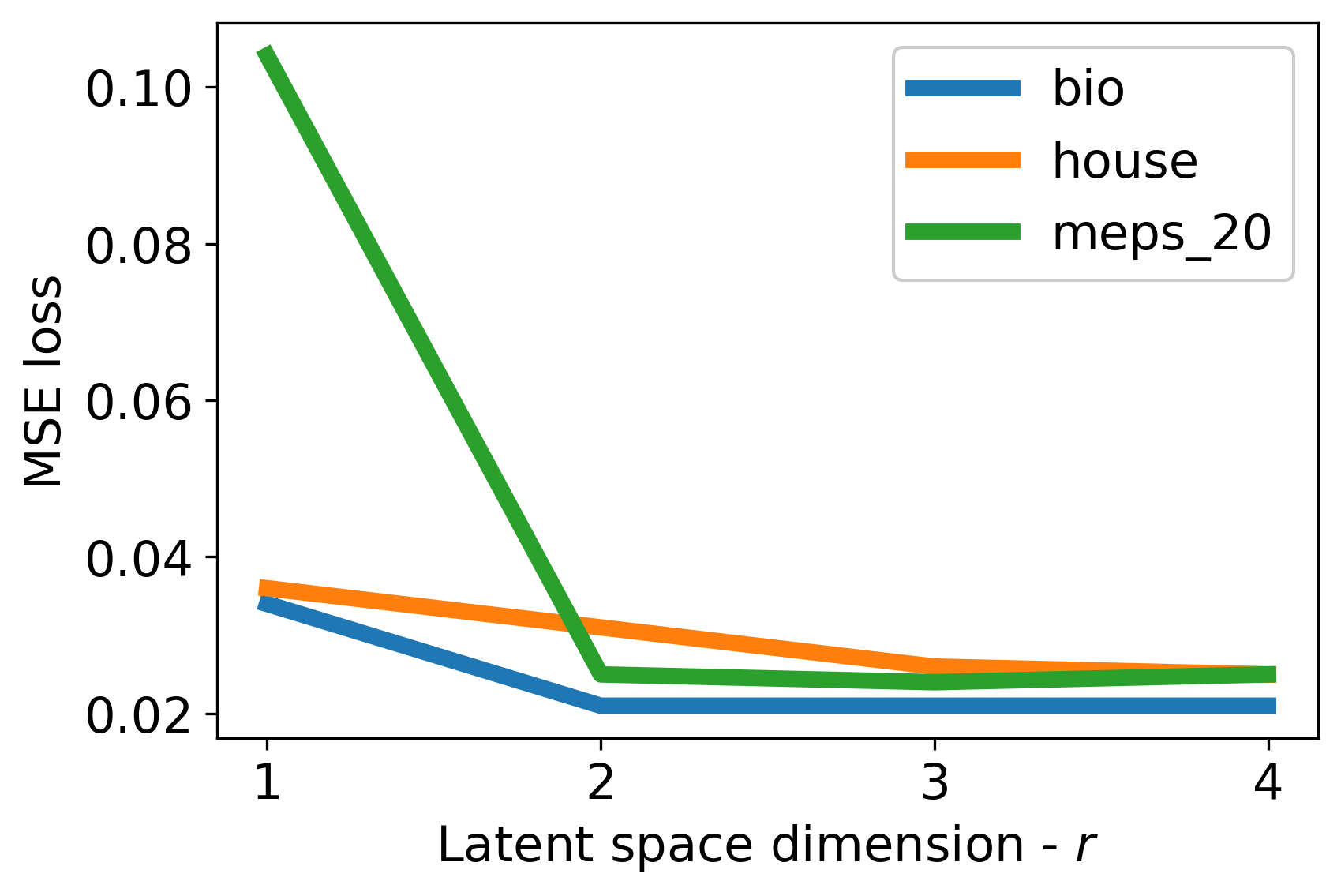}}}

    \captionsetup{format=hang} \caption{CVAE reconstruction (MSE) loss as a function of the latent space dimension for bio, house, and meps\_20 data sets.}
    
\label{fig:mse_vs_r}%
\end{figure}%

We now turn to test how $r$ affects the area of a quantile region constructed to cover $1-\alpha=90\%$ of the data. The levels of the estimated directional quantiles used for constructing the quantile region are summarized in Table \ref{tab:real_coverage_rates_info}. Table \ref{tab:real_area_different_r} displays the area of quantile regions constructed by our method with different values of $r$, scaled by the area achieved by the best method. Following that table, the best latent space dimension for bio and house data sets is $r=1$, whereas for meps\_20 data set it is $r=4$. This phenomenon can be explained in the following way. The coverage of a quantile region constructed in space $\mathcal{Z}$ depends on the dimension $r$; see Figure \ref{fig:dqr_cov}. As the dimension increases, the coverage rate of the \texttt{DQR} region decreases, and this coverage is preserved when transforming the region to space $\mathcal{Y}$; see Proposition~\ref{thm:cov_preservation}. On the other hand, as $r$ increases, the CVAE reconstructs better the data, which results in more accurate regions. Therefore, there is a trade-off between the reconstruction loss and the coverage attained in space $\mathcal{Z}$, where both are determined by $r$. Since the CVAE’s reconstruction on bio and house data sets does not significantly improve for larger values of $r$, the limitation of the \texttt{DQR} has a stronger impact on the produced region, and as a result, \texttt{ST-DQR} with $r$ set to $1$ constructs smaller quantile regions. However, meps\_20 data set requires a large value of $r$ to reconstruct well the data, as presented in Figure \ref{fig:mse_vs_r}, and therefore using $r=4$ results in the best quantile regions. To conclude, the best value of $r$ varies between the data sets, so we recommend finding it via cross-validation to maximize the performance of the method.

\begin{table}[htbp]

\setstretch{1.5}
  \centering
\scalebox{0.8}{
\centering
\begin{tabular}{c|cccc}
    \toprule[1.1pt]
    \textbf{Data Set} & $\boldsymbol{r=1}$ & $\boldsymbol{r=2}$ & $\boldsymbol{r=3}$ & $\boldsymbol{r=4}$ \\
    \midrule
    
    \textbf{bio}     & 1           &                1.013 &                 1.05 &                1.079 \\
    \textbf{house}   &1           &                1.215 &                1.344 &                1.338 \\
    \textbf{meps\_20} &  1.015       &                1.044 &                1.004 &                    1 \\
    
    \bottomrule[1.1pt]
    \end{tabular}%
}
\captionsetup{format=hang} \caption{Area of quantile regions constructed on real data sets using our method with different values of $r$, scaled by the area achieved by the best method. The standard errors are reported in Table \ref{tab:real_area_std_different_r}}
    \label{tab:real_area_different_r}

\end{table}

\subsection{Coverage Rate of a \texttt{DQR} Quantile Region}\label{sec:DQR_cov}
In this section, we compute the coverage rate of a quantile region constructed with \texttt{NPDQR} when the nominal coverage level is set to $1-\alpha \geq 0.5$. In this setting, the training samples given to \texttt{NPDQR} are $\{(X_i, Z_i) \}_{i=1}^n$, so the model is fitted in space $\mathcal{Z}$. We assume that the CVAE is ideal, i.e., $Z_i \sim \mathcal{N}(0,1)^r$, where $r \in \mathbb{N}^+$. 

For any direction $u \in \mathbb{S}^{r-1}$, the half-space defined by \texttt{NPDQR} satisfies:
\begin{equation*}
\mathbb{P}(Z^r \in H^+_u(x)) = 1 - \alpha.
\end{equation*}
Let $Z^r \sim \mathcal{N}(0,1)^r$, and $Z \sim \mathcal{N}(0,1)$. Note that
\begin{equation*}
\begin{split}
& 1-\alpha = \mathbb{P}(Z^r \in H^+_u(x)) = \mathbb{P}(u^T Z^r \geq C),
\end{split}
\end{equation*}
for $C = \Phi^{-1}(\alpha) \leq 0$.
Denote by $\chi^2_r$ a chi-squared random variable with $r$ degrees of freedom. The probability to lie inside the quantile region is:
\begin{equation*}
\begin{split}
\mathbb{P}(Z^r \in R(x)) &= \mathbb{P}(\forall u \in \mathbb{S}^{r-1}: Z^r \in H^+_u(x))\\
&=  \mathbb{P}(\forall u \in \mathbb{S}^{r-1}: u^T Z^r \geq C) \\
&=  \mathbb{P}\left(\max_{u \in \mathbb{S}^{r-1}} u^T Z^r \leq -C  \right) \\
&= \mathbb{P}\left(\frac{{Z^r}^T}{\norm{Z^r}_2} Z^r \leq -C  \right) \\
& = \mathbb{P}(\norm{Z^r}_2 \leq -C) \\ 
&= \mathbb{P}(\chi^2_r \leq C^2) \\ 
&=  \mathbb{P}(\chi^2_r \leq \Phi^{-1}(\alpha)^2 ) \\
& = F_{\chi^2_r}(\Phi^{-1}(\alpha)^2),
\end{split}
\end{equation*}
where $F_{\chi^2_r}$ is the CDF of chi-squared distribution with $r$ degrees of freedom. In Section \ref{sec:need_conf} we use this computation to report the exact coverage rate of \texttt{NPDQR} as a function of the nominal coverage level, and the dimension of the response $r$.

\subsection{Calibration - Shrinking The Quantile Region}\label{sec:cal_shrink_figs}

In this section, we demonstrate the necessity of handling the two calibration cases (Case~1 and Case~2) separately. Recall that in Case~1 $c_\textrm{init} \leq 1-\alpha$, therefore we grow the base region \eqref{eq:base_qr}, and in the complementary case, Case~2, we shrink it. If these two cases would not be treated separately, one would shrink a large region by following the technique presented in Case~1, which is mainly suited for growing small regions; see Section \ref{sec:intuition}. 
Figure \ref{fig:cal_too_small_gamma} illustrates the outcome of shrinking a large region using the procedure described in Case~1. The figure shows that the calibrated region is not continuous, and its boundaries are the same as those of the base region. That is, shrinking the base region according to Case~1 procedure does not yield the desired region. However, shrinking the base region according to the method described in Case~2 (see Section \ref{sec:intuition}) does result in a smaller, continuous quantile region, as displayed in Figure \ref{fig:cal_S_gamma}. Therefore, we suggest treating the two cases separately.
\renewcommand\imagewidth{0.85}
\begin{figure}[htbp]
\setstretch{1.1}
  \centering

    {\centering{\includegraphics[width=\imagewidth\linewidth]{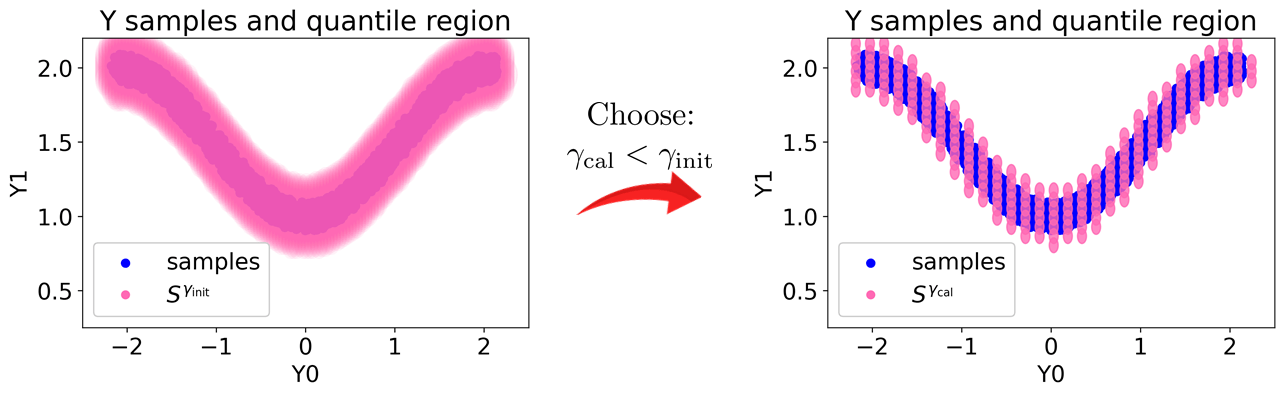}}}

    \captionsetup{format=hang} \caption{Visualization of a large quantile region (pink) calibrated according to Case~1. The resulted calibrated region is not continuous and is not a smaller version of the base region.}
    
\label{fig:cal_too_small_gamma}%
\end{figure}%
\begin{figure}[htbp]
\setstretch{1.1}
  \centering

    {\centering{\includegraphics[width=\imagewidth\linewidth]{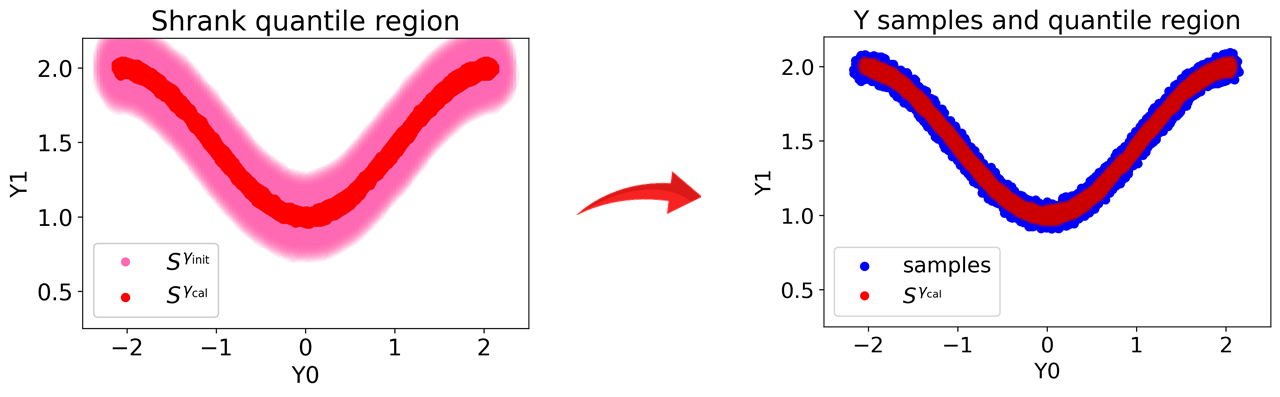}}}

    \captionsetup{format=hang} \caption{Visualization of the calibration of the large base region (pink) from Figure \ref{fig:cal_too_small_gamma}, where the calibration is done according to Case~2. The calibrated region (red) is both continuous, and smaller than the base one.}
    
\label{fig:cal_S_gamma}%
\end{figure}%

\subsection{Vector Quantile Regression}\label{sec:vqr}
Given a test point $X_{n+1}=x$, we estimate the vector quantile function $Q(\cdot ; x)$ as described by \cite{carlier2016vector}, using Gurobi solver \citep{gurobi}.\footnote{We used the implementation from https://github.com/alfredgalichon/VQR} This process results in a set of $m$ quantiles $\{Q(U_i; x)\}_{i=1}^m$, for $U_i \sim \textrm{Uniform}[0,1]^d$, where $\textrm{Uniform(a,b)}$ is a uniform distribution on the interval $(a,b)$. To construct the quantile region, we follow the procedure developed by \cite{chernozhukov2017monge}. We sample $m$ directions, denoted by $\{S_j\}_{j=1}^m$, from a spherical uniform distribution of dimension $d$. Next, we match every element in $\{S_j\}_{j=1}^m$ to an element in $\{U_i\}_{i=1}^m$ by solving the following assignment problem:
\begin{align*}
& \textrm{minimize} \sum_{i\in [m]} {\norm{U_i - f(U_i)}_2}, \\
& \textrm{subject to } \forall i\in [m]: f(U_i) \in \{S_j\}_{j=1}^m \wedge f \textrm{ is a permutation}.
\end{align*}
The discrete set of points inside the quantile region is given by:
\begin{equation*}
R_\mathcal{Y}(x) = \{Q(U_i; x) : i\in[m] \wedge \norm{f(U_i)}_2 \leq 1-\alpha\}.
\end{equation*}
\begin{table}[htbp]
\setstretch{1.5}
  \centering
\scalebox{0.8}{
\centering
    \begin{tabular}{ccc|cc}

    \toprule[1.1pt]
    
    \textbf{Setting} &  \textbf{$d$} & \textbf{$p$} & \textbf{Run time (hours)} &  \textbf{Memory consumption (GB)} \\

    \midrule
    \textbf{linear}              &   2 & 1 &  < 1 &    < 13\\
    \textbf{linear}              &   2 & 10 &  < 1 &    < 13\\
    \textbf{linear}              &   2 & 50 &  > 1 &    < 13\\

    \textbf{non-linear}          &   2 & 1 &  < 1 &    < 13\\

    \textbf{non-linear}          &   3 & 1 &  < 1 &    < 13\\
    \textbf{non-linear}          &   3 & 10 &  - &    > 13\\

    \textbf{non-linear}          &   4 & 1 &  - &    > 13\\
    \textbf{non-linear}          &   4 & 10 &  - &    > 13\\

    \bottomrule[1.1pt]
    
    \end{tabular}%
}
\captionsetup{format=hang} \caption{VQR run time and memory usage on the synthetic data.}
    \label{tab:vqr_time_memory}

\end{table}

\bibliography{bibliography}

\end{document}